\newcommand{\comprimi}{\medmuskip=0mu
\thinmuskip=0mu
\thickmuskip=0mu}
\DeclareMathOperator{\e}{e}
\def\sign{\text{sign}}
\def\erf{\text{erf}}
\def\diag{\text{diag}}
\renewcommand{\vec}[1]{\bm{#1}}
\newcommand{\R}{{\mathds{{R}}}}
\newcommand{\by}{{\mathbf{y}}}
\newcommand{\be}{{\mathbf{e}}}
\newcommand{\bx}{{\vec{{x}}}}
\newcommand{\bz}{{\vec{{z}}}}
\newcommand{\bw}{{\vec{{w}}}}
\newcommand{\bff}{{\vec{{f}}}}
\newcommand{\bmu}{{\vec{{\mu}}}}
\newcommand{\btheta}{{\boldsymbol{{\theta}}}}
\newcommand{\bsTheta}{{\boldsymbol{\mathsf{\Theta}}}}
\newcommand{\bTheta}{{\boldsymbol{{\Theta}}}}
\newcommand{\bxi}{{\boldsymbol{{\xi}}}}
\newcommand{\bXi}{{\boldsymbol{{\Xi}}}}
\newcommand{\bzeta}{{\boldsymbol{{\zeta}}}}
\newcommand{\bomega}{{\boldsymbol{{\omega}}}}
\newcommand{\bI}{{\vec{{I}}}}
\newcommand{\bUno}{{\vec{{1}}}}
\newcommand{\bM}{{\vec{{m}}}}
\newcommand{\bV}{{\vec{{V}}}}
\newcommand{\bv}{{\vec{{v}}}}
\newcommand{\bQ}{{\vec{{Q}}}}
\newcommand{\bR}{{\vec{{R}}}}
\newcommand{\bsA}{{\vec{{\mathsf A}}}}
\newcommand{\bsB}{{\vec{{\mathsf B}}}}
\newcommand{\bsC}{{\vec{{\mathsf C}}}}
\newcommand{\bsM}{{\vec{{\mathsf m}}}}
\newcommand{\bsV}{{\vec{{\mathsf V}}}}
\newcommand{\bsv}{{\vec{{\mathsf v}}}}
\newcommand{\bsQ}{{\vec{{\mathsf Q}}}}
\newcommand{\bA}{{\vec{{A}}}}
\newcommand{\bG}{{\vec{{G}}}}
\newcommand{\bfB}{{\vec{{B}}}}
\newcommand{\bU}{{\vec{{U}}}}
\newcommand{\bS}{{\vec{{S}}}}
\newcommand{\bu}{{\vec{{u}}}}
\newcommand{\bF}{{\vec{{F}}}}
\newcommand{\bX}{{\vec{{X}}}}
\newcommand{\bY}{{\vec{{Y}}}}
\newcommand{\bZ}{{\vec{{Z}}}}
\newcommand{\bhM}{{\vec{{\hat m}}}}
\newcommand{\bhV}{{\vec{{\hat V}}}}
\newcommand{\bhQ}{{\vec{{\hat Q}}}}
\newcommand{\bhR}{{\vec{{\hat R}}}}
\newcommand{\bW}{{\vec{{W}}}}
\newcommand{\bhPhi}{{\boldsymbol{\hat{\Phi}}}}
\newcommand{\bhS}{{\hat{\mathbf{S}}}}
\newcommand{\bOmega}{{\boldsymbol{{\Omega}}}}
\newcommand{\bSigma}{{\boldsymbol{{\Sigma}}}}
\newcommand{\bh}{{\vec{{h}}}}
\newcommand{\bsOmega}{{\vec{\mathsf \Omega}}}
\newcommand{\bphi}{{\boldsymbol{\phi}}}
\newcommand{\bT}{{\boldsymbol{T}}}
\DeclareMathOperator{\Prox}{prox}
\newsavebox{\@brx}
\newcommand{\llangle}[1][]{\savebox{\@brx}{\(\m@th{#1\langle}\)}%
  \mathopen{\copy\@brx\kern-0.5\wd\@brx\usebox{\@brx}}}
\newcommand{\rrangle}[1][]{\savebox{\@brx}{\(\m@th{#1\rangle}\)}%
  \mathclose{\copy\@brx\kern-0.5\wd\@brx\usebox{\@brx}}}
\begin{document}
\title{Fluctuations, Bias, Variance \& Ensemble of Learners:\\
Exact Asymptotics for Convex Losses in High-Dimension}

\author{\name Bruno Loureiro \email bruno.loureiro@epfl.ch\\
       \addr \'Ecole Polytechnique F\'ed\'erale de Lausanne (EPFL)\\
       Information, Learning and Physics lab.\\
       CH-1015 Lausanne, Switzerland
       \AND
       \name C\'edric Gerbelot \email cedric.gerbelot@ens.fr\\
       \addr Laboratoire de Physique de l'ENS, Universit\'e PSL, CNRS, Sorbonne Universit\'e\\
       24 Rue Lhomond, 75005 Paris, France
       \AND
       \name Maria Refinetti \email mariaref@gmail.com\\
       \addr Laboratoire de Physique de l'ENS, Universit\'e PSL, CNRS, Sorbonne Universit\'e\\
       24 Rue Lhomond, 75005 Paris, France
       \AND
       \name Gabriele Sicuro \email gabriele.sicuro@kcl.ac.uk \\
       \addr Department of Mathematics\\
       King's College London\\
       Strand WC2R 2LS, London, United Kingdom
       \AND
       \name Florent Krzakala \email florent.krzakala@epfl.ch\\
       \addr \'Ecole Polytechnique F\'ed\'erale de Lausanne (EPFL)\\
       Information, Learning and Physics lab.\\
       CH-1015 Lausanne, Switzerland}

\maketitle
\begin{abstract}
From the sampling of data to the initialisation of parameters, randomness is ubiquitous in modern Machine Learning practice. Understanding the statistical fluctuations engendered by the different sources of randomness in prediction is therefore key to understanding robust generalisation. In this manuscript we develop a quantitative and rigorous theory for the study of fluctuations in an ensemble of generalised linear models trained on different, but correlated, features in high-dimensions.  In particular, we provide a complete description of the asymptotic joint distribution of the empirical risk minimiser for generic convex loss and regularisation in the high-dimensional limit. Our result encompasses a rich set of classification and regression tasks, such as the lazy regime of overparametrised neural networks, or equivalently the random features approximation of kernels. While allowing to study directly the mitigating effect of ensembling (or bagging) on the bias-variance decomposition of the test error, our analysis also helps disentangle the contribution of statistical fluctuations, and the singular role played by the interpolation threshold
that are at the roots of the ``double-descent'' phenomenon. 
\end{abstract}

\section{Introduction}
\label{sec:intro}

Randomness is ubiquitous in Machine Learning. It is present in the data (e.g., noise in acquisition and annotation), in commonly used statistical models (e.g., random features \citep{rahimi2007random}),  or in the algorithms used to train them (e.g., in the choice of initialisation of weights of neural networks~\citep{narkhede2021review}, or when sampling a mini-batch in Stochastic Gradient Descent \citep{bottou2012stochastic}). Strikingly, fluctuations associated to different sources of randomness can have a major impact in the generalisation performance of a model. For instance, this is the case in least-squares regression with random features, where it has been shown \citep{geiger2020scaling,dascoli2020,jacot2020implicit} that the variance associated with the random projections matrix is responsible for poor generalisation near the interpolation peak \citep{advani2017highdimensional,Spigler_2019,Belkin10627}. As a consequence, this \emph{double-descent} behaviour can be mitigated by averaging over a large \emph{ensemble} of learners, effectively suppressing this variance. Indeed, considering an ensemble (sometimes also refereed to as a committee \citep{drucker1994boosting}) of independent learners provide a natural framework to study the contribution of the variance of prediction in the estimation accuracy. 
In this manuscript we leverage this idea to provide an exact asymptotic characterisation of the statistics of fluctuations in empirical risk minimisation with generic convex losses and penalties in  high-dimensional models. We focus on the case of synthetic datasets, and we apply our results to random feature learning in particular.

\begin{figure}
\centering
\def\layersep{1.5cm}
\begin{tikzpicture}[shorten >=1pt,->,draw=black!50, node distance=\layersep]
  \draw[black,rounded corners=10,dotted,fill=gray!10]
     (0.75*\layersep,0.2) rectangle (2.25*\layersep,1.8);
  \draw[black,rounded corners=10,dotted,fill=gray!10]
     (0.75*\layersep,-0.2) rectangle (2.25*\layersep,-1.8);
    \tikzstyle{every pin edge}=[<-,shorten <=1pt]
    \tikzstyle{neuron}=[circle,draw,fill=black!25,minimum size=10pt,inner sep=0pt]
    \tikzstyle{input neuron}=[neuron, fill=gray!20];
    \tikzstyle{output neuron}=[neuron, fill=white];
    \tikzstyle{hidden neuron}=[neuron, fill=gray!70];
    \tikzstyle{annot} = [text width=4em, text centered]

    \foreach \name / \y in {1,...,4}{
    \node[input neuron] (I-\name) at (-1,-1.25+0.5*\y) {};
    \node[draw=none,inner sep=0] (I1-\name) at (0.1*\layersep,-0.25+0.5*\y) {};
    \node[draw=none,inner sep=0] (I2-\name) at (0.1*\layersep,0.25-0.5*\y) {};
    }

    \foreach \name / \y in {1,...,3}{
        \node[hidden neuron] (H-\name) at (\layersep,0.5*\y cm) {};
        \node[hidden neuron] (H0-\name) at (0.5*\layersep,0.5*\y cm) {};}
    \foreach \name / \y in {4/1,5/2,6/3}{
        \node[hidden neuron] (H-\name) at (\layersep,-0.5*\y cm) {};
        \node[hidden neuron] (H0-\name) at (0.5*\layersep,-0.5*\y cm) {};}

    \node[output neuron, right of=H-2] (O1) {};
    \node[output neuron, right of=H-5] (O2) {};
    \node[output neuron,pin={[pin edge={->}]right:{$\hat y$}}] (O) at (2.5*\layersep,0) {};
    \node[output neuron,pin={[pin edge={->}]left:{$y$}}] (OT) at (-\layersep,0) {};

    \foreach \source in {1,...,6}
        \path (H0-\source) edge (H-\source);
    \foreach \source in {1,...,4}
        \path (I-\source) [out=0, in=180]edge (I1-\source);
    \foreach \source/\dest in {1/4,2/3,3/2,4/1}
        \path (I-\source) [out=0, in=180]edge (I2-\dest);
    \foreach \source in {1,...,3}
        \path (H-\source) edge (O1);
    \foreach \source in {4,...,6}
        \path (H-\source) edge (O2);
    \foreach \source in {1,2}
        \path (O\source) edge (O);
    \foreach \source in {1,...,4}
        \path (I-\source) edge (OT);
    \node[annot,above of=I-4,node distance=10] {$\bx$};
    \draw[gray,rounded corners=1,fill=gray!30]
     (0.1*\layersep,0.05) rectangle (0.65*\layersep,2.1);
    \draw[gray,rounded corners=1,fill=gray!30]
     (0.1*\layersep,-0.05) rectangle (0.65*\layersep,-2.1);
     \node at (0.55,1) {$\bu_1$};
     \node at (0.55,-1) {$\bu_2$};
\end{tikzpicture}    
\caption{Pictorial representation of the model considered in the paper for $K=2$. Two learners with the same architecture (in gray) receive a correlated input generated from the same vector $\bx\sim\mathcal N(\mathbf 0_d,\bI_d)$. The output $\hat y$ is an average of their outputs. While the study of an ensemble of learners is already interesting {\it per se}, it is also pivotal to study the fluctuation between learners, and the error steaming from the difference in the weights in random features and lazy training.}
\label{fig:setting}
\vspace{-0.4cm}
\end{figure}
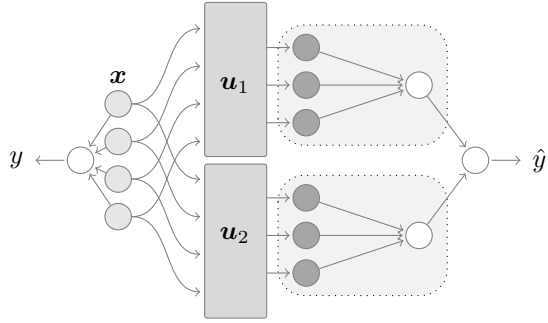

\subsection{Setting} 
Let $\left(\bx^{\mu}, y^{\mu}\right)\in\R^{d}\times\mathcal{Y}$, $\mu\in [n]\coloneqq\{1,\dots,n\}$, denote a labelled data set composed of $n$ independent samples from a joint density $p(\bx,y)$ (e.g., $\mathcal{Y}=\{-1,1\}$ for a binary classification problem). In this manuscript we are interested in studying an ensemble of $K$ parametric predictors, each of them depending on a vector of parameters $\bw_k\in\R^p$, $k\in[K]$, and independently trained on the dataset $\{(\bx^\mu,y^\mu)\}_{\mu\in[n]}$. Note that even if the vectors of parameters $\{\bw_{k}\}_{k\in[K]}$ are trained independently, they correlate through the training data. Statistical fluctuations in the learnt parameters can then arise for different reasons. For instance, a common practice is to initialise the parameters randomly during optimisation, which will induce statistical variability between the different predictors. Alternatively, each predictor could be trained on a subsample of the data, as it is commonly done in bagging \citep{Breiman1996}. The statistical model can also be inherently stochastic, e.g., the random features approximation for kernel methods \citep{rahimi2007random}. Finally, the predictors could also be jointly trained, e.g., coupling them through the loss or penalty as it is done in boosting \citep{schapire1990strength}.

Our goal in this work is to provide a sharp characterisation of the statistical fluctuations of the ensemble of parameters $\{\bw_{k}\}_{k\in[K]}$ in a particular, mathematically tractable, class of predictors: \emph{generalised linear models},
\begin{equation}
\label{eq:intro:estimator}
    \hat{y}(\bx) = \hat{f}\left(\frac{\hat\bw_1^\top\bu_1(\bx)}{\sqrt p},\dots,\frac{\hat\bw_K^\top\bu_K(\bx)}{\sqrt p}\right)
\end{equation}
where $\bu_{k}:\mathbb{R}^{d}\to\mathbb{R}^{p}$, $k\in [K]$ is an ensemble of possibly correlated features and $\hat{f}\colon \R^K\to \mathcal{Y}$ is an activation function. For most of this work, we discuss the case in which the predictors are \emph{independently trained} through regularised empirical risk minimisation:
\begin{equation}
\label{eq:erm}
    \hat{\bw}_{k} = \underset{\bw\in\R^{p}}{\arg\min}\left[\frac{1}{n}\sum_{\mu=1}^n\ell\left(y^\mu,\frac{\bw^\top\bu_k(\bx^\mu)}{\sqrt p}\right)+\frac{\lambda}{2} \|\bw\|^2_{2} \right]
\end{equation}
with a convex loss function $\ell\colon\mathcal{Y}\times \R\to\R$ (e.g., the logistic loss) and ridge penalty whose strength is given by $\lambda\in\R^+$. However, our analysis also includes the case in which the learners are jointly trained with a generic convex penalty. This case will be further discussed in Sec.~\ref{sec:general}.
In what follows we will also concentrate in the random features case where $\bu_{k}(\bx) = \phi\left(\bF_{k}\bx\right)$ with $\phi:\R\to\R$ an activation function acting component-wise and $\bF_{k}\in\R^{p\times d}$ a family of independently sampled random matrices. Besides being an efficient approximation for kernels \citep{rahimi2007random}, random features are often studied as a simple model for neural networks in the lazy  and neural tangent kernel regimes of deep neural networks \citep{NEURIPS2019_ae614c55,NEURIPS2018_5a4be1fa}, in which case the matrices $\bF_{k}$ correspond to different random initialisation of hidden-layer weights. Moreover, the random features model displays some of the exotic behaviours of high-dimensional overparametrised models, such as double-descent \citep{mei2020generalization, pmlr-v119-gerace20a} and benign overfitting \citep{Bartlett30063}, therefore providing an ideal playground to study the interplay between fluctuations and overparametrisation. A broader class of features maps is also discussed in Sec.~\ref{sec:general}.

To provide an exact characterisation of the statistics of the estimators in eq.~\eqref{eq:erm}, we shall assume data is generated from a target
\begin{equation}
y=f_{0}\left(\frac{\btheta^{\top}\bx}{\sqrt{d}}\right), \qquad\btheta\sim\mathcal{N}(\mathbf 0_d,\rho\bI_{d}), \qquad \rho\in\R^+_0,
\end{equation}
with $f_{0}:\R\to\mathcal{Y}$ and $\bI_d$ $d$-dimensional identity matrix. The dataset is then constructed generating {\it i.i.d.} $n$ vectors $\bx^\mu\sim\mathcal N(\mathbf 0_d,\bI_d)$, $\mu\in[n]$. 

An illustration summary of the setting considered here in given in Figure \ref{fig:setting}. Note that such architecture can be interpreted as a two-layer tree neural network, also known in some contexts as the \emph{tree-committee} or \emph{parity machine} \citep{Schwarze1992}.

\begin{figure}[t!]
    \centering
    \includegraphics[height=0.5\columnwidth]{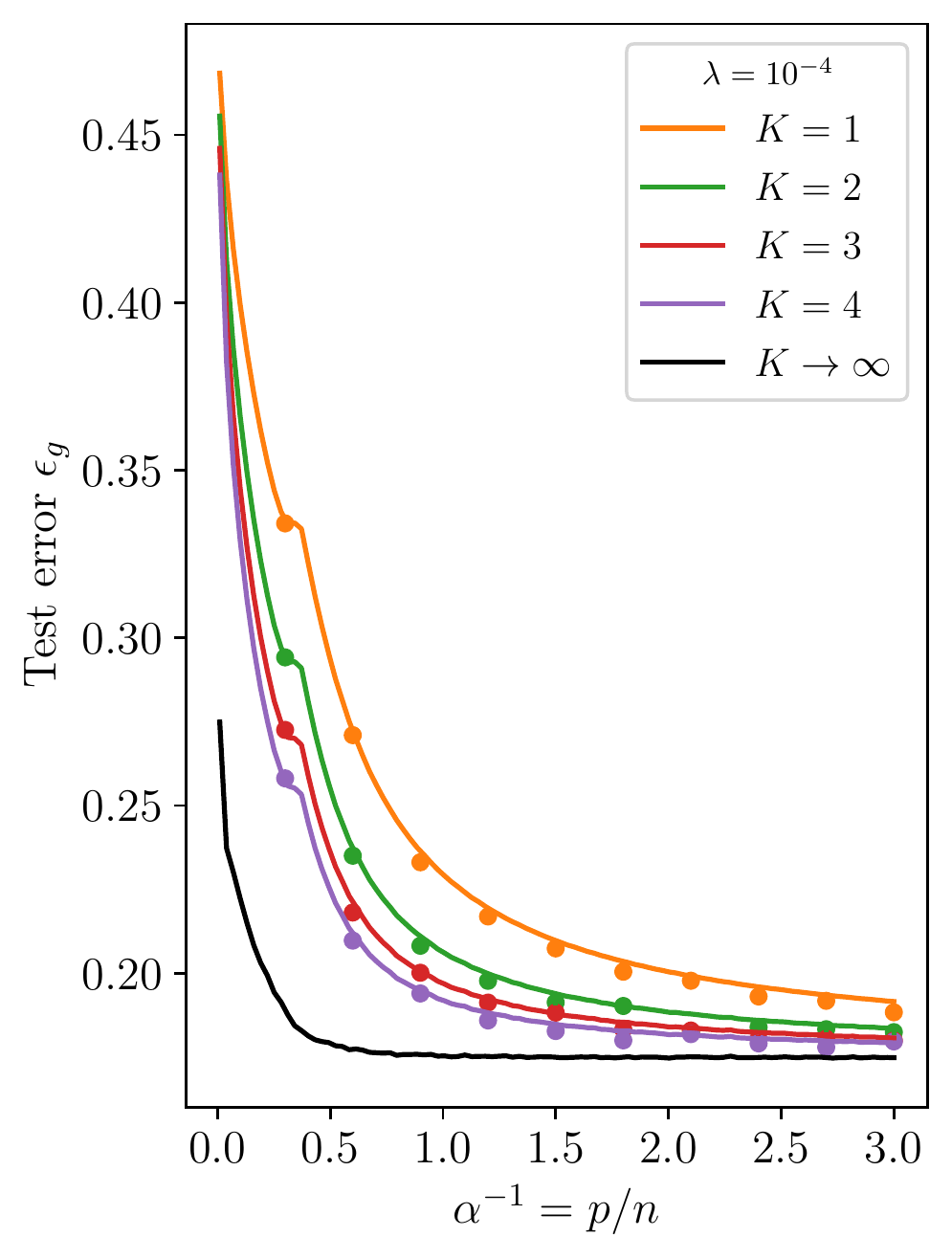}\qquad
    \includegraphics[height=0.5\columnwidth]{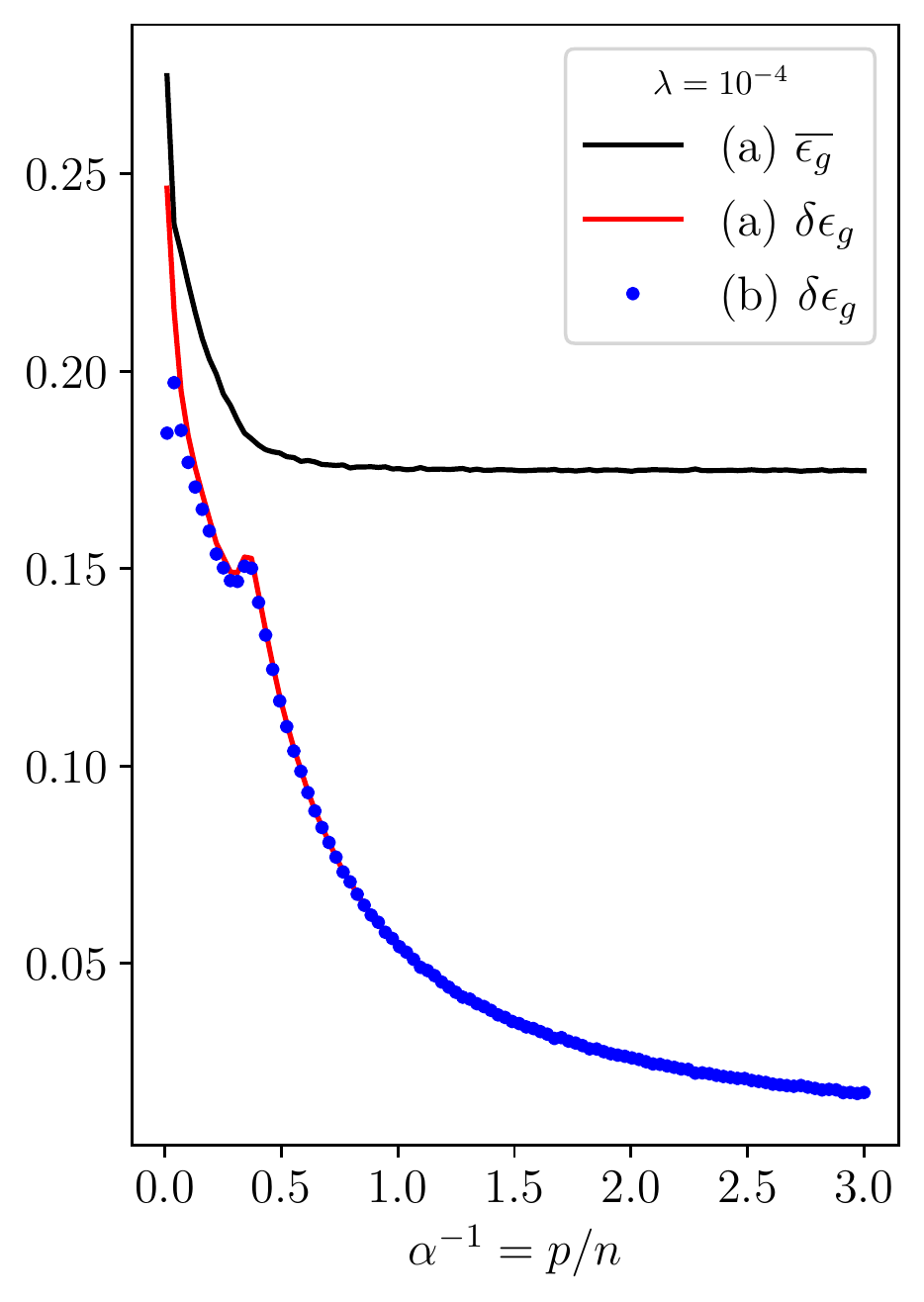}
    \caption{\textit{Left.} Test error for logistic regression with $\lambda=10^{-4}$ and different values of $K$ as function of $\sfrac{p}{n}=\sfrac{1}{\alpha}$ with $\sfrac{n}{d}=2$ and $\rho=1$. Dots represent the average of the outcomes of $10^3$ numerical experiments. Here we adopted $\phi(x)=\mathrm{erf}(x)$ and estimator $\hat f(\bv)=\sign(\sum_k v_k)$. \textit{Right.} Decomposition of the $K=1$ test error $\epsilon_g=\overline{\epsilon_g}+\delta \epsilon_g$ for the estimator \textbf{(a)}, with $\sfrac{n}{d}=2$ and $\lambda=10^{-4}$. We plot also the contribution $\delta\epsilon_g$ corresponding to the estimator \textbf{(b)}: we numerically observed that such decomposition coincides in the two cases. Note also the presence of a kink in $\delta\epsilon_g$ at the interpolation transition.}
    \label{fig:log2}
\end{figure}

\paragraph{Main contributions ---} The results in this manuscript can be listed as follows. 
\begin{itemize}[wide = 1pt]
\item We provide a sharp asymptotic characterisation of the joint statistics of the ensemble of empirical risk minimisers $\{\hat{\vec{w}}_{k}\}_{k\in[K]}$ in the high-dimensional limit where $p, n\to+\infty$ with $\sfrac{n}{p}$ kept constant, for any convex loss and penalty. In particular, we show that the pre-activations $\{\hat{\vec{w}}_{k}^{\top}\vec{u}_{k}\}_{k\in[K]}$ are jointly Gaussian, with sufficient statistics obeying a set of explicit closed-form equations. Note that the analysis of ensembling with non-square losses is out of the grasp of the most commonly adopted theoretical tools (e.g., random matrix theory). Therefore, our proof method based on recent progress on Approximate Message Passing techniques \citep{javanmard2013state,berthier2020state,gerbelot2021graph} is of independent interest. Different versions of our theorem are discussed throughout the manuscript. First, in Sec.~\ref{sec:mainres} for the particular case of independently trained learners on random features (Theorem \ref{th:TeoSimplified}). Later, in Sec.~\ref{sec:general} for the general case of jointly trained learners on correlated Gaussian covariates (Theorem \ref{th:TheGen}).

\item We discuss the role played by fluctuations in the non-monotonic behaviour of the generalisation performance of interpolators (a.k.a.~double-descent behaviour). In particular ---as discussed in \citep{geiger2020scaling,dAscoli_2021} for the ridge case---  the interpolation peak arises from the model overfitting the particular realisation of the random weights. We show the test error can be decomposed $\epsilon_g(K=1) =\overline{\epsilon_g} + \delta\epsilon_g$ in terms of a fluctuation-free term $\overline{\epsilon_g}$ and a fluctuation term $\delta\epsilon_g$ responsible for the double-descent behavior, see Fig.~\ref{fig:log2} for the case of max-margin classification. 

\item In the context of classification, we discuss how \emph{majority vote} and \emph{score averaging}, two popular ensembling procedures, compare in terms of generalisation performance. More specifically, we show that in the setting we study score averaging consistently outperforms the majority vote predictor. However, for a large number of learners $K\gg 1$ these two predictors agree, see Fig.~\ref{fig:log1} (right).

\item Finally, we discuss how ensembling can be used as a tool for uncertainty quantification. In particular, we connect the correlation between two learners to the probability of disagreement, and show that it decreases with overparametrisation, see Fig.~\ref{fig:log1} (center). We provide a full characterisation of the joint probability density of the confidence score between two independent learners, see Fig.~\ref{fig:log1} (left).

\end{itemize}

\paragraph{Related works ---}
The idea of reducing the variance of a predictor by averaging over independent learners is quite old in Machine Learning \citep{Hansen1990, Perrone93whennetworks, NIPS1993_0537fb40, NIPS1994_b8c37e33}, and an early asymptotic analysis of the regression case was given in \cite{Krogh1997}. In particular, a variety of methods to combine an ensemble of learners appeared in the literature \citep{Opitz1999}. In a very inspiring work, \citet{geiger2020scaling} carried out an extensive series of experiments in order to shed light on the generalisation properties of neural networks, and reported many observations and empirical arguments about the role of the variance due to the random initialisation of the weights in the double-descent curve using an ensemble of learners. This was a major motivation for the present work. Closest to our setting is the work of \citet{neal2018modern,dascoli2020,jacot2020implicit} which disentangles the various sources of variance in the process of training deep neural networks. Indeed, here we adopt the model defined by \citet{dascoli2020}, and provide a rigorous justification of their results for the case of ridge regression. A slightly finer decomposition of the variance in terms of the different sources of randomness in the problem was later proposed by \citet{NEURIPS2020_7d420e2b}. \citet{JMLR:v22:20-1211} show that such decomposition is not unique, and can be more generally understood from the point of view of the \emph{analysis of variance} (ANOVA) framework. Interestingly, subsequent papers were able to identity a series of triple (and more) descent, e.g., \citep{dAscoli_2021,adlam2020neural,chen2020multiple}.

The Random Features (RF) model was introduced in the seminal work of \citet{rahimi2007random} as an efficient approximation for kernel methods. Drawing from early ideas of \citet{10.1214/08-AOS648}, \citet{NIPS2017_0f3d014e} showed that the empirical distribution of the Gram matrix of RF is asymptotically equivalent to a linear model with matched second statistics, and characterised in this way memorisation with RF regression. The learning problem was first analysed by \citet{mei2020generalization}, who provided an exact asymptotic characterisation of the training and generalisation errors of RF regression. This analysis was later extended to generic convex losses by \citet{pmlr-v119-gerace20a} using the heuristic replica method, and later proved by \citet{dhifallah2020precise} using convex Gaussian inequalities.

The aforementioned asymptotic equivalence between the RF model and a Gaussian model with matched moments has been named the \emph{Gaussian Equivalence Principle} (GEP) \citep{PhysRevX.10.041044}. Rigorous proofs in the memorisation and learning setting with square loss appeared in \citep{NIPS2017_0f3d014e, mei2020generalization}, and for general convex penalties in \citep{goldt2021gaussian, hu2020universality}. \citet{goldt2021gaussian} and \citet{loureiro2021learning} provided extensive numerical evidence that the GEP holds for more generic feature maps, including features stemming from trained neural networks. 

Most of the previously mentioned works deriving exact asymptotics for the RF model in the proportional limit use either Random Matrix Theory techniques or Convex Gaussian inequalities. While these tools have been recently used in many different contexts, they ultimately fall short when considering an ensemble of predictors with generic convex loss and regularisation, along with structured design matrices. Therefore, to prove the results herein we employ an \emph{Approximate Message Passing} (AMP) proof technique \citep{bayati2011dynamics,donoho2016high}, leveraging on recently introduced progresses in \citep{loureiro2021learning, gerbelot2021graph} which enables to capture the full complexity of the problem and obtain the asymptotic joint distribution of the ensemble of predictors.

\section{Learning with an ensemble of random features}
\label{sec:mainres}
In this section give a first formulation of our main result, namely the exact asymptotic characterisation of the statistics of the ensembling estimator introduced in eq.~\eqref{eq:intro:estimator}. We prove that, in the proportional high dimensional limit, the statistics of the arguments of the activation function in eq.~\eqref{eq:intro:estimator} is simply given by a multivariate Gaussian, whose covariance matrix we can completely specify. This result holds for any convex loss, any convex regularisation, and for all models of generative networks $\bu_k\colon\R^d\to\R^p$, as we will show in full generality in Sec.~\ref{sec:general}. However, for simplicity, in this section and in the following we focus on the setting described in Sec.~\ref{sec:intro}, in which the statistician averages over an independent ensemble of random features, i.e., $\bu_{k}(\bx) = \phi(\bF_{k}\bx)$. In this case, our result can be formulated as follows:
\begin{theorem}[Simplified version]
\label{th:TeoSimplified} 
Assume that in the high-dimensional limit where $d, p, n\to+\infty$ with $\alpha\coloneqq \sfrac{n}{p}$ and $\gamma\coloneqq\sfrac{d}{p}$ kept $\Theta(1)$ constants, the Wishart matrix $\bF\bF^\intercal$ has a well-defined asymptotic spectral distribution. Then in this limit, for any pseudo-Lispchitz function of order 2 $\varphi\colon \R\times \R^K\to \R$, we have
\begin{equation}
\mathbb E_{(\bx,y)}\left[\varphi\left(y,\frac{\hat\bw_1^\intercal\bu_1}{\sqrt p},\dots,\frac{\hat\bw_K^\intercal\bu_K}{\sqrt p}\right)\right]\xrightarrow{\rm P}\mathbb E_{(\nu,\bmu)}\left[\varphi\left(f_0(\nu),\bmu\right)\right],
\end{equation}
where $(\nu,\bmu)\in\R^{K+1}$ is a jointly Gaussian vector  $(\nu,\bmu)\sim\mathcal{N}(\mathbf{0}_{K+1},\bSigma)$ with covariance
\begin{equation}
\label{eq:def:covariance}
\bSigma=\begin{psmallmatrix}
\rho&m\bUno_K^\intercal\\m\bUno_K&\bQ
\end{psmallmatrix},\qquad 
\bQ\coloneqq(q_0-q_1)\bI_K+q_1\bUno_{K,K},
\end{equation}
with $\bUno_{K,K}\in\R^{K\times K}$ and $\bUno_K\in\R^K$ are a matrix and a vector of ones respectively. The entries of $\bSigma$ are solutions of a set of self-consistent equations given in Corollary \ref{th:Teo1}.
\end{theorem}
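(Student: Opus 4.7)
The plan is to prove Theorem~\ref{th:TeoSimplified} as a consequence of (i) a Gaussian Equivalence Principle lifted to the ensemble setting, combined with (ii) an Approximate Message Passing (AMP) analysis with graphical/block structure as in \citep{loureiro2021learning, gerbelot2021graph}. This gives both the joint Gaussianity and the explicit self-consistent equations for $\bSigma$.

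\textbf{Step 1: Gaussian Equivalence.} First I would replace each random-features vector $\bu_k(\bx)=\phi(\bF_k\bx)$ by an equivalent Gaussian surrogate $\bsu_k(\bx)=\kappa_0\bUno+\kappa_1\bF_k\bx/\sqrt d+\kappa_\star\bzeta_k$, where $\kappa_0,\kappa_1,\kappa_\star$ are the Hermite coefficients of $\phi$ against a standard Gaussian and $\bzeta_k\sim\mathcal{N}(\mathbf 0_p,\bI_p)$ is independent across $k$. For a single learner ($K=1$) this asymptotic equivalence of the empirical risk at the minimiser is rigorous \citep{goldt2021gaussian,hu2020universality,dhifallah2020precise}. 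The required extension to $K$ learners follows from applying the same argument conditionally on the data $\{(\bx^\mu,y^\mu)\}$: since the matrices $\bF_k$ are mutually independent, the noise parts $\bzeta_k$ can be taken independent, while the deterministic component $\kappa_1\bF_k\bx/\sqrt d$ captures the shared data-dependent signal that couples the learners. After this substitution the joint ERM becomes a problem on correlated Gaussian designs, which is the setting that the proof of Theorem~\ref{th:TheGen} in Sec.~\ref{sec:general} addresses.

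\textbf{Step 2: Block AMP and state evolution.} Next, to each learner $k\in[K]$ I would associate a variable node $\bw_k\in\R^p$ and design a multi-block generalised AMP iteration whose denoisers are the proximal operators of the loss $\ell$ and of the ridge penalty, exactly as in the single-learner AMP of \citep{gerbelot2021graph}, but indexed over $k$. The state of the algorithm is then a collection of $K\times K$ and $K\times 1$ overlap matrices, which by the rigorous state-evolution theorem for block-structured AMP (see \citep{javanmard2013state,berthier2020state,gerbelot2021graph}) concentrate on deterministic limits satisfying a closed set of self-consistent equations. Symmetry between the learners—identical loss, penalty, feature distribution, and independence of the $\bF_k$ and initialisation—forces the fixed point to be invariant under permutations of $[K]$, which produces the parametrisation $\bQ=(q_0-q_1)\bI_K+q_1\bUno_{K,K}$ and the common teacher-student overlap $m$ displayed in \eqref{eq:def:covariance}.

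\textbf{Step 3: From AMP fixed point to ERM and to test statistics.} By convexity of the loss and the ridge penalty, the ERM problem~\eqref{eq:erm} admits a unique minimiser for each $k$. Standard arguments combining strong convexity (enforced by the ridge term $\lambda>0$, or, in the $\lambda\to 0$ limit, by restricting to the max-margin regime) with the contraction of the AMP operator at the state-evolution fixed point show that AMP iterates converge to $\hat{\bw}_k$ in normalised $\ell_2$ sense. Together with the state evolution this gives, for any pseudo-Lipschitz test function of order $2$, that the joint empirical distribution of $\{(\bw_k^\star,\hat\bw_k)\}_{k\in[K]}$ concentrates on a Gaussian whose covariance entries match the overlaps. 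Since $\bx\sim\mathcal{N}(\mathbf 0_d,\bI_d)$ is independent of $\hat\bw_k$ in the equivalent Gaussian model and since $(\nu,\bmu)$ are a finite-dimensional linear readout of the teacher/student pair, the pre-activations $(\frac{\btheta^\intercal\bx}{\sqrt d},\frac{\hat\bw_k^\intercal\bu_k}{\sqrt p})$ are asymptotically jointly Gaussian with the displayed covariance, and the claim for $\varphi$ follows from pseudo-Lipschitz continuity and Portmanteau.

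\textbf{Main obstacle.} The delicate point is step~1 combined with the block AMP denoisers in step~2: the Gaussian equivalence has to be shown to hold \emph{jointly} in $k$ at the level of the minimisers of a non-separable joint ERM (not only for a single quadratic form), and the block AMP must track cross-learner overlaps through the non-linearity $\phi$ while respecting the independence of the $\bF_k$. The cleanest route, which I would follow, is to embed the ensemble problem into the framework of Theorem~\ref{th:TheGen} by treating $\bigl(\frac{\bF_1}{\sqrt d}\bx,\dots,\frac{\bF_K}{\sqrt d}\bx,\bzeta_1,\dots,\bzeta_K\bigr)$ as a single correlated Gaussian covariate with explicit block-diagonal-plus-rank-structure covariance, so that the general jointly-trained result specialises to the claimed covariance~\eqref{eq:def:covariance} after using the permutation symmetry and decoupling induced by independent training.
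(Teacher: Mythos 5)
Your proposal follows essentially the same route as the paper: reduce the random-features model to correlated Gaussian designs via the Gaussian Equivalence Principle, run a matrix-valued (block) AMP whose denoisers are the Bregman proximal operators of the loss and penalty, invoke the non-separable state-evolution theorem of \citep{gerbelot2021graph} to characterise the overlaps and the convergence of the iterates to the ERM minimiser, and use permutation symmetry of the $K$ learners to collapse the covariance to the $(m,q_0,q_1)$ parametrisation of \eqref{eq:def:covariance}. The caveat you flag as the main obstacle --- that the Gaussian equivalence must hold jointly over the $K$ learners at the level of the minimisers of the ERM --- is precisely the point the paper itself handles the same way (its rigorous AMP proof starts from the correlated Gaussian covariate model of Theorem~\ref{th:TheGen}, with the multi-learner GEP applied as in the replica derivation), so you have correctly identified both the structure of the argument and its delicate step.
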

As discussed in the introduction, the asymptotic statistics of the \textit{single} learner has been studied in \citep{pmlr-v119-gerace20a,dhifallah2020precise,loureiro2021learning}. Their result amounts to the analysis of the estimator solving the empirical risk minimisation problem in eq.~\eqref{eq:erm} and it is recovered imposing $K=1$ in the theorem above. For $K=1$, $(\nu,\mu)\in\R^{2}$ is jointly Gaussian with zero mean and covariance $\bSigma=\begin{psmallmatrix}
\rho&m\\m&q_0
\end{psmallmatrix}$.

However, such result is not enough to quantify the correlation between different learners, induced by the training on the same dataset, which is required to compute, e.g., the test error associated with an ensembling predictor as in eq.~\eqref{eq:intro:estimator}. For example, in the simple case where $f_{0}(u) = u$ and $\hat f(\bv)=\frac{1}{K}\sum_k v_k$, the mean-squared error on the labels is given by $\epsilon_{g} = \mathbb{E}_{(\bx,y)}[(y-\hat y(\bx))^2]=\rho+(q_0-q_1)K^{-1}+q_1-2m$, which crucially depends on the average correlation between two independent learners\footnote{Note that since all learners are here assumed to be statistically equivalent, their pair-wise correlation is the same on average. In the general case, discussed in Sec.~\ref{sec:general}, the correlation matrix $\bQ\in\R^{K\times K}$ can have a more complex structure.} 
$q_{1}\coloneqq \frac{1}{p}\mathbb{E}[\hat{\bw}_{1}^\intercal\hat{\bw}_{2}]$. Our main result is precisely an exact asymptotic characterisation of this correlation in the proportional limit of the previous theorem. Once $m$, $q_0$ and $q_1$ have been determined, the generalisation error can be computed as
\begin{equation}\label{generror}
\epsilon_g\coloneqq\mathbb E_{(\bx,y)}[\Delta\left(y,\hat y(\bx)\right)]\xrightarrow{n\to+\infty}\mathbb E_{(\nu,\bmu)}\left[\Delta\left(f_0(\nu),\hat f(\bmu)\right)\right]
\end{equation}
for any error measure $\Delta\colon\mathcal Y\times \mathcal Y\to\R^+$. 

Suppose now that 
\begin{equation}
\hat f(\bv)\equiv \hat f_0\left(\frac{1}{K}\sum_k v_k\right)    
\end{equation}
for some $\hat f_0\colon\R\to\mathcal Y$ activation function of the single learner. In this case we can introduce the random variable
$\hat\mu\stackrel{\rm d}{=}\lim_{K\to+\infty}\frac{1}{K}\sum_k\mu_k$. It is not difficult to see that the joint probability $p(\nu,\hat\mu)\sim\mathcal N(\mathbf 0_2,\hat\bSigma)$ where $
\hat\bSigma=\begin{psmallmatrix}
\rho&m\\m&q_1
\end{psmallmatrix}$. This formally coincides with the joint distribution for the activation fields for $K=1$ \citep{pmlr-v119-gerace20a}, but with $q_0$ replaced by $q_1\leq q_0$. The smaller variance is due to the fact that the fluctuations of the activation fields are averaged out by the ensembling process. The test error in the $K\to+\infty$ limit is then
\begin{equation}
\overline{\epsilon_g}\coloneqq\mathbb E_{(\nu,\hat\mu)}[\Delta(f_0(\nu),\hat f_0(\hat\mu))],
\end{equation}
so that the fluctuation contribution to the test error for $K=1$ can be defined as
\begin{equation}
\delta\epsilon_g\coloneqq \mathbb E_{(\nu,\mu)}[\Delta(f_0(\nu),\hat f_0(\mu))]-
\overline{\epsilon_g}.  
\end{equation}
The term $\delta\epsilon_g$ is by definition the contribution suppressed by ensembling and corresponds to the \textit{ambiguity} introduced by \citet{NIPS1994_b8c37e33} for the square loss. This contribution expresses the variance in the ensemble and it is responsible for the non-monotonic behaviour in the test error of interpolators, also known as the double-descent behavior.
\section{Applications}
\label{sec:applications}
We will consider now two relevant examples of separable losses, namely a ridge loss and a logistic loss. In both cases, it is possible to derive the explicit expression of the training loss and generalisation error in terms of the elements of the correlation matrix introduced above. 
\subsection{Ridge regression}
\begin{figure}
\begin{center}
\includegraphics[height=0.5\columnwidth]{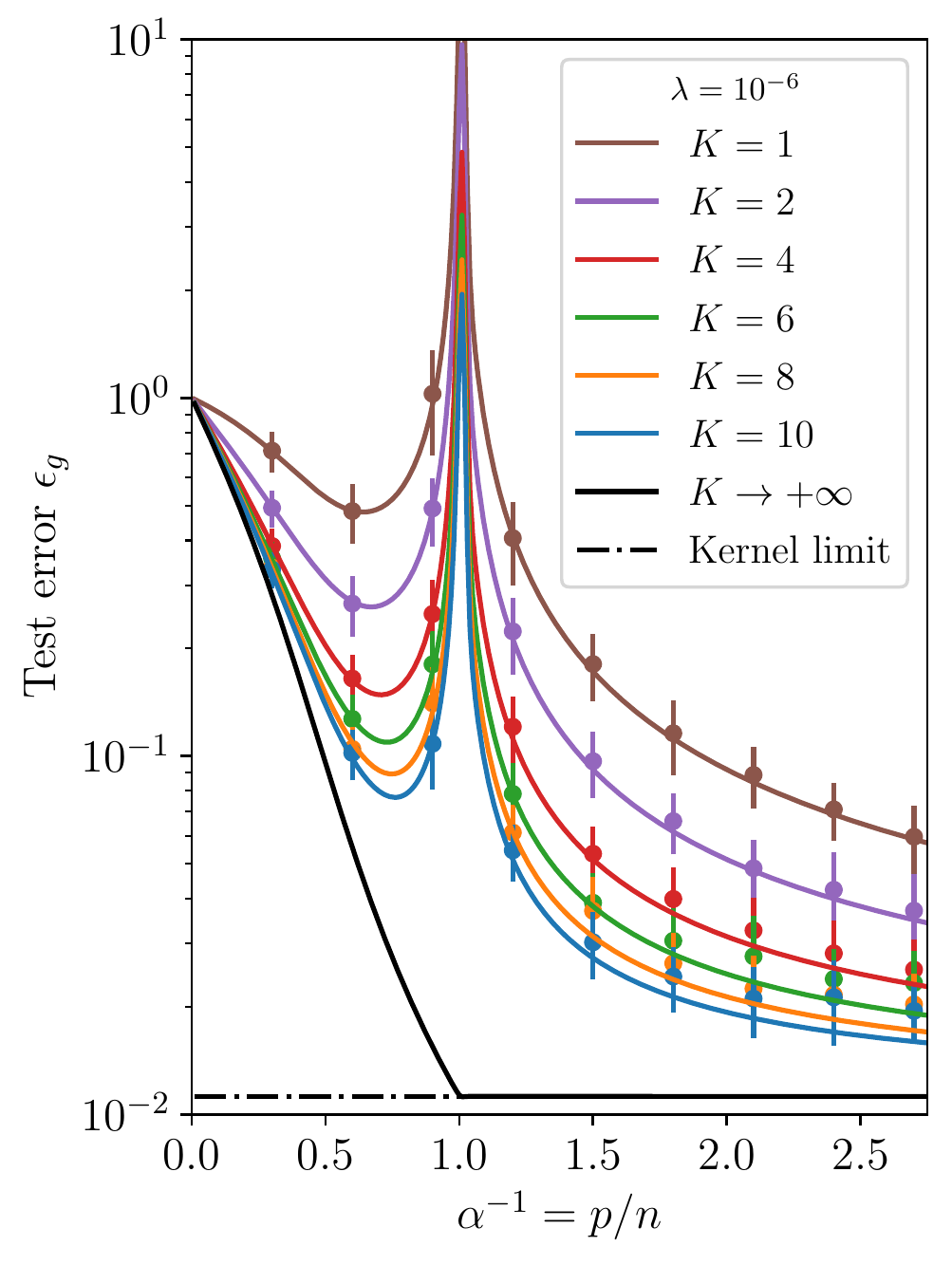}   \qquad \includegraphics[height=0.5\columnwidth]{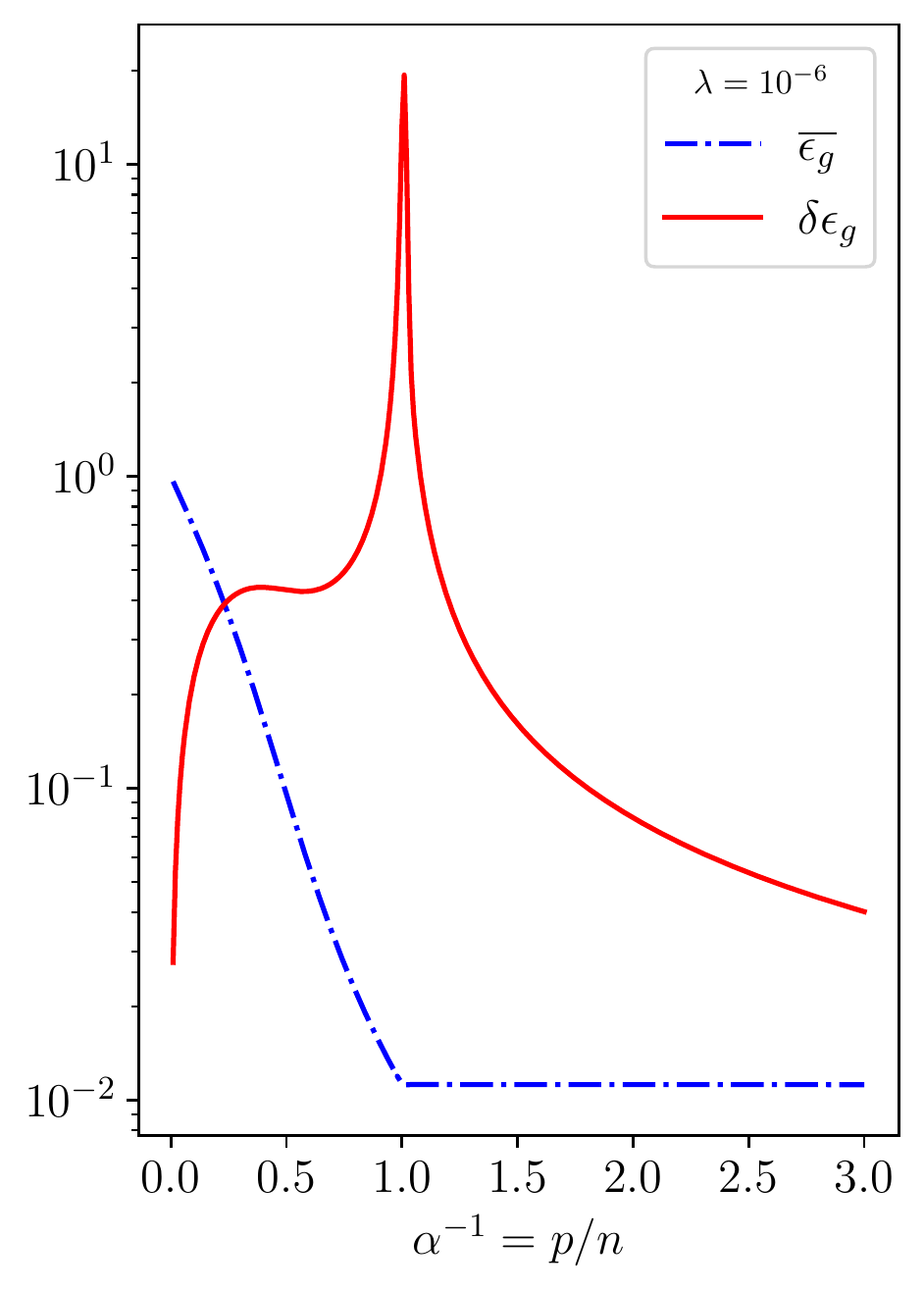}      
\end{center}
    \caption{\textit{Left.} Test error for ridge regression with $\lambda=10^{-6}$ and different values of $K$ as function of $\sfrac{p}{n}=\sfrac{1}{\alpha}$ with $\sfrac{n}{d}=2$ and $\rho=1$. Dots represent the average of the outcomes of $50$ numerical experiments in which the parameters of the neurons are estimated using $\min(d,p)=200$. Here we adopted $\phi(x)=\mathrm{erf}(x)$. \textit{Right.} Decomposition of $\epsilon_g=\overline{\epsilon_g}+\delta\epsilon_g$ in the $K=1$ case}
    \label{fig:square}
\end{figure} \label{sec:ridge}
If we assume $f_0(x)=x$, $\hat f(\bv)=\frac{1}{K}\sum_k v_k$, and a quadratic loss of the type
$\ell(y,x)=\frac{1}{2}(y-x)^2$, it is possible to write down simple recursive equations for $m$, $q_0$ and $q_1$ (see Appendix \ref{app:sec:ridge}). Taking $\Delta(y,\hat y)=(y-\hat y)^2$, the generalisation error is easily computed as
\begin{equation}
    \epsilon_g=\rho+\frac{q_0-q_1}{K}+q_1-2m\xrightarrow{K\to+\infty}\rho+q_1-2m\ \equiv\ \overline{\epsilon_g}.
\end{equation}
Note that in this case the $\lambda\to 0^{+}$ limit gives the minimum $\ell_2$-norm interpolator. In Fig.~\ref{fig:square} we compare our theoretical prediction with numerical results for $\lambda=10^{-6}$ and various values of $K$. It is evident that the divergence of the generalisation error at $\alpha=1$ is only due to the divergence of $q_0$, whereas the contribution $\overline{\epsilon_g}$, which is independent on $q_0$, is smooth everywhere. Alongside with the interpolation divergence, $\delta\epsilon_g=q_0-q_1$ has an additional bump at $\sfrac{p}{n}=\sfrac{d}{n}$, which corresponds to the ``linear peak'' discussed by \citet{dAscoli_2021}.

In the plot we present also the so-called kernel limit, corresponding to the limit $\sfrac{n}{p}=\alpha\to 0$ at fixed $\sfrac{n}{d}$. An explicit manipulation (see Appendix \ref{app:sec:ridge}) shows that $q_1=q_0\equiv q$ in this limit. This implies that in the kernel limit $\epsilon_g^{\rm k}$ does not depend on $K$, being equal to $\epsilon_g^{\rm k}\equiv \rho+q-2m$. The generalisation error obtained in the kernel limit coincides with $\overline{\epsilon_g}$ for $p>n$: this is expected as in $\overline{\epsilon_g}$ the fluctuations amongst learners are averaged out, effectively recovering the cost obtained in the case of an infinite number of parameters.

\subsection{Binary classification}
Suppose now that we are considering a classification task, such that $\mathcal Y=\{-1,1\}$. For this task we consider $f_0(x)=\mathrm{sign}(x)$. A popular choice of loss in this classification task is the logistic loss,
\begin{equation}
\ell(y,x)=\ln(1+\e^{-y x}),
\end{equation}
although other choices, e.g, hinge loss, can be considered. Since both the logistic and hinge losses depend only on the \emph{margin} $y\bw^\intercal\bu$, the empirical risk minimiser for $\lambda\to 0^{+}$ in both cases give the max-margin interpolator \citep{NIPS2003_0fe47339}. We write down the explicit saddle-point equations associated to the logistic and hinge loss in Appendix \ref{app:binary}, but we will focus our attention on the logistic case for the sake of brevity. For this choice of the loss, we obtained the values of $m$, $q_0$ and $q_1$ showed in Fig.~\ref{fig:logOT}. Using these values, a number of relevant questions can be addressed.

\begin{figure}
\begin{center}
    \includegraphics[height=0.5\columnwidth]{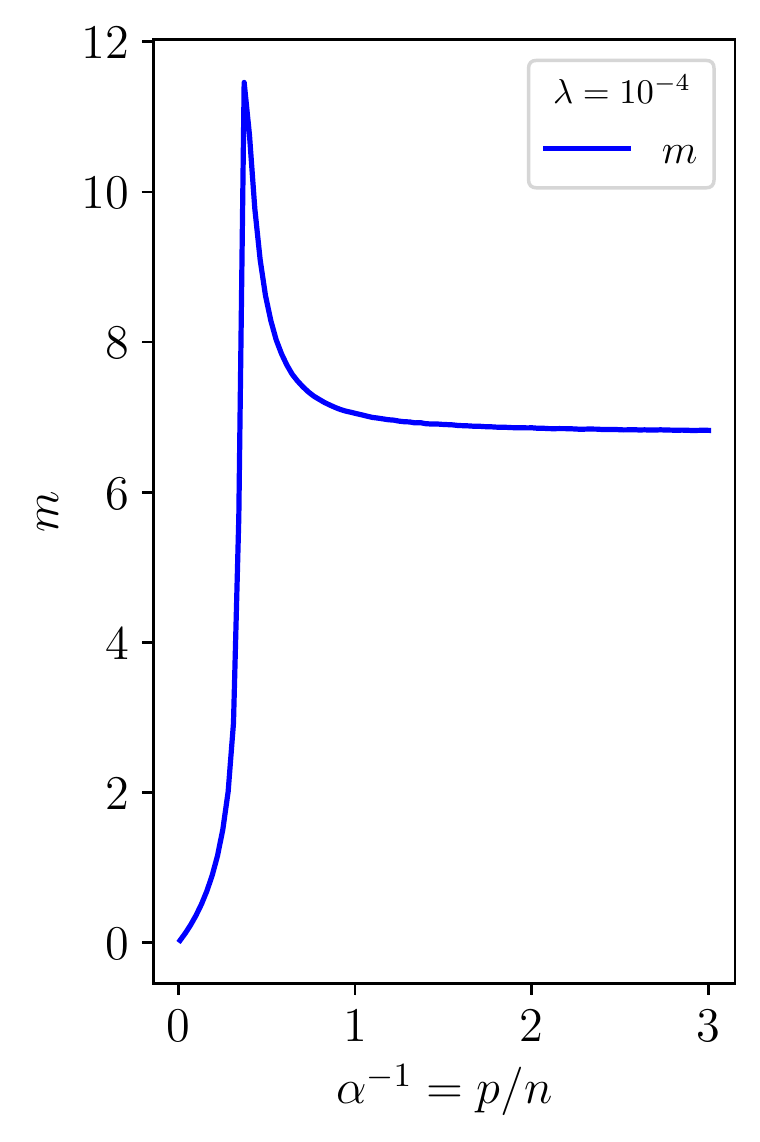}\qquad
    \includegraphics[height=0.5\columnwidth]{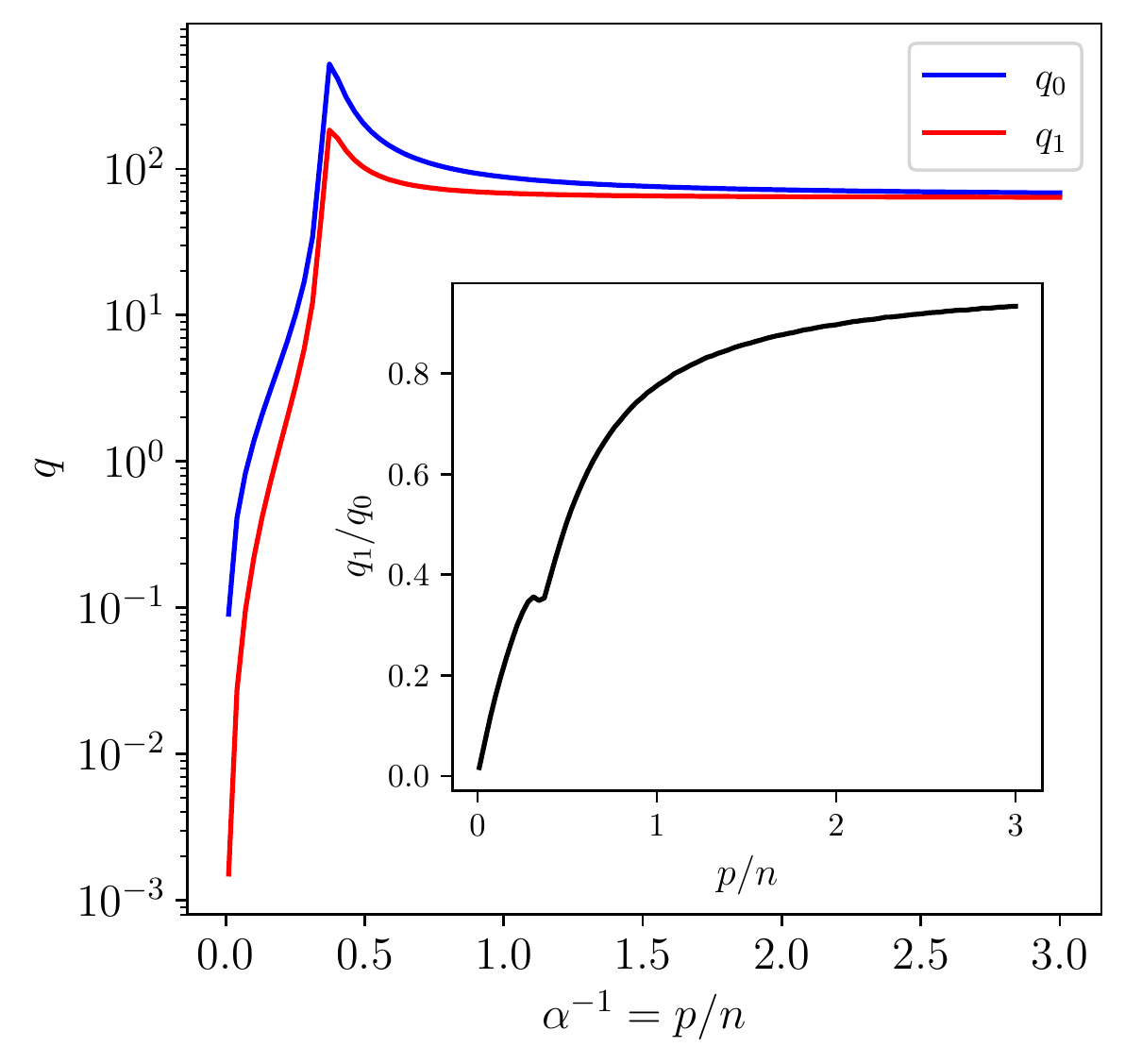}    
\end{center}
    \caption{Analytical estimation of the covariance parameters characterising the correlation with the oracle $m$ (left), the norm of the predictor in feature space $q_0$ and the correlation between learners $q_1$ (right) (see eq.~\eqref{eq:def:covariance} for the definition) in a classification task using logistic loss with ridge penalty with $\lambda=10^{-4}$ at fixed $\sfrac{n}{d}=2$ as function of $\sfrac{p}{n}$. In the inset, ratio $\sfrac{q_1}{q_0}$, quantifying the correlation between two learners. In all parameters the interpolation kink is clearly visible.}
    \label{fig:logOT}
\end{figure}

\subsection*{Alignment of learners}

Assuming that the predictor of the learner $k$ is $\hat y_k(\bx)=\sign(\hat\bw^\intercal_k\bu_k(\bx))$, in Fig.~\ref{fig:log1} (center) we estimate the probability that two learners give opposite classification. This is analytically given by
\begin{equation}
\mathbb P[\hat y_1(\bx)\neq\hat y_2(\bx)]=   \mathbb P[\mu_1\mu_2<0]=\frac{1}{\pi}\arccos\left(\frac{q_1}{q_0}\right).
\end{equation}
Note that by definition the ratio $q_{1}/q_{0}$ is a cosine similarity between two learners in the norm induced by the feature space. Therefore, this provides an interesting interpretation of these sufficient statistics in terms of the probability of disagreement. In particular, as illustrated in Fig.~\ref{fig:log1} (center) overparametrisation promotes agreement between the learners, therefore suppressing uncertainty. More generally, ensembling can be used as a technique for uncertainty estimation \citep{lakshminarayanan_deep_ensembles_2017}. In the context of logistic regression, the pre-activation to the sign function is often interpreted as a \emph{confidence score}. Indeed, introducing the logistic function $\varphi_k(\bx)=(1+\mathrm{exp}(-p^{-1/2}\hat\bw_k^\intercal\bu_k(\bx)))^{-1}$, it expresses the confidence of the $k$th classifier in associating $\hat y=1$ to the input $\bx$. Therefore, it is reasonable to ask how reliable is the logistic score as a confidence measure. For instance, what is the variance of the confidence among different learners? This can be quantified by the joint probability density $\rho(\varphi_1, \varphi_2) \coloneqq\mathbb E_\bx[\delta(\varphi_1-\varphi_1(\bx))\delta(\varphi_2-\varphi_2(\bx))]$, which can be readily computed using our Theorem \ref{th:TeoSimplified}. Fig.~\ref{fig:log1} (left) shows one example at fixed $\sfrac{p}{n}$ and vanishing $\lambda$. 

\subsection*{Ensemble predictors} In the previous two points, we discussed how ensembling can be used as a tool to quantify fluctuations. However, ensembling methods are also used in practical settings in order to mitigate fluctuations, e.g., \citep{Breiman1996}. An important question in this context is: given an ensemble of predictors $\{\hat{\vec{w}}_{k}\}_{k\in[K]}$, what is the best way of combining them to produce a point estimate? In our setting, this amounts to choosing the function $\hat{f}:\mathbb{R}^{K}\to\mathcal{Y}$. Let us consider two popular choices for the estimator $\hat f$ in eq.~\eqref{eq:intro:estimator} used in practice:
\begin{subequations}\label{estlog}
\begin{align}
\text{\bf (a)}&\qquad\hat f(\bv)=\sign\left(\sum_kv_k\right),\\
\text{\bf (b)}&\qquad\hat f(\bv)=\sign\left(\sum_k\sign(v_k)\right).
\end{align}
\end{subequations}
In a sense, \textbf{(a)} provides an estimator based on the average of the output fields, whereas \textbf{(b)}, which corresponds to a majority rule if $K$ is odd \citep{Hansen1990}, is a function of the average of the estimators of the single learners. For both choices of the estimator we use $\Delta(y,\hat y)=\delta_{\hat{y}, y}$ to measure the test error. In Fig.~\ref{fig:log1} (right) we compare the test error obtained using \textbf{(a)} and $\textbf{(b)}$ for $K=3$ with vanishing regularisation $\lambda=10^{-4}$. It is observed that the estimator \textbf{(a)} has better performances than the estimator \textbf{(b)}. As previously discussed, in this case logistic regression is equivalent to max-margin estimation, and in this case the error \textbf{(a)} can be intuitively understood in terms of a robust max-margin estimation obtained by averaging the margins associated to different draws of the random features. In the case \textbf{(a)} it is easy to show that the generalisation error takes the form
\begin{equation}
\epsilon_g=
\frac{1}{\pi}\arccos\bigg(\frac{\sqrt{K}m}{\sqrt{\rho(q_0-q_1+Kq_1)}}\bigg)\xrightarrow{K\to\infty}\frac{1}{\pi}\arccos\left(\frac{m}{\sqrt{\rho q_1}}\right)\equiv\overline{\epsilon_g}.
\end{equation}
This formula is in agreement with numerical experiments, see Fig.~\ref{fig:log2} (left). Unfortunately, we did not find a similar closed-form expression in case \textbf{(b)}. However, we can observe that in the $K\to+\infty$ limit the generalisation error in case \textbf{(a)} coincides with the generalisation error in case \textbf{(b)}, see Fig.~\ref{fig:log2} (right). By comparing with the results in Fig.~\ref{fig:log1} (center), it is evident that the benefit of ensembling in reducing the test error correlates with the tendency of learners to disagree, i.e., for small values of $\sfrac{p}{n}$, as stressed by \citet{NIPS1994_b8c37e33}. Finally, we observe a constant value of $\overline{\epsilon_g}$ beyond the interpolation threshold, compatibly with the numerical results of \citet{geiger2020scaling}.

\begin{figure}
\includegraphics[height=0.3\columnwidth]{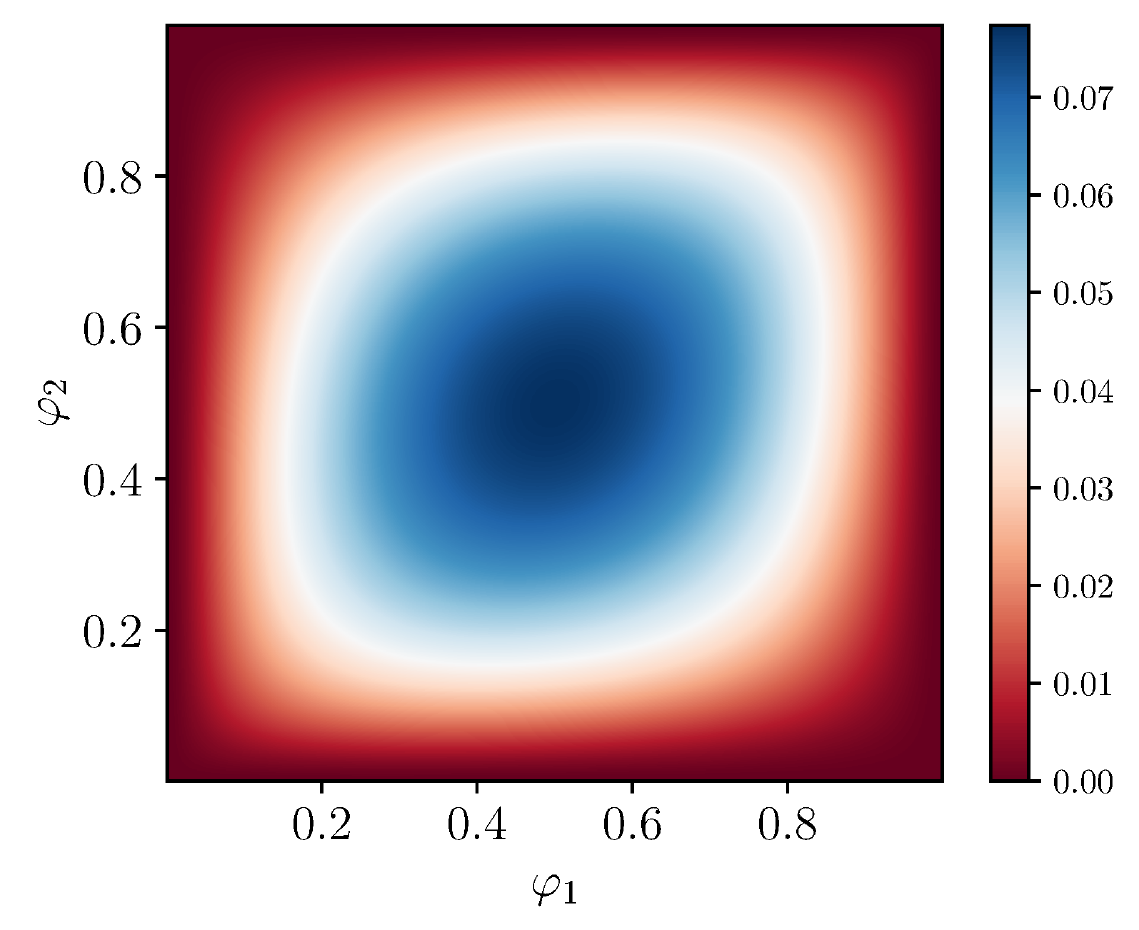}
    \includegraphics[height=0.3\columnwidth]{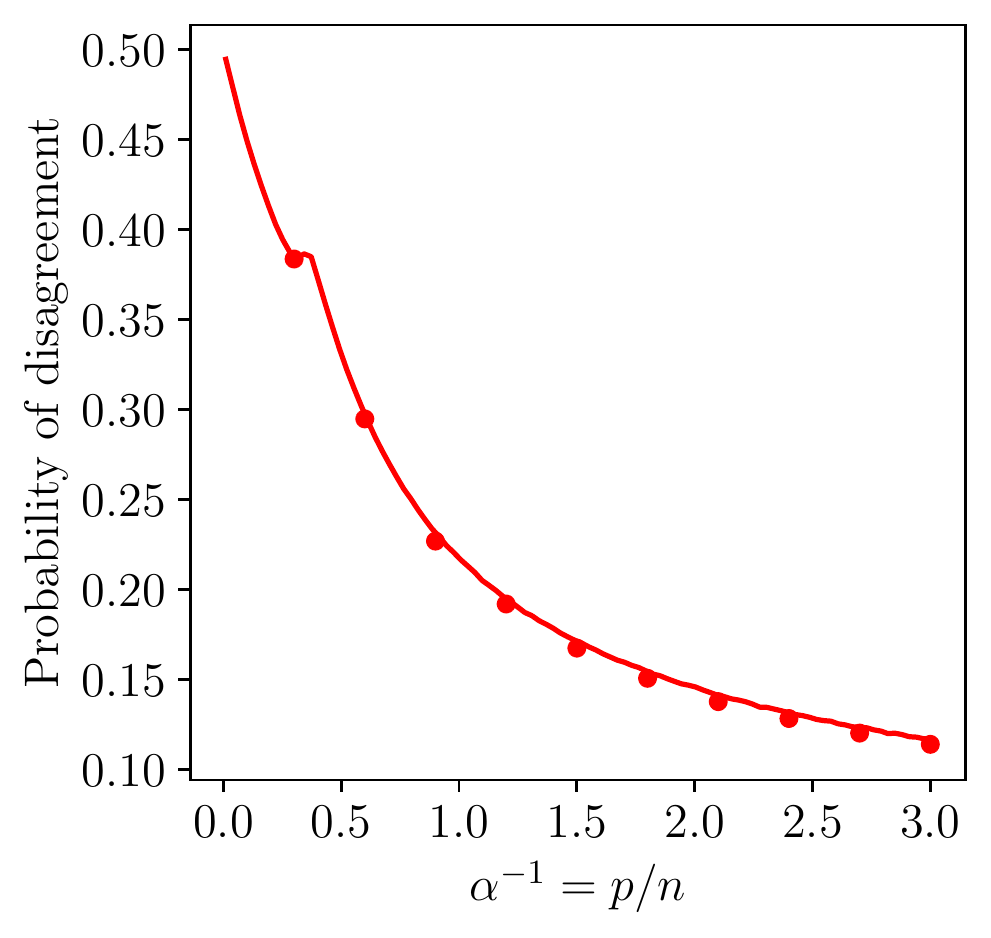}
    \includegraphics[height=0.3\columnwidth]{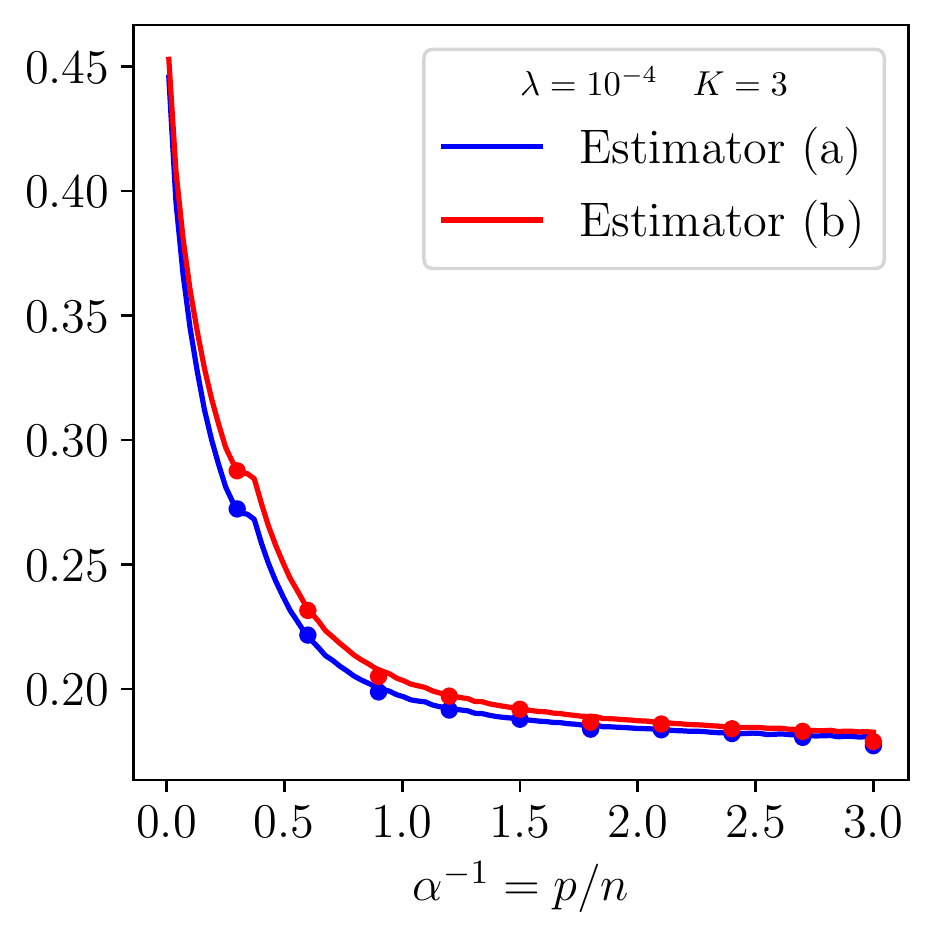}
    \caption{\textit{Left.} Joint probability density of the confidence score $\varphi_i(\bx)=(1+\mathrm{exp}(-p^{-1/2}\hat\bw_i^\intercal\bu_i(\bx)))^{-1}$ of two learners for $\sfrac{p}{n}\simeq 0.13$. \textit{Center.} Probability that two learners give discordant predictions using logistic regression as function of $\sfrac{p}{n}=\sfrac{1}{\alpha}$ with $\sfrac{n}{d}=2$, $\rho=1$, and $\lambda=10^{-4}$. \textit{Right.} Test error for logistic regression using the estimators in eq.~\eqref{estlog} and $K=3$, with the same parameters. We adopted $\phi(x)=\mathrm{erf}(x)$. We observe that the test error obtained using \textbf{(a)} is always smaller than the one obtained using \textbf{(b)}. (\textit{Center and right}) Dots represent the average of the outcomes of $10^3$ numerical experiments.}
    \label{fig:log1}
\end{figure}

\section{The case of general loss and regularisation}\label{sec:general}
In this Section we generalise our results in Sec.~\ref{sec:mainres} relaxing the hypothesis on the loss, on the regularisation and on the properties of the feature maps.  In the general setting we are going to consider, we denote $P^0_y(y|x)$ the probabilistic law by which $y$ is generated. For example, in Sec.~\ref{sec:mainres}, $P^0_y(y|x)=\delta(y-f_0(x))$. In the treatment given here, we allow for more general cases (e.g., the presence of noise in the label generation). We make no assumptions on the generative networks $\bu_k$, so that the information about the first layer is contained in the following tensors,
\begin{align}
\bsOmega&\coloneqq\mathbb E_\bx[\bU(\bx)\otimes\bU(\bx)]\in\R^{p\times p}\otimes\R^{K\times K},\\
\bhPhi&\coloneqq \mathbb E_\bx[\bU(\bx) \bx^\intercal\btheta]\in\R^{p\times K},\\
\bsTheta&=\bhPhi\otimes\bhPhi\in\R^{p\times p}\otimes\R^{K\times K}.
\end{align}
In the equations above, $\bU(\bx)\in\R^{p\times K}$ is the matrix having as concatenated columns $\bu_k(\bx)$. We aim at learning a rule as in eq.~\eqref{eq:intro:estimator}, adopting a general convex loss $\hat\ell\colon \mathcal Y\times\R^K\to \R$, so that the weights are estimated as
\begin{equation}
\label{eq:gen_student}
    \hat\bW = \underset{\bW\in\R^{p\times K}}{\arg\min}\left[\frac{1}{n}\sum_{\mu=1}^n\hat\ell\left(y^\mu,\frac{\mathrm{diag}(\bW^\intercal\bU^\mu)}{\sqrt p}\right)+\lambda r(\bW) \right]
\end{equation}
where $r\colon \R^{p\times K}\to\R$ is a convex regularisation, $\bU^\mu\equiv\bU(\bx^\mu)$ and $\hat\bW\in\R^{p\times K}$ matrix of the concatenated columns $\{\hat\bw_k\}$. Here, since the optimization problem defining the estimator may be non strictly convex, the solution may not be unique. We then denote with $\hat{\mathbf{W}}$ the unique least $\ell_{2}$ norm solution of Eq.\eqref{eq:gen_student}.

In the most general case, the statistical properties of $\hat\bW$ are captured by a finite set of finite-dimensional order parameters, namely $\bV,\bhV,\bQ,\bhQ\in\R^{K\times K}$ and $\bM,\bhM\in\R^K$. These order parameters satisfy a set of fixed-point equations. To avoid a proliferation of indices in our formulas, let us introduce some notation. Let $\bsA=(A_{kk'}^{ij})_{k,k'\in[K]}^{i,j\in[p]}\in\R^{p\times p}\otimes\R^{K\times K}$ be a tensor, and $\bX=(X_k^i)_{k\in[K]}^{i\in[p]}$, $\bY=(Y_k^i)_{k\in[K]}^{i\in[p]}$, $\bX,\bY\in\R^{p\times K}$ two matrices. We will denote
\begin{subequations}
\begin{align}
\llangle \bsA\rrangle&\coloneqq(\sum_i A_{kk'}^{ii})_{kk'}\in\R^{K\times K},\\
\llangle \bX|\bsA|\bY\rrangle&\coloneqq (\sum_{ij}X_k^iA_{kk'}^{ij}Y^j_{k'})_{kk'}\in\R^{K\times K},\\
\llangle \bX|\bY\rrangle&\coloneqq(\sum_{ij}X_k^iY^i_{k})_{k}\in\R^{K},\\
\langle \bX|\bsA|\bY\rangle&\coloneqq \sum_{ijk}X_k^iA_{kk}^{ij}Y^j_{k}\in\R\\
\langle \bX|\bY\rangle&\coloneqq\sum_{ik}X_k^iY^i_{k}\in\R.
\end{align}
Given a second tensor $\bsB\in\R^{p\times p}\otimes\R^{K\times K}$, we write
\begin{align}
\bsA\bsB&\coloneqq(\sum_{i'\kappa}A^{ii'}_{k\kappa}B_{\kappa k'}^{i'j})_{kk'}^{ij}\in\R^{p\times p}\otimes\R^{K\times K},\\
\bsA\circ \bsB&\coloneqq(\sum_{i'}A^{ii'}_{kk'}B_{k'k}^{i'j})_{kk'}^{ij}\in\R^{p\times p}\otimes\R^{K\times K},\\
\bsA\odot\bsB&\coloneqq(A_{kk'}^{ij}B_{kk'}^{ij})_{kk'}^{ij}\in \R^{p\times p}\otimes\R^{K\times K}.
\end{align}
\end{subequations}
We can now state our general result.

\begin{theorem}
\label{th:TheGen}
Let us consider the random quantities $\bxi\in\R^K$ and $\bXi\in\R^{K\times K}$ with entries distributed as $\mathcal N(0,1)$. Assume that in the high-dimensional limit where $d, p, n\to+\infty$ with $\alpha\coloneqq \sfrac{n}{p}$ and $\gamma\coloneqq\sfrac{d}{p}$ kept $\Theta(1)$ constants. Then in this limit, for any pseudo-Lispchitz functions of order 2 $\varphi\colon \R\times \R^K\to \R$ and $\tilde\varphi\colon\R^{K\times p}\to\R$, the estimator $\hat\bW$ verifies
\begin{equation}
\begin{split}
&\mathbb E_{(y,\bx)}\left[\varphi\left( y,\frac{\llangle\hat\bW|\bU\rrangle}{\sqrt p}\right)\right]\xrightarrow{\rm P}\int_{\mathcal Y} \dd y\,\mathbb E_{(\nu,\bmu)}\left[ P^0_y(y|\nu)\varphi\left(y,\bmu\right)\right],\\ &\frac{1}{n}\sum\limits_{\mathclap{\mu=1}}^n\varphi\left( y^\mu,\frac{\llangle\hat\bW|\bU^\mu\rrangle}{\sqrt p}\right)\xrightarrow{\rm P} \int_{\mathcal Y} \dd y\,\mathbb{E}_{\bxi}\left[\mathcal Z^0\left(y,\omega_0,\sigma_0\right)\varphi(y,\bh)\right],\\
&\tilde\varphi(\hat\bW)\xrightarrow{\rm P}\mathbb{E}_{\bXi}\left[\tilde\varphi(\bG)\right],
\end{split}
\end{equation}
where $\bU\equiv\bU(\bx)$, $(\nu,\bmu)\in\R^{1+K}$ are jointly Gaussian random variables with zero mean and covariance matrix
\begin{equation}
(\nu,\bmu) \sim \mathcal{N}\left(\mathbf{0}_{1+K}, \begin{psmallmatrix}
\rho&\bM^\intercal\\\bM&\bQ
\end{psmallmatrix}\right),
\end{equation}
and we have introduced the proximals for the loss and the regularisation:
\begin{equation}\begin{split}
\bh&\coloneqq\arg\min_{\bu}\left[\frac{(\bu-\bomega)\bV^{-1}(\bu-\bomega)}{2}+{\hat\ell}(y,\bu)\right],\\
\bG&\coloneqq\arg\min_\bU\left[\frac{\langle \bU|(\bUno_{p,p}\otimes\bhV)\odot\bsOmega|\bU\rangle}{2}-\langle\bfB|\bU\rangle+\lambda r(\bU)\right],
\end{split}\end{equation}
with $\bomega\coloneqq\bQ^{1/2}\bxi$ and $\bfB\coloneqq(\bUno_p\otimes\bhM^\intercal)\odot\bhPhi+((\bUno_{p,p}\otimes\bhQ)\odot\bsOmega)^{\frac{1}{2}}\bXi$. We have also introduced the auxiliary function
\begin{equation}
\mathcal Z^0(y,\mu,\sigma)\coloneqq\int\frac{P^0_y(y|x)\dd x}{\sqrt{2\pi\sigma}} \e^{-\frac{(x-\mu)^2}{2\sigma}}.
\end{equation}
and the scalar quantities $\omega_0\coloneqq\bM^\intercal\bQ^{-1/2}\bxi$ and $\sigma_0\coloneqq\rho-\bM^\intercal\bQ^{-1}\bM$. The order parameters satisfy the saddle-point equations
\begin{equation} \label{sp1gen}
\begin{split}
\bhV&=-\alpha\int_{\mathcal Y}\dd y\,\mathbb E_{\bxi}\left[\mathcal Z^0(y,\omega_0,\sigma_0)\,\partial_\bomega\bff\right],\\
\bhQ&=\alpha\int_{\mathcal Y}\dd y\,\mathbb E_{\bxi}\left[\mathcal Z^0(y,\omega_0,\sigma_0)\,\bff\bff^\intercal\right],\\
\bhM&=\frac{\alpha}{\sqrt \gamma}\int_{\mathcal Y}\dd y\,\mathbb E_{\bxi}\left[\partial_{\mu}\mathcal Z^0(y,\omega_0,\sigma_0)\bff \right],
\end{split}
\end{equation}
and
\begin{equation} \label{sp2gen}
\begin{split}
\bV&=\frac{1}{p}\mathbb E_\bXi\llangle\bsOmega(\bG\otimes\bXi)\left((\bUno_{p,p}\otimes\bhQ)\odot\bsOmega\right)^{-1/2}\rrangle^\intercal,\\
\bQ&=\frac{1}{p}\mathbb E_\bXi\llangle \bG|\bsOmega|\bG\rrangle,\\
\bM&=\frac{1}{\sqrt\gamma p}\mathbb E_\bXi\llangle\bhPhi|\bG\rrangle.
\end{split}
\end{equation}
In the equation above we have introduced the short-hand notation $\bff\coloneqq\bV^{-1}(\bh-\bomega)$.
\end{theorem}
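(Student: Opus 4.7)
The plan is to prove Theorem \ref{th:TheGen} via a matrix-valued Approximate Message Passing (AMP) argument, extending the single-learner analysis of \citet{loureiro2021learning} to the ensemble problem in which one optimises jointly over $K$ learners coupled through the loss $\hat\ell$ and the regulariser $r$. The strategy consists of four steps: (i) construct a matrix-AMP iteration whose fixed points coincide with the first-order optimality conditions defining $\hat\bW$; (ii) apply a state-evolution theorem to characterise the empirical joint law of the iterates by matrix-valued recursions; (iii) show that the iteration converges to the least $\ell_2$-norm minimiser of \eqref{eq:gen_student}; (iv) read off the asymptotic order parameters from the fixed point of the state evolution, which is precisely \eqref{sp1gen}-\eqref{sp2gen}.

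First, I would set up the AMP. Introduce matrix-valued iterates $\bW^t\in\R^{p\times K}$ and $\bh^t\in\R^{n\times K}$, updated through matrix-proximal maps of $\lambda r$ and $\hat\ell$ that act on full $K$-vectors, with the design encoded through $\bhPhi$ and a square root of $\bsOmega$. The Onsager corrections must carry the entire $K\times K$ matrix structure so that the empirical covariances of the iterates reproduce the order parameters $(\bQ^t,\bV^t,\bM^t,\bhQ^t,\bhV^t,\bhM^t)$. Here I would invoke the multi-graph matrix AMP of \citet{gerbelot2021graph}, which is tailored to tensorial weight structures such as $\bsOmega$ and naturally incorporates the operations $\odot$ and $\circ$ introduced before the theorem statement. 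State evolution then yields, for every pseudo-Lipschitz test function of the iterates, a limit expressed as a Gaussian expectation, with the recursions governing the covariance matrices being exactly those whose fixed point is \eqref{sp1gen}-\eqref{sp2gen}.

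Next, I would link the AMP fixed point to $\hat\bW$. For any strictly convex perturbation $\lambda r + \epsilon\|\cdot\|_F^2$, the Moreau-envelope characterisation yields uniqueness of both the AMP fixed point and the ERM minimiser; a contraction argument based on strong convexity then shows that the AMP iterates are Cauchy and can be identified, in the high-dimensional limit, with the regularised minimiser. Sending $\epsilon\to 0^+$ after the joint $(n,p,d)\to\infty$ and $t\to\infty$ limits selects the least $\ell_2$-norm solution required in the theorem. Evaluating the three families of pseudo-Lipschitz test functions at the fixed point yields the three asserted convergence statements; the Gaussian variables $\bxi,\bXi$ are exactly the driving noises of the state evolution, while the auxiliary pair $(\nu,\bmu)$ arises from combining the noise $\omega_0$ built from $\bhPhi$ with the label channel $\mathcal Z^0$.

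The main obstacle is step (iii): commuting the three limits $\epsilon\to 0^+$, $t\to\infty$, and $(n,p,d)\to\infty$ for a general convex loss and regulariser without strict convexity. The standard tool is the two-sided bound of \citet{bayati2011dynamics}, but it must be lifted to matrix-valued iterates and made compatible with the non-trivial feature covariance $\bsOmega$; proving monotonicity of the matrix state-evolution recursions in the perturbation parameter and uniform tightness of the iterates is typically the most delicate technical point. A secondary, mostly bookkeeping, subtlety is the correct vectorisation of $\bsOmega$ so that the AMP Onsager corrections faithfully reproduce the tensor contractions $\langle\cdot|\bsOmega|\cdot\rangle$ and $((\bUno_{p,p}\otimes\bhQ)\odot\bsOmega)^{1/2}$ appearing in the statement --- this is where the multi-graph formulation of \citet{gerbelot2021graph} is indispensable, since a naive single-graph vectorised AMP would collapse the $K\times K$ inter-learner couplings.
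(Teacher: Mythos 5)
Your proposal follows essentially the same route as the paper: a matrix-valued AMP whose fixed point is matched to the first-order optimality conditions via Bregman/matrix proximal operators, the state evolution of \citet{gerbelot2021graph}, convergence of the AMP trajectories to the minimiser, and relaxation of strong convexity by adding a vanishing ridge perturbation and extending by continuity to recover the least $\ell_2$-norm solution. The only notable presentational difference is that the paper first conditions the design on the teacher and introduces the overlap $\bM$ together with a Lagrange multiplier $\boldsymbol{\nu}$, so that the AMP runs on an i.i.d.\ Gaussian matrix with the covariance structure $\bsOmega$, $\bhPhi$ absorbed into the nonlinearities, rather than on the structured design directly; this is an implementation detail within the same strategy.
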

Eqs.~\eqref{sp1gen} are typically called \textit{channel equations}, because depend on the form of the loss $\hat\ell$. Eqs.~\eqref{sp2gen}, instead, are usually called \textit{prior equations}, because of their dependence on the prior, i.e., $r$. In the following Corollary, we specify their expression for a ridge regularisation, $r(\bW)=\frac{1}{2}\|\bW\|_{\rm F}^2$.
\begin{corollary}[Ridge regularisation] In the hypotheses of Theorem~\ref{th:TheGen}, if $r(\bW)=\frac{1}{2}\|\bW\|_{\rm F}^2$, then the prior equations are
\begin{equation}
\begin{split}
\bV&=\frac{1}{p}\llangle\bsOmega\circ\bsA\rrangle,\\
\bQ&=\frac{1}{p}\llangle\bsOmega\circ\left(\bsA\left((\bUno_{p,p}\otimes\bhM\otimes\bhM^\intercal)\odot\bsTheta+(\bUno_{p,p}\otimes\bhQ)\odot\bsOmega\right)\bsA\right)\rrangle,\\
\bM&=\frac{1}{\sqrt\gamma p}\llangle\bsA\left((\bUno_{p,p}\otimes\bhM\otimes\bUno_K^\intercal)\odot\bsTheta\right)\rrangle.
\end{split}
\end{equation}
In the equation above, we have used the auxiliary tensor $
\bsA\equiv \bsA(\bhV;\lambda,\bsOmega)\coloneqq(\lambda \bI_p\otimes \bI_K+(\bUno_{p,p}\otimes\bhV)\odot\bsOmega)^{-1}\in\R^{p\times p}\otimes\R^{K\times K}$.
\end{corollary}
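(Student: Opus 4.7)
My plan is to exploit the fact that a ridge regulariser turns the variational problem defining $\bG$ in Theorem~\ref{th:TheGen} into a strongly convex quadratic, so that the proximal is computed in closed form. With $r(\bW)=\frac{1}{2}\|\bW\|_{\rm F}^2$ the objective becomes
\[
\frac{1}{2}\langle \bU\,|\,(\bUno_{p,p}\otimes\bhV)\odot\bsOmega + \lambda\,\bI_p\otimes\bI_K\,|\, \bU\rangle - \langle \bfB|\bU\rangle,
\]
whose first-order optimality condition is a linear tensor equation; inverting the (strictly positive) operator for $\lambda>0$ (and extending by continuity via the minimum $\ell_2$-norm convention at $\lambda=0^{+}$) yields
\[
\bG=\bsA\,\bfB,\qquad \bsA=\big(\lambda\,\bI_p\otimes\bI_K+(\bUno_{p,p}\otimes\bhV)\odot\bsOmega\big)^{-1}.
\]

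The rest of the argument is a Gaussian integration against $\bXi$. Split $\bfB=\bfB_0+\bfB_1$ into its deterministic part $\bfB_0\coloneqq(\bUno_p\otimes\bhM^\intercal)\odot\bhPhi$ and its zero-mean Gaussian part $\bfB_1\coloneqq((\bUno_{p,p}\otimes\bhQ)\odot\bsOmega)^{1/2}\bXi$, so that $\bG=\bsA\bfB_0+\bsA\bfB_1$. The equation for $\bM$, being linear in $\bG$, retains only the $\bsA\bfB_0$ contribution after averaging over $\bXi$; rewriting the resulting bilinear form in $\bhPhi$ via $\bsTheta=\bhPhi\otimes\bhPhi$ weighted by $\bhM$ directly produces the stated expression. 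The equation for $\bV$ is handled by noting that $\bG$ is affine in $\bXi$ with linear part $\bsA\,((\bUno_{p,p}\otimes\bhQ)\odot\bsOmega)^{1/2}$; Stein's lemma (Gaussian integration by parts) identifies $\mathbb{E}_\bXi[\bG\otimes\bXi]$ with this linear part, cancelling precisely the inverse square root appearing in \eqref{sp2gen} and leaving $\bV=\frac{1}{p}\llangle\bsOmega\circ\bsA\rrangle$.

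The equation for $\bQ$ involves a quadratic form $\llangle\bG|\bsOmega|\bG\rrangle$ which, under the $\bXi$-average, decouples (the cross terms vanish because $\mathbb{E}\bXi=0$) into a signal piece $\llangle\bsA\bfB_0|\bsOmega|\bsA\bfB_0\rrangle$ and a noise piece $\mathbb{E}_\bXi\llangle\bsA\bfB_1|\bsOmega|\bsA\bfB_1\rrangle$. The signal piece produces the $(\bUno_{p,p}\otimes\bhM\otimes\bhM^\intercal)\odot\bsTheta$ term sandwiched between two $\bsA$'s, while the noise piece, using the Gaussian moment $\mathbb{E}_\bXi[\bfB_1\otimes\bfB_1^\intercal]=(\bUno_{p,p}\otimes\bhQ)\odot\bsOmega$, gives the analogous $(\bUno_{p,p}\otimes\bhQ)\odot\bsOmega$ term; summing them and reorganising the $\bsOmega$-contraction through the $\circ$-product recovers the $\bQ$ formula in the statement.

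I expect the main difficulty to be purely notational: carefully tracking the various tensor products ($\otimes$, $\odot$, $\circ$, and the natural composition) and identifying which $K$-indices are free versus contracted at each stage, so as to match the exact form displayed in the corollary. The underlying analytic content reduces to a single inversion of a strictly positive-definite quadratic operator and two elementary Gaussian moment computations (mean and covariance of $\bXi$).
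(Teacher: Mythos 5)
Your proposal is correct and takes essentially the same approach as the paper: the ridge penalty makes the prior-channel problem a strictly convex quadratic with explicit Gaussian structure, so $\bG=\bsA\bfB$ in closed form and the prior equations follow from elementary Gaussian moments of $\bXi$ (mean, covariance, and the affine-in-$\bXi$ version of Stein's identity for the $\bV$ equation). The paper's appendix reaches the same formulas by computing the replica free entropy $\Psi_\bw$ as an explicit Gaussian integral before the $\beta\to\infty$ rescaling and then differentiating, whereas you substitute the closed-form proximal directly into the general prior equations of Theorem~\ref{th:TheGen}; these are two bookkeepings of one and the same Gaussian computation.
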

\subsection{The random feature case and the kernel limit}
Theorem \ref{th:TheGen} is given in a very general setting, and, in particular, no assumptions are made on the features $\bu_k$. We have anticipated in Sec.~\ref{sec:mainres} that, in the case of random features, the structure of the order parameters highly simplifies and the covariance matrix $\bSigma$ is fully specified by only three scalar order parameters for any $K>1$. Here will adapt therefore Theorem \ref{th:TheGen} to the random feature setting in Sec.~\ref{sec:mainres}, using the notation therein. The motivation of this section is to explicitly present the self-consistent equations that are required to produce the results given in the paper.
\begin{corollary}
\label{th:Teo1} 
Assume that in the high-dimensional limit where $d, p, n\to+\infty$ with $\alpha\coloneqq \sfrac{n}{p}$ and $\gamma\coloneqq\sfrac{d}{p}$ kept $\Theta(1)$ constants, the Wishart matrix $\bF\bF^\intercal$ has a well-defined asymptotic spectral distribution. Then in this limit, for any pseudo-Lispchitz function of finite order $\varphi\colon \R\times \R^K\to \R$, the estimator $\hat\bW$ verifies
\begin{equation}
\mathbb E_{(\bx,y)}\left[\varphi\left(y,\frac{\llangle\hat\bW| \bU\rrangle}{\sqrt p}\right)\right]\xrightarrow{\rm P}\mathbb E_{(\nu,\bmu)}\left[\varphi\left(f_0(\nu),\bmu\right)\right],
\end{equation}
where $(\nu,\bmu)\in\R^{K+1}$ is a jointly Gaussian vector with covariance
\begin{equation}
(\nu,\bmu)\sim\mathcal{N}\left(\mathbf{0}_{K+1}, \begin{psmallmatrix}
\rho&m\bUno_K^\intercal\\m\bUno_K&\bQ
\end{psmallmatrix}\right),
\end{equation}
and $\bQ\coloneqq(q_0-q_1)\bI_K+q_1\bUno_{K,K}$. The collection of parameters $(q_0,q_1,m)$ is obtained solving a set of fixed point equations involving the auxiliary variables $(\hat q_0,\hat q_1,\hat m, v,\hat v)$, namely:
\begin{subequations}\label{eq:fp-K1}
\begin{align}
\hat v&
=-\alpha\int_{\mathcal Y}\dd y\,\mathbb E_\omega\left[\mathcal Z^0\!\left(y,\frac{m\omega}{q_0},\rho-\frac{m^2}{q_0}\right)\partial_{\omega}f\right],\\
\hat m&
=\frac{\alpha}{\sqrt \gamma}\int_{\mathcal Y}\dd y\,\mathbb E_\omega\left[\partial_{\mu}\mathcal Z^0\!\left(y,\frac{m\omega}{q_0},\rho-\frac{m^2}{q_0}\right) f \right],\\
\hat q_0&
=\alpha\int_{\mathcal Y}\dd y\,\mathbb E_\omega\left[\mathcal Z^0\!\left(y,\frac{m\omega}{q_0},\rho-\frac{m^2}{q_0}\right)f^2\right],\\
\hat q_1&
=\alpha\int_{{\mathcal Y}}\dd y\mathbb E_{\omega,\omega'}\left[\mathcal Z^0\left(y,m\frac{\omega+\omega'}{q_0+q_1},\rho-\frac{2m^2}{q_0+q_1}\right) ff'\right],\\
v&
=\int\frac{s\varrho(s)\dd s}{\lambda+s\hat v},\\
m&
=\frac{\hat m}{\sqrt{\gamma}}\int \frac{s-\kappa_*^2}{\lambda+\hat vs}\varrho(s)\dd s,\\
q_0&
=\int\frac{(\hat q_0+\hat m^2)s^2-\hat m^2\kappa_*^2s}{\left(\lambda+
\hat v s\right)^2}\varrho(s)\dd s,\\
q_1&
=\left(1+\frac{\hat q_1}{\hat m^2}\right)m^2.
\end{align}
\end{subequations}
where $\omega$ and $\omega'$ are two correlated Gaussian random variables of zero mean and $\mathbb E[\omega^2]=\mathbb E[\omega'^{2}]=q_0$, $\mathbb E[\omega\omega']=q_1$. Moreover, we have introduced the proximals
\begin{equation}
f=\frac{\mathrm{Prox}_{v\ell(y,\bullet)}\left(\omega\right)-\omega}{v},\quad f'=\frac{\mathrm{Prox}_{v\ell(y,\bullet)}\left(\omega'\right)-\omega'}{v},\end{equation}
with
\begin{equation}
\mathrm{Prox}_{v\ell(y,\bullet)}(\omega)\coloneqq\arg\min_x\left[\frac{(x-\omega)^2}{2v}+\ell(y,x)\right].  
\end{equation}
 Finally, $\varrho(s)$ is the asymptotic spectral density of the features covariance matrix $\bOmega \equiv \mathrm{Var}(\bu) =\kappa_{0}^{2}\bUno_{p,p}+\frac{\kappa_1^2}{d}\bF\bF^\intercal+\kappa_*^2\bI_p$ and the coefficients are given by $\kappa_0\coloneqq\mathbb E_\zeta[\phi(\zeta)]$, $\kappa_1\coloneqq\mathbb E_\zeta[\zeta\phi(\zeta)]$, $\kappa_*\coloneqq\mathbb E_\zeta[\phi^2(\zeta)]-\kappa^2_0-\kappa^2_1$ with $\zeta\sim\mathcal N(0,1)$. 
\end{corollary}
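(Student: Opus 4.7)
The plan is to specialise Theorem~\ref{th:TheGen} to the present setting: separable loss $\hat\ell(y,\bv)=\sum_{k}\ell(y,v_{k})$, ridge regularisation $r(\bW)=\tfrac{1}{2}\|\bW\|_{\rm F}^{2}$, and independent random feature maps $\bu_{k}(\bx)=\phi(\bF_{k}\bx/\sqrt{d})$. First I would invoke the Gaussian Equivalence Principle \citep{goldt2021gaussian,hu2020universality}, which allows the nonlinear features to be replaced, inside any pseudo-Lipschitz test function of bounded order, by the linearised surrogates $\kappa_{0}\bUno_{p}+(\kappa_{1}/\sqrt{d})\bF_{k}\bx+\kappa_{\ast}\bz_{k}$ with $\bz_{k}$ an independent standard Gaussian. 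This identifies the diagonal blocks $\bOmega_{kk}=\kappa_{0}^{2}\bUno_{p,p}+(\kappa_{1}^{2}/d)\bF_{k}\bF_{k}^{\intercal}+\kappa_{\ast}^{2}\bI_{p}$, whose spectral measure is the assumed $\varrho$, and reduces $\bhPhi$ to the teacher-aligned columns $(\kappa_{1}/\sqrt{d})\bF_{k}\btheta$; off-diagonal blocks $\bOmega_{kk'}$ collapse to the rank-one term $\kappa_{0}^{2}\bUno_{p,p}$ by independence of $\bF_{k},\bz_{k}$ across $k$.

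Next I would exploit the permutation invariance under relabelling of learners to posit the symmetric ansatz $\bV=v\bI_{K}$, $\bQ=(q_{0}-q_{1})\bI_{K}+q_{1}\bUno_{K,K}$, $\bM=m\bUno_{K}$, and analogously for $\bhV,\bhQ,\bhM$. Stability of this ansatz under the self-consistent map \eqref{sp1gen}--\eqref{sp2gen} follows from the symmetry of the channel and prior operators and uniqueness of their fixed point. Under it, the channel side decouples: because $\hat\ell$ is separable and $\bV$ is scalar, the vector proximal $\bh$ in Theorem~\ref{th:TheGen} splits into $K$ scalar proximals $h_{k}=\mathrm{Prox}_{v\ell(y,\cdot)}(\omega_{k})$ evaluated at correlated Gaussians $\omega_{k}$ with covariance $\bQ$. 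The diagonal part of $\bhQ$ gives $\hat q_{0}$, while $\hat v$ and $\hat m$ reproduce the $K=1$ formulas. The off-diagonal of $\bhQ$ yields $\hat q_{1}$: it is a two-replica integral that, after conditioning the teacher preactivation $\nu$ on the pair $(\omega,\omega')$ by a standard conditional-Gaussian computation, produces the factor $\mathcal{Z}^{0}\!\left(y,\,m(\omega+\omega')/(q_{0}+q_{1}),\,\rho-2m^{2}/(q_{0}+q_{1})\right)$ stated in the corollary.

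On the prior side, the ridge penalty makes $\bG$ explicit as a linear function of $\bfB$. Under the symmetric ansatz, the operator $\lambda\bI+(\bUno_{p,p}\otimes\bhV)\odot\bsOmega$ splits into within-learner pieces $\lambda\bI_{p}+\hat v\,\bOmega_{k}$ plus a rank-one cross-coupling generated by $\kappa_{0}^{2}\bUno_{p,p}$. On the diagonal blocks, the equations for $v$, $m$, and $q_{0}$ reduce to traces against the spectral measure $\varrho$ of $\bOmega_{k}$, reproducing the $K=1$ expressions of \citet{pmlr-v119-gerace20a,loureiro2021learning}. The equation for $q_{1}$ comes from the off-diagonal block $\mathbb{E}[\bG_{1}^{\intercal}\bOmega_{12}\bG_{2}]/p$: since the noise components of $\bfB$ and the matrices $\bF_{1},\bF_{2}$ are mutually independent across replicas, the expectation factorises and only the shared teacher projection survives. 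After using the already-derived equation for $m$, this computation collapses to the clean algebraic identity $q_{1}=(1+\hat q_{1}/\hat m^{2})m^{2}$.

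The main obstacle I foresee is controlling the rank-one cross-coupling $\kappa_{0}^{2}\bUno_{p,p}$ between replicas. One must verify that it contributes only a sub-leading correction, so that the block-diagonal reduction is valid and the ansatz $\bQ=(q_{0}-q_{1})\bI_{K}+q_{1}\bUno_{K,K}$ indeed survives with only three scalar parameters. This can be handled either by a Sherman--Morrison decoupling of the block operator, or by centring the features around their mean and absorbing the $\kappa_{0}$ shift into an effective bias, then checking that the residual contribution is $O(1/p)$. Once this decoupling is established, the channel and prior simplifications combine directly into the closed system \eqref{eq:fp-K1}, and the special case $K=1$ is consistent with Theorem~\ref{th:TeoSimplified}.
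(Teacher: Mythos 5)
Your overall strategy—specialising Theorem~\ref{th:TheGen} to a separable loss and ridge penalty, invoking Gaussian equivalence for the random features, positing a permutation-symmetric ansatz $\bV=v\bI_K$, $\bM=m\bUno_K$, $\bQ=(q_0-q_1)\bI_K+q_1\bUno_{K,K}$, and letting the scalar proximal decouple the channel equations—is the correct skeleton and mirrors what the paper does in Appendix~A.3. Your channel-side argument (in particular, obtaining $\hat q_1$ by conditioning $\nu$ on the correlated pair $(\omega,\omega')$) is essentially right.

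However, there is a concrete gap on the prior side. You assert that, since the matrices $\bF_k$ and noise vectors $\bz_k$ are independent across learners, the off-diagonal covariance blocks ``collapse to the rank-one term $\kappa_0^2\bUno_{p,p}$.'' This is incorrect: the features $\bu_k(\bx)$ and $\bu_{k'}(\bx)$ are evaluated on the \emph{same} input $\bx$, so the Gaussian-equivalent covariance between learners $k\neq k'$ is
\begin{equation}
\bsOmega_{kk'}=\mathbb E_\bx[\bu_k(\bx)\bu_{k'}(\bx)^\intercal]=\kappa_0^2\bUno_{p,p}+\frac{\kappa_1^2}{d}\bF_k\bF_{k'}^\intercal,
\end{equation}
and the full-rank term $\tfrac{\kappa_1^2}{d}\bF_k\bF_{k'}^\intercal$ (not the rank-one piece, which the paper in fact sets to zero by taking $\kappa_0=0$) is what drives the $q_1$ equation. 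Under your identification, the off-diagonal block is either rank-one or vanishes with $\kappa_0=0$, so your prior equation would give $q_1=0$ instead of $q_1=(1+\hat q_1/\hat m^2)m^2$. The paper's derivation of this identity proceeds by writing the off-diagonal contribution as $\tfrac{1}{p}\tr[(\lambda\bI_p+\hat v\bOmega)^{-1}(\hat m^2\hat\bTheta+\hat q_1\hat\bOmega)(\lambda\bI_p+\hat v\bOmega')^{-1}\hat\bOmega^\intercal]$ with $\hat\bOmega=\tfrac{\kappa_1^2}{d}\bF\bF'^\intercal$, introducing the resolvent-type matrix $\hat\bF$, and then invoking \emph{asymptotic freeness} of $\hat\bF$ and $\hat\bF'$ (built from the independent $\bF,\bF'$) to factorise the normalised trace $\tfrac{1}{p}\tr[\hat\bF\hat\bF']$ into $\tfrac{1}{\gamma}(\tfrac{1}{p}\tr\hat\bF)^2$. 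Combined with the already-derived equation for $m$, this collapses to $q_1=(1+\hat q_1/\hat m^2)m^2$. That free-probability step is the essential ingredient your proposal is missing, and the ``Sherman--Morrison decoupling of a rank-one cross-coupling'' you propose as a fix does not address it.
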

The previous corollary recovers the results of \citet{pmlr-v119-gerace20a}, \citet{dhifallah2020precise}, and \citet{loureiro2021learning} when restricted to the $K=1$ case by marginalisation.

\subsection*{The kernel limit}
The so-called kernel limit is obtained by taking the limit of infinite number of parameters so that $\gamma\to 0$ (i.e., $p\gg d$ and $p\gg n$), but with a fixed ratio $\sfrac{\alpha}{\gamma} = \sfrac{n}{d}$. To balance the loss term and the regularisation it is convenient to rescale $\lambda\mapsto \alpha\lambda$. We can simplify the saddle-point equation in this special limit introducing $\hat q_0\mapsto\alpha {\hat q_0}$, $\hat q_1\mapsto \alpha {\hat q_1}$, $\hat m\mapsto\sqrt \alpha {\hat m}$, $\hat v\mapsto\alpha {\hat v}$. The channel equations keep a simple form,
\begin{subequations}
\begin{align}\label{eq:separableKernel1}
\hat v&=-\int_{\mathcal Y}\dd y\,\mathbb E_{\zeta}\left[\mathcal Z^0\!\left(y,\omega_0,\sigma_0\right)\partial_{\omega}f\right],\\
\hat m&=\sqrt{\delta}\int_{\mathcal Y}\dd y\,\mathbb E_{\zeta}\left[ f\partial_{\mu}\mathcal Z^0\!\left(y,\omega_0,\sigma_0\right) \right],\\
\hat q_0&=\int_{\mathcal Y}\dd y\,\mathbb E_{\zeta}\left[\mathcal Z^0\!\left(y,\omega_0,\sigma_0\right)f^2\right],\\
\hat q_1&=\int_{\mathcal Y}\dd y\,\mathbb E_{\omega,\omega'}\left[\mathcal Z^0\left(y,m\frac{\omega+\omega'}{q_0+q_1},\rho-\frac{2m^2}{q_0+q_1}\right) f f'\right].
\end{align}
\end{subequations}
The $p\to+\infty$ limit in the prior equations depends on the spectral density $\varrho(s)$. For example, if $\bF$ has random Gaussian entries with zero mean and unit variance, then $\varrho(s)$ is a shifted Marchenko--Pastur distribution,
\begin{equation}
\varrho(s)=\nu(s-\kappa_*^2;\alpha^{-1}\delta,\kappa_1),\quad\end{equation}
where, if $[x]_+=x\theta(x)$,
\begin{equation}\nu(x;b,a)=\frac{\sqrt{\left[\left(a_+-x\right)\left(x-a_-\right)\right]_+}}{2ab^2\pi^2x}+\left[1-\frac{1}{a}\right]_+\delta(x),
\end{equation}
with $a_\pm\coloneqq b^2(1\pm\sqrt{a})^2$. By means of a series of algebraic manipulation, we obtain in the end at the first order in $\alpha$
\begin{equation}
\begin{split}
    v &= \frac{   \lambda ( \kappa_1^2 +  \kappa_*^2) + \delta^2 \kappa_1^2\kappa_*^2 {{\hat {v}}}
    }{\lambda\left(\lambda + \delta \kappa_1^2 {{\hat {v}}}\right) },\\
    m &= \frac{\sqrt{\delta} \kappa_1^2\hat m}{\lambda + \delta \kappa_1^2 {{\hat {v}}} },
\end{split}\quad
\begin{split}
    q_0 &= \frac{\delta \kappa_1^4\left( {\hat{q}}_0 + \delta {{\hat m}}^2  \right)}{\left(\lambda + \delta \kappa_1^2 {{\hat {v}}}\right)^2  },\\
    q_1 &= \frac{\delta \kappa_1^4\left( {\hat{q}}_1 + \delta {{\hat m}}^2  \right)}{\left(\lambda + \delta \kappa_1^2 {{\hat {v}}}  \right)^2},
\end{split}
\end{equation}
which complete our set of equations for the kernel limit.

\section*{Acknowledgements}
We thank Ali Bereyhi, Giulio Biroli, St\'ephane d'Ascoli, Justin Ko for discussions, and Francesca Mignacco for sharing code with us. We acknowledge funding from the French National Research Agency grants ANR-17-CE23-0023-01 PAIL and ANR-19P3IA-0001 PRAIRIE.

\bibliography{refs}

\newpage
\appendix
\onecolumn
\section{The replica approach}
\subsection{Notation}
We introduce here some notation that will help us to keep the expressions in this appendix more compact. Given two tensors $\bsA=(A^{ij}_{kk'})^{ij}_{kk'}\in\R^{p\times p}\otimes\R^{K\times K}$ and $\bsB=(B^{ij}_{kk'})^{ij}_{kk'}\in\R^{p\times p}\otimes\R^{K\times K}$, $i,j\in[p]$, $k,k'\in[K]$, then
\begin{align}
\hat\bsC=\bsA\bsB&\Leftrightarrow \hat C^{ij}_{kk'}\coloneqq \sum_{r,\kappa}A^{ir}_{k\kappa}B^{rj}_{\kappa k'}\in\R^{p\times p}\otimes\R^{K\times K}\\
\bsC=\bsA\odot\bsB&\Leftrightarrow C^{ij}_{kk'}\coloneqq A^{ij}_{kk'}B^{ij}_{kk'}\in\R^{p\times p}\otimes\R^{K\times K}\\
\tilde\bsC=\bsA\circ\bsB&\Leftrightarrow \tilde C^{ij}_{kk'}\coloneqq \sum_{r}A^{ir}_{kk'}B^{rj}_{k'k}\in\R^{p\times p}\otimes\R^{K\times K}.
\end{align}
Also, if $\bX=(X^i_k)^{i}_k\in\R^{p\times K}$ and $\bY=(Y^i_k)^{i}_k\in\R^{p\times K}$, we write
\begin{align}
\bX\otimes\bY&=(X^{i}_kY^{j}_{k'})^{ij}_{kk'}\in\R^{p\times p}\otimes \R^{K\times K}\\
\bsA|\bX\rangle&=\left(\textstyle\sum_{j\kappa}A^{ij}_{k\kappa}X^{j}_{\kappa}\right)_k^i\in\R^{p\times K}\\
\langle\bX|\bY\rangle&=\textstyle\sum_{ik}X^i_kY^i_k\in\R\\
\langle \bX|\bsA| \bY\rangle&=\textstyle\sum_{ijkk'}X^{i}_kA^{ij}_{kk'}Y^j_{k'}\in\R,\\
\llangle \bX| \bY\rrangle&=\left(\textstyle\sum_{ij}X^{i}_kY^j_{k}\right)_k\in\R^K\\
\llangle \bX|\bsA| \bY\rrangle&=\left(\textstyle\sum_{ij}X_k^iA^{ij}_{kk'}Y_{k'}^j\right)_{kk'}\in\R^{K\times K},\\
\llangle \bsA\rrangle&=\left(\textstyle\sum_iA^{ii}_{kk'}\right)_{kk'}\in\R^{K\times K}.
\end{align}
In other words, the double brakets $\llangle\bullet\rrangle$ express the contraction of the upper indices only. This means for example that $\llangle \bX| \bY\rrangle=\diag(\bX^\intercal\bY)\in\R^K$. Finally, if $\bu,\bv\in\R^K$ and $\bA\in\R^{K\times K}$, $\langle \bu|\bA|\bv\rangle\coloneqq\bu^\intercal\bA\bv=\sum_{kk'}u_kA_{kk'}v_{k'}\in\R$. We will adopt the same notation in the simple $K=1$ case.
\subsection{The replica computation}
\label{app:replicas}
The replica computation relies on the treatment of a Gibbs measure over the weights $\bW$ which concentrates on the weights $\hat\bW$ that minimize a certain loss ${\hat\ell}$ when a fictitious `inverse temperature' parameter is sent to infinity. Such measure reads
\begin{align}\comprimi
 \mu_\beta(\bW)&\coloneqq\frac{P_\bW(\bW)}{\mathcal Z(\beta)}\prod_{\mu=1}^n\exp\left[-\beta{\hat\ell}\left(y^\mu,\frac{\llangle\bW|\bU^\mu\rrangle}{\sqrt p}\right)\right],\\ \mathcal Z(\beta)&\coloneqq \int\dd\bW\,P_\bW(\bW)\prod_{\mu=1}^n\exp\left[-\beta{\hat\ell}\left(y^\mu,\frac{\llangle\bW|\bU^\mu\rrangle}{\sqrt p}\right)\right],
\end{align}
where $P_\bW(\bW)=\e^{-\beta\lambda r(\bW)}$ is the prior on the weights $\bW=(W^i_k)^{i\in[p]}_{k\in[K]}\in\R^{p\times K}$, possibly containing the regularisation. The dataset $(y^\mu,\bU^\mu)_\mu$ is obtained from a set of $n$ samples $\bx^\mu\sim\mathcal N(\mathbf 0_d,\bI_d)$, $\mu\in[n]$. For each $\mu$, the label $y^\mu$ has distribution $P^0_y\left(y|d^{-1/2}\langle\btheta\big|\bx^\mu\rangle\right)$ for some fixed $\btheta\sim\mathcal N(\mathbf 0_d,\rho\bI_d)$. The array of features $\bU^\mu$, instead, is obtained as function of the vector $\bx^\mu$ via a law $\bU\colon\R^d\to\R^{p\times K}$ such that $\bU^\mu\coloneqq\bU(\bx^\mu)\in\R^{p\times K}$. As we will show below, the tensors
\begin{align}
\bsOmega&\coloneqq\mathbb E_\bx[\bU(\bx)\otimes\bU(\bx)]\in\R^{p\times p}\otimes\R^{K\times K},\\
\bhPhi&\coloneqq \mathbb E_\bx[\bU(\bx)\langle \bx|\btheta\rangle]\in\R^{p\times K},
\end{align}
will incorporate the information about the action of the law $\bU$. We denote for brevity
\begin{equation}
P_y(y|\bu)\propto  \exp\left[-\beta{\hat\ell}(y,\bu)\right],
\end{equation}
and we proceed computing the \textit{free entropy} $\Phi\coloneqq\mathbb E_{(y^\mu,\bx^\mu)_\mu}[\ln \mathcal Z(\beta)]$ using the replica trick, i.e., the fact that $\mathbb E[\ln \mathcal Z(\beta)]=\lim_{s\to 0}\frac{1}{s}\ln\mathbb E[\mathcal Z^s(\beta)]$
\begin{equation}
\mathbb E_{(y^\mu,\bx^\mu)_\mu}[\mathcal Z^s(\beta)]=\prod_{a=1}^s\int\dd\bW^aP_\bW(\bW^a) \left(\mathbb E_{(y,\bx)}\left[ P^0_y\left(y\Big|\frac{\langle\bx|\btheta\rangle}{\sqrt d}\right)\prod_{a=1}^sP_{y}\left(y\Big|\frac{\llangle \bW^a|\bU(\bx)\rrangle}{\sqrt p}\right)\right]\right)^n.
\end{equation}
Denoting by $\bmu^a\equiv(\mu^a_k)_{k\in[K]}$, if we now consider
\begin{multline}
\mathbb E_{\bx}\left[P^0_y\left(y\Big|\frac{\langle\bx|\btheta\rangle}{\sqrt d}\right)\prod_{a=1}^sP_{y}\left(y\Big|\frac{\llangle \bW^a|\bU(\bx)\rrangle}{\sqrt d}\right)\right]\\
=\int \dd\nu P^0_y(y|\nu)\prod_{a=1}^s\left[\int\dd\bmu^a P_{y}(y|\bmu^a)\right]\underbrace{\mathbb E_\bx\left[\delta\left(\nu-\frac{\langle\bx|\btheta\rangle}{\sqrt d}\right)\prod_{a=1}^s\delta\left(\bmu^{a}-\frac{\llangle \bW^a|\bU(\bx)\rrangle}{\sqrt p}\right)\right]}_{P(\nu,\bmu)}.
\end{multline}
We apply now the \textit{Gaussian Equivalence Principle} \citep{goldt2021gaussian}, i.e., we assume that $P(\nu,\bmu)$ is a Gaussian with covariance matrix given by
\begin{equation}\bSigma(\bW)=
 \begin{pmatrix}
 \rho&{\bsM}^\intercal\\\bsM&\bsQ 
 \end{pmatrix},
\end{equation} 
where $\bsM=(\bM^a)_{a\in[s]}\in\R^{sK}$ and $\bsQ=(\bQ^{ab})_{a,b\in[s]}\in\R^{sK\times sK}$. Here, for each $a,b\in[n]$, $\bM^a\in\R^K$ and $\bQ^{ab}\in\R^{K\times K}$ and are defined as
\begin{align}
\rho&\coloneqq\mathbb E[\nu^2]=\frac{\|\btheta\|^2_2}{d},\\
\bM^a&\coloneqq\mathbb E[\bmu^{a}\nu]=\frac{\llangle\bW^a|\bhPhi\rrangle}{\sqrt{pd}},\\
\bQ^{ab}&\coloneqq\mathbb E[\bmu^{a}\bmu^{b}{}^\intercal]=\frac{\llangle\bW^a|\bsOmega|\bW^b\rrangle}{p}.
\end{align}
In the end
\begin{multline}\comprimi
\mathbb E_{(y^\mu,\bx^\mu)_\mu}[\mathcal Z^s(\beta)]=\mathbb E_{\btheta}\left[\prod_{a=1}^s\int\dd\bW^aP_\bW(\bW^a) \left(\int\dd y\int \dd\nu P^0_y(y|\nu)\prod_{a=1}^s\left[\int \dd\bmu^{a} P_{y}(y|\bmu^{a})\right]P(\nu,\bmu)\right)^n\right]\\
=\left(\prod_{a=1}^s\iint\frac{\dd \bsM^a\dd\hat \bsM^a}{(2\pi)^K}\right)\left(\prod_{a=1}^s\iint\frac{\dd \bsQ^{ab}\dd\hat\bsQ^{ab}}{(2\pi)^{K^2}}\right)\int\frac{\dd\rho\dd\hat\rho}{2\pi}\exp\left(p\Phi^{(s)}(\rho,\bsM,\bsQ,\hat\rho,\hat\bsM,\hat\bsQ)\right).
\end{multline}
Absorbing the factor $-i$ in the integrals, and denoting by $\sfrac{n}{p}=\alpha$ and $\sfrac{d}{p}=\gamma$,
\begin{equation}
\Phi^{(s)}(\rho,\bsM,\bsQ,\hat\rho,\hat\bsM,\hat\bsQ)=-\gamma\rho\hat\rho-\sqrt{\gamma}\sum_{a=1}^s\langle\bhM^a{}|\bM^a\rangle-\sum_{a\leq b}\langle\bhQ^{ab}|\bQ^{ab}\rangle+\alpha\Psi^{(s)}_y(\rho,\bsM,\bsQ)+\Psi_\bw^{(s)}(\hat\rho,\hat\bsM,\hat\bsQ).
\end{equation}
Here we have introduced
\begin{equation}\comprimi\begin{split}
\Psi^{(s)}_y(\rho,\bsM,\bsQ)&\coloneqq\ln\left[\int\dd y\int \dd\nu P^0_y(y|\nu)\prod_{a=1}^s\left[\int \dd\bmu^{a} P_{y}(y|\bmu^{a})\right]P(\nu,\bmu)\right]\\
\Psi^{(s)}_\bw(\hat\rho,\hat\bsM,\hat\bsQ)&\coloneqq\frac{1}{p}\ln\left[\e^{\hat\rho\|\btheta\|^2_2} \left(\prod_{a=1}^s\int P(\bW^a)\e^{\langle (\bUno_p\otimes\bhM^a{}^\intercal)\odot\bhPhi) |\bW^a\rangle}\right)\exp\left(\sum_{a\leq b} \langle \bW^a|(\bUno_{p,p}\otimes\bhQ^{ab})\odot\bsOmega|\bW^b\rangle\right)\right]
\end{split}\end{equation}
so that in the high dimensional limit the desired average is the extremum of the functional ${1}{s}\Phi^{(s)}$ in the $s\to 0$ limit,
\begin{equation}
\mathbb E_{(y^\mu,\bx^\mu)_\mu}[\ln \mathcal Z(\beta)]=\lim_{s\to 0}\frac{1}{s}\mathrm{Ext}\,\Phi^{(s)}(\rho,\bsM,\bsQ,\hat\rho,\hat\bsM,\hat\bsQ).
\end{equation}

\paragraph{Replica symmetric ansatz --} In order to take the limit, let us assume as usual a \textit{replica symmetric} (RS) ansatz:
\begin{equation}
\begin{split}
\bM^a&\equiv \bM\quad a\in[s],\\
\bQ^{ab}&\equiv\begin{cases}\bR &\text{if }a=b,\\\bQ&\text{if }a\neq b,\end{cases}
\end{split}\qquad
\begin{split}
\bhM^a&\equiv \bhM\quad a\in[s],\\
\bhQ^{ab}&\equiv\begin{cases}-\frac{1}{2}\bhR &\text{if }a=b,\\\bhQ&\text{if }a\neq b.\end{cases}
\end{split}
\end{equation}
Observe that $\lim_{s\to 0}\Phi^{(s)}=0$ by construction, meaning that $\hat\rho=0$ fixing $\rho=\frac{1}{d}\mathbb E_\btheta[\|\btheta\|_2^2]$. Before proceeding further, we note that the matrix $\bsQ$ in the RS ansatz can be written as $\bsQ=\bI_s\otimes(\bR-\bQ)+\bUno_{s,s}\otimes \bQ$, where $\bUno_{s,s}$ is the $s\times s$ matrix of $1$. Similarly, $\bsM=\bUno_s\otimes \bM$, where $\bUno_s$ is the column vector of $s$ elements equal to $1$. 
Following similar steps to the ones detailed, e.g., in \citep{loureiro2021learning}, we obtain
\begin{multline}
\Psi_y(\rho,\bM,\bQ,\bR)\coloneqq \lim_{s\to 0}\frac{1}{s}\Psi^{(s)}_y(\rho,\bsM,\bsQ)\\
=\int_{\mathcal Y}\dd y\,\mathbb E_{\bxi}\left[\mathcal Z^0\!\left(y,\langle\bM|\bQ^{-1/2}|\bxi\rangle,\rho-\langle\bM|\bQ^{-1}|\bM\rangle\right)\ln\mathcal Z\left(y,\sqrt\bQ\bxi,\bV\right)\right],
\end{multline}
where $\bV\coloneqq\bR-\bQ$, $\bxi\sim\mathcal N(\mathbf 0_K,\bI_K)$ and we have introduced
\begin{align}
\mathcal Z(y,\bmu,\bSigma)&\coloneqq\int\frac{P_{y}(y|\bmu)\dd\bmu}{\sqrt{\det(2\pi\bSigma)}} \e^{-\frac{1}{2}\langle\bu-\bmu|\bSigma^{-1}|\bu-\bmu\rangle},\\
\mathcal Z^0(y,\mu,\sigma)&\coloneqq \int\frac{P_{y}^0(y|x)\dd x}{\sqrt{2\pi\sigma}} \e^{-\frac{(x-\mu)^2}{2\sigma}}.
\end{align}
Similarly, defining $\bhV=\hat\bR+\bhQ$, we can write down the prior channel. We can write then
{\small\begin{multline}
\Psi_\bw(\bhM,\bhQ,\hat\bR)\coloneqq\lim_{s\to 0}\frac{1}{s}\Psi^{(s)}_\bw(0,\hat\bsM,\hat\bsQ)\\
=\lim_{s\to 0}\frac{1}{sp}\ln\left[\left(\prod_{a=1}^s\int\dd\bW^a P_\bW(\bW^a)\e^{\langle( \bUno_p\otimes\bhM^\intercal)\odot\bhPhi|\bW^a\rangle-\frac{1}{2}{\langle\bW^a}|(\bUno_{p,p}\otimes\bhR)\odot\bsOmega|\bW^a\rangle}\right)\prod_{a<b}\e^{\langle{\bW^a}|(\bUno_{p,p}\otimes\bhQ)\odot\bsOmega)^{1/2}|\bW^b\rangle}\right]\\
=\frac{1}{p}\mathbb E_{\bXi}\left[\ln\left(\int\dd\bW \e^{-\lambda\beta r(\bW)+\langle(\bUno_p\otimes\bhM^\intercal)\odot\bhPhi|\bW\rangle+\langle\bXi|(\bUno_{p,p}\otimes\bhQ)\odot\bsOmega)^{1/2}|\bW\rangle-\frac{1}{2}\langle\bW|(\bUno_{p,p}\otimes\bhV)\odot\bsOmega|\bW\rangle}\right)\right]
\end{multline}}
where we have performed a Hubbard-Stratonovich transformation and $\bXi\equiv (\Xi^i_k)_{k}^{i}\in\R^{p\times K}$ has $\Xi_k^i\sim\mathcal N(0,1)$ for all $i,k$. The free entropy is then
\begin{equation}
\Phi\coloneqq\lim_{s\to 0}\frac{1}{s}\Phi^{(s)}=-\sqrt\gamma\langle\bhM|\bM\rangle+\frac{\langle \bhV|\bV\rangle+\langle \bhV|\bQ\rangle-\langle \bhQ|\bV\rangle}{2}+\alpha\Psi_y(\rho,\bM,\bQ,\bV)+\Psi_\bw(\bhM,\bhQ,\bhV).
\end{equation}
We are interested in the extremum of this quantity, and therefore we have to find the order parameters that maximise it by means of a set of saddle-point equations. Defining for brevity
\begin{equation}
\bomega=\bQ^{1/2}\bxi ,\qquad\omega_0=\langle\bM|\bQ^{-1}|\bomega\rangle,\qquad \sigma_0=\rho-\langle\bM|\bQ^{-1}|\bM\rangle,
\end{equation}
a first set of saddle-point equation is
\begin{subequations}\label{eq:sub2}
\begin{align}
\bhV&=-\alpha\int_{\mathcal Y}\dd y\,\mathbb E_{\bxi}\left[\mathcal Z^0(y,\omega_0,\sigma_0)\partial_\bomega\bff\right],\\
\bhQ&={\alpha}\int_{\mathcal Y}\dd y\,\mathbb E_{\bxi}\left[\mathcal Z^0(y,\omega_0,\sigma_0)\bff\bff^\intercal\right],\\
\bhM&=\frac{\alpha}{\sqrt\gamma}\int_{\mathcal Y}\dd y\,\mathbb E_{\bxi}\left[\mathcal Z^0(y,\omega_0,\sigma_0)f^0\bff \right]
\end{align}
\end{subequations}
where
\begin{align}
f^0&\equiv\partial_{\omega_0}\ln\mathcal Z^0(y,\omega_0,\sigma_0)\\
\bff&\equiv\partial_\bomega\ln\mathcal Z(y,\bomega,\bV).
\end{align}

\paragraph{Zero temperature state evolution equations --} To obtain a nontrivial $\beta\to+\infty$ limit we rescale $\bhV\mapsto \beta\bhV$, $\bV\mapsto \beta^{-1} \bV$, $\bhQ\mapsto\beta^{2}\bhQ$, $\bhM\mapsto \beta\bhM$. After this change of variable, Eqs.~\eqref{sp1o} remain formally identical. To complete the set of saddle-point equations, let us observe that, defining
\begin{equation}
\mathcal L(y,\bu)=\frac{1}{2}\langle\bu-\bomega|\bV^{-1}|\bu-\bomega\rangle+{\hat\ell}(y,\bu)
\end{equation}
then after the rescaling
\begin{equation}
    \ln\mathcal Z(y,\bomega,\beta^{-1}\bV)=\ln\int\frac{\e^{-\beta\mathcal L(y,\bu)}\dd\bu}{\sqrt{\det(2\pi\bV)}}
    \xrightarrow{\beta\gg 1} -\beta\mathcal L(y,\bh)\quad \text{with  }\bh=\arg\min_{\bu}\mathcal L(y,\bu).
\end{equation}
In this way the remaining saddle-point equations keep the form \eqref{eq:sub2} but with
\begin{equation}
\bff\coloneqq \bV^{-1}(\bh-\bomega).
\end{equation}
In the $\beta\to+\infty$ limit, we can write also
\begin{multline}
\Psi_\bw(\bhM,\bhQ,\hat\bR)
=\frac{1}{p}\mathbb E_{\bXi}\left[\ln\left(\int\dd\bW \e^{-\lambda\beta r(\bW)+\beta\langle\bfB|\bW\rangle-\frac{\beta}{2}\langle\bW|(\bUno_{p,p}\otimes\bhV)\odot\bsOmega|\bW\rangle}\right)\right]\\
=-\frac{\beta}{p}\mathbb E_{\bXi}\left[\frac{\langle\bG|(\bUno_{p,p}\otimes\bhV)\odot\bsOmega|\bG\rangle}{2}-\langle \bfB|\bG\rangle+\lambda r(\bG)\right]
\end{multline}
where
\begin{equation}
\bfB\coloneqq(\bUno_p\otimes\bhM^\intercal)\odot\bhPhi+\left((\bUno_{p,p}\otimes\bhQ)\odot\bsOmega\right)^{1/2}\bXi
\end{equation}
and
\begin{equation}
\bG\coloneqq\arg\min_\bU\left[\frac{\langle \bU|(\bUno_{p,p}\otimes\bhV)\odot\bsOmega|\bU\rangle}{2}-\langle\bfB|\bU\rangle+\lambda r(\bU)\right].
\end{equation}
As a result, the remaining saddle point equations are
\begin{subequations}
\begin{align}
\bV&=\frac{1}{p}\mathbb E_\bXi\llangle\bsOmega(\bG\otimes\bXi)\left((\bUno_{p,p}\otimes\bhQ)\odot\bsOmega\right)^{-1/2}\rrangle^\intercal\\
\bQ&=\frac{1}{p}\mathbb E_\bXi\llangle \bG|\bsOmega|\bG\rrangle,\\
\bM&=\frac{1}{\sqrt\gamma p}\mathbb E_\bXi\llangle\bhPhi|\bG\rrangle.\end{align}
\end{subequations}
\paragraph{The case of $\ell_2$ regularisation --} If we assume an ${\hat\ell}_2$ regularisation $r(\bW)=\frac{1}{2}\|\bW\|^2_{\rm F}$, $\Psi_\bw$ is a Gaussian integral that can be explicitly computed before the rescaling in $\beta$. Denoting
\begin{equation}
\bsA\coloneqq\left[\lambda\bI_K\otimes\bI_p+(\bUno_{p,p}\otimes\bhV)\odot\bsOmega\right]^{-1}\quad \text{and}\quad\bsTheta\coloneqq\bhPhi\otimes\bhPhi\in\R^{p\times p}\otimes\R^{K\times K}
\end{equation}
we obtain the following saddle-point equations for $\bV$, $\bQ$ and $\bM$,
\begin{subequations}\label{sp1o}
\begin{align}
\bV&=\frac{\llangle\bsA\circ\bsOmega\rrangle}{p},\\
\bQ&=\frac{1}{p}\llangle\bsOmega\circ\left(\bsA\left((\bUno_{p,p}\otimes\bhM\otimes\bhM^\intercal)\odot\bsTheta+(\bUno_{p,p}\otimes\bhQ)\odot\bsOmega\right)\bsA\right)\rrangle\\
\bM&=\frac{1}{\sqrt\gamma p}\llangle\bsA\left((\bUno_{p,p}\otimes\bhM\otimes\bUno_K^\intercal)\odot\bsTheta\right)\rrangle.\end{align}
\end{subequations}

\paragraph{Training loss and generalisation error --} The order parameters introduced to solve the problem allow us to reach our ultimate goal of computing the average errors of the learning process. We have
\begin{multline}\comprimi
\epsilon_{\hat\ell}\equiv \frac{1}{n}\sum\limits_{\nu=1}^{n}{\hat\ell}\left(y^{\nu}, \frac{\llangle \hat\bW|\bU^\nu\rrangle}{\sqrt p}\right)
\xrightarrow{n\to+\infty}\\
-\partial_\beta\Psi_y=\int\dd y\,\mathbb E_{\bxi}\left[\frac{\mathcal Z^0\!\left(y,\omega_0,\sigma_0\right)}{\mathcal Z\left(y,\bomega,\bV\right)}\int\frac{{\hat\ell}(y,\bu) \e^{-\frac{1}{2}\langle\bomega-\bmu|\bV^{-1}|\bomega-\bu\rangle-\beta{\hat\ell}(y,\bu)}\dd\bu}{\sqrt{\det(2\pi\bV)}}\right]\\
\xrightarrow[\bV\mapsto \beta^{-1}\bV]{\beta\to+\infty}\int\dd y\,\mathbb E_{\bxi}\left[\mathcal Z^0\!\left(y,\omega_0,\sigma_0\right){\hat\ell}(y,\bh)\right],
\end{multline}
where $\bh$ is the proximal introduced above and all overlaps have to be intended computed at the fixed point.

We can also study the generalisation error
\begin{multline}\comprimi
\epsilon_g=\mathbb E_{(y,\bU)}\left[\Delta\left(y,\hat y\left(\frac{\llangle\hat\bW|\bU\rrangle}{\sqrt p}\right)\right)\right]\\
=\int \dd\bmu \int \dd\nu\int\dd y\,\Delta\left(y,\hat y(\bmu)\right)P_y^0(y|\nu)\mathbb E_{\bx}\left[\delta\left(\bmu-\frac{\llangle\hat\bW|\bU\rrangle}{\sqrt p}\right)\delta\left(\nu-\frac{\langle\bx|\btheta\rangle}{\sqrt d}\right)\right]\\
=\int\mathbb E_{(\nu,\bmu)}\left[\Delta\left(y,\hat y(\bmu)\right)P_y^0(y|\nu)\right]\dd y
\end{multline}
where $(\nu,\bmu)$ are jointly Gaussian with covariance
\begin{equation}
\bSigma=\begin{pmatrix}
\rho&\bM^\intercal\\
\bM&\bQ
\end{pmatrix}    
\end{equation}
In particular, if $P_y^0(y|\nu)=\delta(y-f_0(\nu))$, then $\epsilon_g=\mathbb E_{(\nu,\bmu)}\left[\Delta\left(f_0(\nu),\hat y(\bmu)\right)\right]$, which corresponds to \eqref{generror}.

\subsection{Separable loss with ridge regularisation}\label{app:replicasSep}
Let us focus now on the case of separable losses, i.e., losses in the form ${\hat\ell}(y,\bu)=\sum_k\ell(y,u_k)$, which is a crucial special case in the analysis of our contribution. We will assume a ridge regularisation $r(\bW)=\frac{1}{2}\|\bW\|_{\rm F}^2$. Let us also assume that the $K$ generative networks are statistically equivalent. This implies a specific structure in the tensors $\bsTheta$ and $\bsOmega$,
\begin{align}
\bsOmega_{kk'}=\bsOmega_{k'k}^\intercal&\stackrel{\rm d}{=}
\begin{cases}
\bOmega&\text{ for }k=k',\\
\hat\bOmega&\text{ for }k<k',\\
\end{cases}\\
\bsTheta_{kk'}=\bsTheta_{k'k}^\intercal&\stackrel{\rm d}{=}
\begin{cases}
\bTheta&\text{ for }k=k',\\
\hat\bTheta&\text{ for }k<k'.\\
\end{cases}
\end{align}
Here by $\stackrel{\rm d}{=}$ we mean that the equalities hold in distribution. Observe $\bsOmega_{kk}$ and $\bsOmega_{kk'}$ are not uncorrelated quantities. For reasons of symmetry reasons, we impose therefore the ansatz
\begin{equation}
\begin{split}
\bV&=v\bI_K,\\
\bM&=m\bUno_K,\\
\bQ&=(q_0-q_1)\bI_K+q_1\bUno_{K,K},
\end{split}\qquad
\begin{split}
\bhV&=\hat v\bI_K,\\
\bhM&=\hat m\bUno_K,\\
\bhQ&=(\hat q_0-\hat q_1)\bI_K+\hat q_1\bUno_{K,K}.
\end{split}
\end{equation}
It is easily seen that
\begin{align} \bQ^{1/2}&=\sqrt{q_0-q_1}\bI_K+\frac{\sqrt{q_0+(K-1)q_1}-\sqrt{q_0-q_1}}{K}\bUno_{K,K},\\ 
\bsA_{kk'}&=(\lambda \bI_p+\bsOmega_{kk})^{-1}\delta_{kk'}.
\end{align}
Plugging this ansatz in our equations, and introducing $\eta=\sfrac{q_1}{q_0}$, we obtain
\begin{equation}\label{sp2ansaz}
\begin{split}
\hat v&=-\alpha\int_{\mathcal Y}\dd y\,\mathbb E_{\zeta}\left[\mathcal Z^0\!\left(y,\omega_0,\sigma_0\right)\partial_{\omega}f\right],\\
\hat m&=\frac{\alpha}{\sqrt \gamma}\int_{\mathcal Y}\dd y\,\mathbb E_{\zeta}\left[f\partial_{\mu}\mathcal Z^0\!\left(y,\omega_0,\sigma_0\right) \right],\\
\hat q_0&=\alpha\int_{\mathcal Y}\dd y\,\mathbb E_{\zeta}\left[\mathcal Z^0\!\left(y,\omega_0,\sigma_0\right)f^2\right],\\
\hat q_1&=\alpha\mathbb E_{y,\zeta,\zeta'}\left[\mathcal Z^0\!\left(y,\frac{m}{
\sqrt{q_0}}\frac{\zeta+\zeta'}{1+\eta},\rho-\frac{2m^2}{q_0+q_1}\right)ff'\right],
\end{split}\quad \text{with }\omega_0\coloneqq\frac{m\zeta}{\sqrt{q_0}},\quad \sigma_0\coloneqq\rho-\frac{m^2}{q_0}.
\end{equation}
The new variables $\zeta_1$ and $\zeta_2$ are obtained by a linear transformation from the old ones. In particular, they are distributed as two components of a vector 
\begin{equation}
\bzeta=\left(\sqrt{1-\eta}\bI_K+\frac{\sqrt{1+(K-1)\eta}-\sqrt{1-\eta}}{K}\bUno_{K,K}\right)\bxi,
\end{equation}
where $\bxi\sim\mathcal N(\mathbf 0_K,\bI_K)$. It follows that $\zeta,\zeta'\sim\mathcal N(0,1)$ but they are correlated as
\begin{equation}
\mathbb E[\zeta\zeta']=\eta.
\end{equation}
Moreover, we have introduced the proximal
\begin{equation}
f=\frac{h-\omega}{v}\qquad\text{where}\qquad h=\arg\min_x\left[\frac{(x-\sqrt{q_0}\zeta)^2}{2v}+\ell(y,x)\right]
\end{equation}
and the corresponding $f'$ obtained using $\zeta'$. The remaining equations read
\begin{subequations}\label{sp1ansaz}
\begin{align}
v&=\frac{\tr[(\lambda \bI_p+\hat v\bOmega)^{-1}\bOmega]}{p},\\
m&=\frac{\hat m}{\sqrt{\gamma}}\frac{\tr[\bTheta(\lambda \bI_p+\hat v\bOmega)^{-1}]}{p}\\
q_0&=\frac{\tr[(\lambda \bI_p+\hat v\bOmega)^{-1}\left(\hat m^2\bTheta+\hat q_0\bOmega\right)(\lambda \bI_p+\hat v\bOmega)^{-1}\bOmega]}{p}\\
q_1&=\frac{\tr[(\lambda \bI_p+\hat v\bOmega)^{-1}\left(\hat m^2\hat\bTheta+\hat q_1\hat\bOmega\right)(\lambda \bI_p+\hat v\bOmega')^{-1}\hat\bOmega{}^\intercal]}{p}.\end{align}
\end{subequations}
\paragraph{The random-features model for the generative networks --} To further simplify these expressions suppose now that our generative networks are such that
\begin{equation}
\bu_k(\bx)=\phi\left(\frac{\bF_k\bx}{\sqrt d}\right),\quad k\in[K],
\end{equation}
where $\bF_k\in\R^{p\times d}$ are (fixed) random matrices extracted from some given distribution and $\phi$ is a nonlinearity acting elementwise. As anticipated in the main text, we can use the fact that each generative network is equivalent to the following Gaussian model \citep{mei2020generalization}
\begin{equation}
\bu_k(\bx)\mapsto \kappa_0\bUno_p+ \kappa_1\frac{\bF_k\bx}{\sqrt d}+\kappa_*\bz_k.
\end{equation}
for some coefficients $\kappa_0$, $\kappa_1$ and $\kappa_*$ depending on $\phi$ (see Therorem \ref{th:Teo1}), and $\bz^\mu\sim\mathcal N(\boldsymbol 0_p,\bI_p)$. Assuming now, for the sake of simplicity, to the $\kappa_0=0$ case and that $\mathbb E_\btheta[\btheta\btheta^\intercal]=\bI_d$, then
\begin{equation}
\begin{split}
\bOmega&
\stackrel{\rm d}{=}\frac{\kappa_1^2}{d}\bF^\intercal\bF+\kappa_*^2\bI_p,\\
\bTheta&\stackrel{\rm d}{=}\frac{\kappa_1^2}{d}\bF^\intercal\bF,
\end{split}\qquad
\begin{split}
\hat\bTheta&\stackrel{\rm d}{=}\hat\bOmega\stackrel{\rm d}{=}\frac{\kappa_1^2}{d}\bF^\intercal\bF',\quad \bF\stackrel{\rm d}{=}\bF'\stackrel{\rm d}{=}\bF_k\ \forall k\in[K].
\end{split}\end{equation}
Once the spectral density $\varrho(s)$ of $\bOmega$ is introduced, it is immediate to see that the equations for $q_0$, $m$ and $v$ take the forms given in the main text. The equation for $q_1$ requires an additional step. If we introduce the symmetric random matrix
\begin{equation}
\hat \bF\coloneqq \frac{\kappa_1^2}{d}\bF\left((\lambda+\hat v\kappa_*^2)\bI_p+\frac{\hat v\kappa_1^2}{d}\bF^\intercal\bF\right)^{-1}\bF^\intercal\in \R^{p\times p}
\end{equation}
then we can rewrite the equation as
\begin{equation}
q_1=\left(\hat m^2+\hat q_1\right)\frac{\tr [\hat\bF\hat\bF']}{p}=\frac{\hat m^2+\hat q_1}{\gamma}\left(\frac{\tr \hat\bF}{p}\right)^2=\left(1+\frac{\hat q_1}{\hat m^2}\right)m^2,
\end{equation}
where in the second equality we used the fact that $\hat\bF$ and $\hat\bF'$ are asymptotically free.
\paragraph{Ridge regression --} Let us consider the simple case of ridge regression with $f_0(x)=x$. We will give here the channel equations that are obtained straightforwardly as
\begin{equation}
\begin{split}
 \hat v &= \frac{\alpha}{1+v}\\
\hat{m} &= \frac{1}{1+v}\frac{\alpha}{\sqrt\gamma},
\end{split}\qquad
\begin{split}
\hat q_0&=\alpha\frac{\rho-2m+q_0}{(1+v)^2},\\
\hat q_1&=\alpha\frac{\rho-2m+q_1}{(1+v)^2}.
\end{split}
\label{eq:channel_redige}
\end{equation}
The kernel limit is also obtained straightforwardly taking the $\alpha\to 0$ limit and rescaling $\hat q_0\mapsto\alpha {\hat q_0}$, $\hat q_1\mapsto \alpha {\hat q_1}$, $\hat m\mapsto\sqrt \alpha {\hat m}$, $\hat v\mapsto\alpha {\hat v}$ \label{app:sec:ridge}.
\begin{align}
    \begin{split}
    v &=\frac{(1-\delta) \kappa_1^2+\sqrt{(1-\delta)^2 \kappa_1^4+2 (\kappa_*^2+\lambda )(1+\delta) \kappa_1^2 +(\kappa_*^2+\lambda)^2}+\kappa_*^2-\lambda }{2 \lambda }\\
    m&=\frac{1}{1+\lambda\frac{v+1}{\delta  \kappa_1^2}}\\
    q_0 &= q_1 = \frac{\delta -2 m+\rho}{\left(1  +2 \lambda\frac{v+1}{\delta\kappa_1^2}\right)^2-1}\equiv q.
    \end{split}
\end{align}
\paragraph{Binary classification problem}\label{app:binary}
We consider now the case $f_0(x)=\mathrm{sign}(x)$, corresponding to a binary classification problem, and we write down the channel equations for this problem in the case of logistic and hinge loss. In this case we have that
\begin{equation}
\begin{split}
    \mathcal Z^0(y,\omega_0,\sigma_0)&=\frac{\delta(y-1)+\delta(y+1)}{2}\left(1+\mathrm{erf}\left(\frac{y\omega_0}{\sqrt{2\sigma_0}}\right)\right),\\
     \partial_{\mu} \mathcal Z^0(y,\omega_0,\sigma_0)&=\left(\delta(y-1)-\delta(y+1)\right)\frac{\e^{-\frac{\omega_0^2}{2\sigma_0}}}{\sqrt{2\pi\sigma_0}}.
\end{split}
\end{equation}

If we pick a logistic loss in the form ${\hat\ell}(y,\bmu)=\sum_k\ln(1+\e^{-y\mu_k})$, then the proximal $h$ solves the equation
\begin{equation} h=\omega+\frac{yv}{1+\e^{yh}},
\end{equation}
in such a way that $f=\frac{\eta-\omega}{v}$ satisfies
\begin{equation}
\partial_\omega f=-\left(v+2\cosh\left(y\frac{vf+\omega}{2}\right)\right)^{-1}.
\end{equation}
If we use instead a hing loss ${\hat\ell}(y,\bmu)=\sum_k\max(0,1-y\mu_k)$, the proximal is such that
\begin{equation}
f=\begin{cases}
y&\text{if }1-v>\omega y,\\
\frac{y-\omega}{v}&\text{if }1-v<\omega y<1,\\
0&\text{otherwise},
\end{cases}
\qquad
\partial_\omega f=
\begin{cases}
-\frac{1}{v}&\text{if }1-v<\omega y<1,\\
0&\text{otherwise}.
\end{cases}
\end{equation}
In the case of the hinge loss, the simple form of the proximal allows for a more explicit expression of the channel equations. Introducing
\begin{equation}
\hat\sigma=\rho-\frac{2m^2}{q_0+q_1},\qquad \mathcal N(\zeta,\zeta';\eta)=\frac{\exp\left(-\frac{\zeta^2+{\zeta'}^2-2\eta\zeta\zeta'}{2(1-\eta^2)}\right)}{2\pi\sqrt{1-\eta^2}}    
\end{equation}
we obtain
\begin{equation}\comprimi
\begin{split}
\hat v=&\frac{\alpha}{v}\int_{\frac{1-v}{\sqrt{q_0}}}^{\sfrac{1}{\sqrt{q_0}}}\frac{\e^{-\frac{\zeta^2}{2}}\dd\zeta}{\sqrt{2\pi}}\left(1+\erf\left(\frac{m\zeta}{\sqrt{2q_0\sigma}}\right)\right),\\
\hat m=&\frac{\alpha}{\sqrt {\rho\gamma}v}\left[\frac{v+\erf\left(\sqrt{\frac{\rho}{2q_0\sigma}}\right)-(1-v)\erf\left((1-v)\sqrt{\frac{\rho}{2q_0\sigma}}\right)}{\sqrt{2\pi}}+\sqrt{\frac{q_0\sigma}{\rho}}\frac{\exp\left(-\frac{\rho}{2q_0\sigma}\right)-\exp\left(\frac{\rho(1-v)^2}{2q_0\sigma}\right)}{2\pi}\right]\\
\hat q_0=&\alpha\left[\int^{\frac{1-v}{\sqrt{q_0}}}_{-\infty}\frac{\e^{-\frac{\zeta^2}{2}}\dd\zeta}{\sqrt{2\pi}}\left(1+\erf\left(\frac{m\zeta}{\sqrt{2q_0\sigma}}\right)\right)+\int_{\frac{1-v}{\sqrt{q_0}}}^{\sfrac{1}{\sqrt{q_0}}}\frac{\e^{-\frac{\zeta^2}{2}}\dd\zeta}{\sqrt{2\pi}}\left(1+\erf\left(\frac{m\zeta}{\sqrt{2q_0\sigma}}\right)\right)\left(\frac{1-\sqrt{q_0}\zeta}{v}\right)^2\right],\\
\hat q_1=&\alpha\iint^{\frac{1-v}{\sqrt{q_0}}}_{-\infty}\mathcal N(\zeta,\zeta';\sfrac{q_1}{q_0})\dd\zeta\dd\zeta'\left(1+\erf\left(\frac{m}{\sqrt{2q_0\hat\sigma}}\frac{\zeta+\zeta'}{1+\sfrac{q_1}{q_0}}\right)\right)\\
&+2\alpha\int_{\frac{1-v}{\sqrt{q_0}}}^{\sfrac{1}{\sqrt{q_0}}}\dd\zeta\int_{-\infty}^{\frac{1-v}{\sqrt{q_0}}}\dd\zeta'\mathcal N(\zeta,\zeta';\sfrac{q_1}{q_0})\left(1+\erf\left(\frac{m}{\sqrt{2q_0\hat\sigma}}\frac{\zeta+\zeta'}{1+\sfrac{q_1}{q_0}}\right)\right)\left(\frac{1-\sqrt{q_0}\zeta}{v}\right)\\
&+\alpha\iint_{\frac{1-v}{\sqrt{q_0}}}^{\sfrac{1}{\sqrt{q_0}}}\dd\zeta\dd\zeta'\mathcal N(\zeta,\zeta';\sfrac{q_1}{q_0})\left(1+\erf\left(\frac{m}{\sqrt{2q_0\hat\sigma}}\frac{\zeta+\zeta'}{1+\sfrac{q_1}{q_0}}\right)\right)\left(\frac{1-\sqrt{q_0}\zeta}{v}\right)\left(\frac{1-\sqrt{q_0}\zeta'}{v}\right).
\label{eq:channel_binary_class}
\end{split}
\end{equation}
Let us now make the change of variables $\zeta\mapsto \frac{\sqrt{q_0+q_1}z+\sqrt{q_0-q_1}z'}{\sqrt{2q_0}}$ and $\zeta'\mapsto \frac{\sqrt{q_0+q_1}z-\sqrt{q_0-q_1}z'}{\sqrt{2q_0}}$. This allows us to rewrite the expression for $q_1$ as
{\small\begin{multline}\comprimi
\hat q_1=\alpha\int^{\sqrt{2}(1-v)}_{-\infty}\mathcal N(z;0,q_0+q_1)\left(1+\erf\left(\frac{m z}{\sqrt{\hat\sigma}(q_0+q_1)}\right)\right)\erf\left(\frac{\sqrt 2(1-v)-z}{\sqrt{q_0-q_1}}\right)\\\comprimi
+2\alpha\int_{-\infty}^{\sqrt{2}(1-\sfrac{v}{2})}\mathcal N(z;0,q_0+q_1)\left(1+\erf\left(\frac{m z}{\sqrt{\hat\sigma}(q_0+q_1)}\right)\right)\left(\frac{1}{v}-\frac{1}{v}\left[\frac{z}{2}\erf\left(\frac{x}{\sqrt{q_0-q_1}}\right)+\sqrt{\frac{q_0-q_1}{2\pi}}\e^{-\frac{x^2}{2}}\right]_{|\sqrt 2(1-v)-z|}^{\sqrt 2-z}\right)\dd z\\
+\alpha\iint_{\frac{1-v}{\sqrt{q_0}}}^{\sfrac{1}{\sqrt{q_0}}}\dd\zeta\dd\zeta'\mathcal N(z;0,q_0+q_1)\mathcal N(z':0,1)\left(1+\erf\left(\frac{m z}{\sqrt{\hat\sigma}(q_0+q_1)}\right)\right)\left(\frac{1-\sqrt{q_0}\zeta}{v}\right)\left(\frac{1-\sqrt{q_0}\zeta'}{v}\right).
\end{multline}}

\section{Proof of the main theorem}

In this section we prove Theorem \ref{th:TheGen}, from which all other analytical results in the paper can be deduced. We start by reminding the learning problem defining the ensemble of estimators with a few auxiliary notations, so that this part is self contained. The sketch of proof is one pioneered in \citep{bayati2011lasso, donoho2016high} and is the following: the estimator $\mathbf{W}^{*}$ is expressed as the limit of a carefully chosen sequence, an \emph{approximate message-passing iteration} \citep{bayati2011dynamics,zdeborova2016statistical}, whose iterates can be asymptotically exactly characterized using an auxiliary, closed form iteration, the \emph{state evolution equations}. We then show that converging trajectories of such an AMP iteration can be systematically found.
\subsection{The learning problem}
We start by reminding the definition of the problem. Consider the following generative model
\begin{equation}
\mathbf{y} = f_{0}(\frac{1}{\sqrt{d}}\mathbf{X}_{0}\mathbf{w}_{0}, \boldsymbol{\epsilon}_{0})
\end{equation}
where $\by \in \mathbb{R}^{n},\bX_{0} \sim \mathcal{N}(0,\boldsymbol{\Sigma}_{00}) \in \mathbb{R}^{n \times d}, \mathbf{w}_{0} \in \mathbb{R}^{d}$, $\boldsymbol{\epsilon}_{0} \in \mathbb{R}^{d}$ is a noise vector and $\boldsymbol{\Sigma_{00}}\in \mathbb{R}^{d\times d}$ is a positive definite matrix. The goal is to learn this generative model using an ensemble of predictors $\mathbf{W} = \begin{bmatrix}\mathbf{w}_{1}\vert \mathbf{w}_{2} \vert... \vert \mathbf{w}_{K}\end{bmatrix} \in \mathbb{R}^{p \times K}$ where each predictor $\mathbf{w}_{k} \in \mathbb{R}^{p}, k \in [1,K]$
is learned using a sample dataset $\mathbf{X}_{k} \in \mathbb{R}^{n \times p}$, where, for any $i \in [1,n]$ and $k \in [0,K]$, we have:
\begin{equation}
    \mathbb{E}\left[\mathbf{x}_{i}^{k}(\mathbf{x}_{i}^{k'})^{\top}\right] = \boldsymbol{\Sigma}_{kk'}
\end{equation}
where each sample is Gaussian and we denote :
\begin{equation}
    \boldsymbol{\Sigma} = \begin{bmatrix}\boldsymbol{\Sigma}_{00}&\boldsymbol{\Sigma}_{01}&...&\boldsymbol{\Sigma}_{0K} \\ \boldsymbol{\Sigma}_{10}&\boldsymbol{\Sigma}_{11}&...&\boldsymbol{\Sigma}_{1K} \\ ... \\\boldsymbol{\Sigma}_{K0}&\boldsymbol{\Sigma}_{K1}&...&\boldsymbol{\Sigma}_{KK}\end{bmatrix} \in \mathbb{R}^{(Kp+d) \times (Kp+d)}.
\end{equation}
The predictors interact with each sample dataset in a linear way, i.e. we will consider 
a generalized linear model acting on the ensemble of products $\left\{\mathbf{X}_{k}\mathbf{w}_{k}\right\}_{k=1}^{K}$:
\begin{equation}
    \label{eq:student}
    \mathbf{W}^{*} \in \underset{{\mathbf{W}\in \mathbb{R}^{p \times K}}}{\arg\min} \mathcal{L}\left(\mathbf{y},\left\{\frac{1}{\sqrt{p}}\mathbf{X}_{k}\mathbf{w}_{k}\right\}_{k=1}^{K}\right)+r_{0}(\mathbf{W})
\end{equation}
where $\mathcal{L},r_{0}$ are convex functions.
We wish to determine the asymptotic properties of the estimator $\mathbf{W}^{*}$ in the limit where $n,p,d \to \infty$ with fixed ratios $\alpha = n/p, \gamma = d/p$. We now list the necessary assumptions for our main theorem to hold.
\paragraph{Assumptions --}
\begin{itemize}
    \item the functions $\mathcal{L},r_{0}$ are proper, closed, lower-semicontinuous, convex functions. The loss function $\mathcal{L}$ is pseudo-lipschitz of order 2 in both its arguments and the regularisation $r_{0}$ is pseudo-Lipschitz of order 2. The cost function $\mathcal{L}(\mathbf{X}.)+r(.)$ is coercive.
    \item for any $1\leqslant k \leqslant K$, the matrix $\boldsymbol{\Sigma}_{k} \in \mathbb{R}^{p\times p}$ is symmetric and there exist strictly positive constants $\kappa_{0},\kappa_{1}$ such that 
    $\kappa_{0} \leqslant \lambda_{min}(\boldsymbol{\Sigma}_{k}) \leqslant \lambda_{max}(\boldsymbol{\Sigma}_{k}) \leqslant \kappa_{1}$. We also assume that the matrix $\boldsymbol{\Sigma}$ is positive definite.
    \item their exists a positive constant $C_{f_{0}}$ such that $\norm{f_{0}(\frac{1}{\sqrt{d}}\mathbf{X}_{0}\mathbf{w}_{0}, \boldsymbol{\epsilon}_{0})}_{2}\leqslant C_{f_{0}}\left( \norm{\frac{1}{\sqrt{d}}\mathbf{X}_{0}\mathbf{w}_{0}}_{2}+\norm{\boldsymbol{\epsilon}_{0}}_{2}\right)$
    \item the dimensions $n,p,d$ grow linearly with finite ratios $\alpha = n/p$ and $\gamma = d/p$.
    \item the ground truth vector $\mathbf{w}_{0} \in \mathbb{R}^{d}$ and noise vector $\boldsymbol{\epsilon}_{0} \in \mathbb{R}^{n}$ are sampled from subgaussian probability distributions independent from each other and from all other random quantities of the learning problem.
\end{itemize}
The proof method we will employ involves expressing the estimator $\mathbf{W}^{*}$ as the limit of a carefully chosen sequence. In the case of non-strictly convex problems, the estimator may not be unique, making it unclear what estimator is reached by the sequence (at best we know it belongs to the set of zeroes of the subgradient of the cost function). We thus start with the following problem
\begin{align}
    \label{eq:sc_student}
    &\mathbf{W}^{*} \in \underset{{\mathbf{W}\in \mathbb{R}^{p \times K}}}{\arg\min} \mathcal{L}(\mathbf{y},\left\{\mathbf{X}_{k}\mathbf{w}_{k}\right\}_{k=1}^{K})+r_{\lambda_{2}}(\mathbf{W}) \\
    &\mbox{where, for any $\mathbf{W} \in \mathbb{R}^{p \times K}$,} \thickspace r_{\lambda_{2}}(\mathbf{W}) = r_{0}(\mathbf{W})+\frac{\lambda_{2}}{2}\norm{\mathbf{W}}_{F}^{2}
\end{align}
i.e. we add a ridge regularisation to the initial problem to make it strongly convex. We will relax this additional strong convexity constraint later on.
\subsection{Asymptotics for the strongly convex problem}
We now reformulate the minimization problem Eq.\eqref{eq:sc_student} to make it amenable to an approximate message-passing iteration (AMP). The key feature of this ensembling problem, outside of the convexity 
which will be crucial to control the trajectories of the AMP iteration, is the fact that 
each predictor only interacts linearly with each design sample, along with the correlation structure of the overall dataset. 
We are effectively sampling $n$ vectors of size $(Kp+d)$ from the Gaussian distribution with covariance $\boldsymbol{\Sigma}$, i.e. $
    \begin{bmatrix}\mathbf{x}_{0}\vert\mathbf{x}_{1}\vert...\vert\mathbf{x}_{K}\end{bmatrix} \sim \mathcal{N}(0,\boldsymbol{\Sigma})$.
We then write $\left\{\mathbf{X}_{k}\mathbf{w}_{k}\right\}_{k=0}^{K} = \begin{bmatrix} \mathbf{X}_{0}\mathbf{w}_{0}\vert ... \vert \mathbf{X}_{K}\mathbf{w}_{K}\end{bmatrix} \in \mathbb{R}^{n \times (K+1)}$, such that
\begin{align}
    \begin{bmatrix} \mathbf{X}_{0}\mathbf{w}_{0}\vert ... \vert \mathbf{X}_{K}\mathbf{w}_{K}\end{bmatrix} &= \begin{bmatrix} \mathbf{X}_{0}\vert ... \vert \mathbf{X}_{K}\end{bmatrix}\mathbf{W} = \mathbf{Z}\boldsymbol{\Sigma}^{1/2}\begin{bmatrix}\mathbf{w}_{0}&0 \\0&\tilde{\mathbf{W}}\end{bmatrix} \\
    \mbox{where} \quad \tilde{\mathbf{W}} &= \begin{bmatrix}\mathbf{w}_{1}&0&...&0 \\ 0&\mathbf{w}_{2}&...&0\\ &&... &\\0&0&...&\mathbf{w}_{K}\end{bmatrix} \in \mathbb{R}^{Kp \times K}
\end{align}
and $\mathbf{Z} \in \mathbb{R}^{n \times (Kp+d)}$ is a random matrix with i.i.d. $\mathcal{N}(0,1)$ elements.
Then, any sample $\begin{bmatrix}\mathbf{x}_{0}\vert\mathbf{x}_{1}\vert...\vert\mathbf{x}_{K}\end{bmatrix}$ may be rewritten as
\begin{align}
    \mathbf{x}_{0} &= \Psi^{1/2}\mathbf{a} \quad \mbox{and} \quad \begin{bmatrix}\mathbf{x}_{1}\vert...\vert\mathbf{x}_{K}\end{bmatrix} = \Phi^{\top}\Psi^{-1/2}\mathbf{a}+\left(\Omega-\Phi^{\top}\Psi^{-1}\Phi\right)^{1/2}\mathbf{b} \\
    \mathbf{X}_{0} &= \mathbf{A}\Psi^{1/2} \quad \mbox{and} \quad \begin{bmatrix}\mathbf{X}_{1}\vert...\vert\mathbf{X}_{K}\end{bmatrix} = \mathbf{A}\Psi^{-1/2}\Phi+\mathbf{B}\left(\Omega-\Phi^{\top}\Psi^{-1}\Phi\right)^{1/2}
\end{align}
where $\mathbf{a} \in \mathbb{R}^{d}, \mathbf{b} \in \mathbb{R}^{Kp}$ are vectors with i.i.d. standard normal components, $\mathbf{A} \in \mathbb{R}^{n \times d}, \mathbf{B} \in \mathbb{R}^{n \times Kp}$ are the corresponding design matrices, and the covariance
matrices are given by $\boldsymbol{\Psi} = \boldsymbol{\Sigma}_{00} \in \mathbb{R}^{d \times d}, \boldsymbol{\Phi} = \begin{bmatrix}\boldsymbol{\Sigma}_{11}\vert \boldsymbol{\Sigma}_{12} \vert \boldsymbol{\Sigma}_{13}... \vert \boldsymbol{\Sigma}_{1K}\end{bmatrix} \in \mathbb{R}^{d \times Kp}$ and 
\begin{equation}
\boldsymbol{\Omega} =  \begin{bmatrix}\boldsymbol{\Sigma}_{11}&\boldsymbol{\Sigma}_{12}&...&\boldsymbol{\Sigma}_{1K} \\ \boldsymbol{\Sigma}_{21}&\boldsymbol{\Sigma}_{22}&...&\boldsymbol{\Sigma}_{2K} \\ ... \\\boldsymbol{\Sigma}_{K1}&\boldsymbol{\Sigma}_{K2}&...&\boldsymbol{\Sigma}_{KK}\end{bmatrix} \in \mathbb{R}^{Kp \times Kp}
\end{equation}
The optimization problem may then be written, introducing the appropriate scalings
\begin{align}
    \tilde{\mathbf{W}}^{*} \in \underset{\tilde{\mathbf{W}} \in \mathbb{R}^{Kp \times K}}{\arg\min} \mathcal{L}\left(f_{0}(\frac{1}{\sqrt{d}}\mathbf{A}\tilde{\mathbf{w}}_{0}), \frac{1}{\sqrt{p}}\left(\mathbf{A}\Psi^{-1/2}\Phi+\mathbf{B}\left(\Omega-\Phi^{\top}\Psi^{-1}\Phi\right)^{1/2}\right)\tilde{\mathbf{W}}\right)+r(\tilde{\mathbf{W}})
\end{align}
where we let $\tilde{\mathbf{w}}_{0} = \Psi^{1/2}\mathbf{w}_{0}$, its scaled norm $\rho_{\tilde{\mathbf{w}}_{0}} = \frac{1}{d}\norm{\tilde{\mathbf{w}}_{0}}_{2}^{2}$ and we introduced the function
\begin{align}
r : \mathbb{R}^{Kp \times K} &\to \mathbb{R} \\
\tilde{\mathbf{W}} &\to r_{\lambda_{2}}(\mathbf{W}) 
\end{align}.
In order to isolate the contribution correlated with the teacher, we condition the design matrix $\mathbf{A}$ on the teacher distribution $\mathbf{y}$, we can write
\begin{align}
    \mathbf{A} &= \mathbb{E}\left[\mathbf{A} \vert \mathbf{y}\right]+\mathbf{A}-\mathbb{E}\left[\mathbf{A} \vert \mathbf{y}\right] \\
    &= \mathbb{E}\left[\mathbf{A} \vert \mathbf{A}\tilde{\mathbf{w}}_{0}\right]+\mathbf{A}-\mathbb{E}\left[\mathbf{A} \vert \mathbf{A}\tilde{\mathbf{w}}_{0}\right] \\
    &= \mathbf{A}\mathbf{P}_{\tilde{\mathbf{w}_{0}}}+\tilde{\mathbf{A}}\mathbf{P}^{\perp}_{\tilde{\mathbf{w}_{0}}}
\end{align}
where $\tilde{\mathbf{A}}$ is an independent copy of $\mathbf{A}$, see \citep{bayati2011dynamics} Lemma 11. The cost function then becomes
\begin{align}
\mathcal{L}\left(f_{0}\left(\sqrt{\rho_{\tilde{\mathbf{w}}_{0}}}\mathbf{s}\right), \frac{1}{\sqrt{p}}\left(\mathbf{s}\frac{(\Phi^{\top}\mathbf{w}_{0})^{\top}}{\sqrt{d\rho_{\tilde{\mathbf{w}}_{0}}}}+\tilde{\mathbf{A}}\mathbf{P}^{\perp}_{\tilde{\mathbf{w}_{0}}}\Psi^{-1/2}\Phi+\mathbf{B}\left(\Omega-\Phi^{\top}\Psi^{-1}\Phi\right)^{1/2}\right)\tilde{\mathbf{W}}\right)+r(\tilde{\mathbf{W}})
\end{align}
where $\mathbf{s} = \mathbf{A}\frac{\tilde{\mathbf{w}}_{0}}{\norm{\tilde{\mathbf{w}}_{0}}_{2}} \in \mathbb{R}^{n}$ is an i.i.d. standard normal vector. The term $\tilde{\mathbf{A}}\mathbf{P}^{\perp}_{\tilde{\mathbf{w}_{0}}}\Psi^{-1/2}\Phi+\mathbf{B}\left(\Omega-\Phi^{\top}\Psi^{-1}\Phi\right)^{1/2}$ can then be represented as a $\mathbb{R}^{n \times Kp}$ Gaussian matrix with covariance
\begin{align}
    \Phi^{\top}\Psi^{-1/2}\mathbf{P}_{\tilde{\mathbf{w}}_{0}}^{\perp}\Psi^{-1/2}\Phi+\Omega-\Phi^{\top}\Psi^{-1}\Phi &= \Omega-\Phi^{\top}\Psi^{-1/2}\mathbf{P}_{\tilde{\mathbf{w}}_{0}}\Psi^{-1/2}\Phi \\
    &=\Omega-\Phi^{\top}\Psi^{-1/2}\frac{\tilde{\mathbf{w}}_{0}\tilde{\mathbf{w}}_{0}^{\top}}{\norm{\tilde{\mathbf{w}}_{0}}_{2}^{2}}\Psi^{-1/2}\Phi = \Omega-\frac{\mathbf{c}\mathbf{c}^{\top}}{\norm{\tilde{\mathbf{w}}_{0}}_{2}^{2}}
\end{align}
where we introduced $\mathbf{c} = \Phi^{\top}\mathbf{w}_{0} \in \mathbb{R}^{Kp}$ and $ \rho_{\mathbf{c}} = \frac{1}{p}\norm{\mathbf{c}}_{2}^{2}$, reaching the cost function
\begin{align}
    \mathcal{L}\left(f_{0}\left(\sqrt{\rho_{\tilde{\mathbf{w}}_{0}}}\mathbf{s}\right), \frac{1}{\sqrt{p}}\left(\mathbf{s}\frac{\mathbf{c}^{\top}}{\sqrt{d\rho_{\tilde{\mathbf{w}}_{0}}}}+\mathbf{Z}\left(\Omega-\frac{\mathbf{c}\mathbf{c}^{\top}}{\norm{\tilde{\mathbf{w}}_{0}}_{2}^{2}}\right)^{1/2}\right)\tilde{\mathbf{W}}\right)+r(\tilde{\mathbf{W}})
\end{align} 
Introducing $\mathbf{m} = \frac{1}{\sqrt{dp}}\tilde{\mathbf{W}}^{\top}\mathbf{c} \in \mathbb{R}^{K}, \mathbf{C} = \Omega-\frac{\mathbf{c}\mathbf{c}^{\top}}{\norm{\tilde{\mathbf{w}}_{0}}_{2}^{2}} \in \mathbb{R}^{Kp \times Kp}$, and the Lagrange multiplier $\boldsymbol{\nu}$ associated to $\mathbf{m}$, the optimization problem can equivalently be written
\begin{align}
\label{eq_inter1}
    \inf_{\mathbf{m}\in \mathbb{R}^{K},\tilde{\mathbf{W}}\in \mathbb{R}^{Kp\times K}}\sup_{\boldsymbol{\nu}\in \mathbb{R}^{K}}\mathcal{L}\left(f_{0}\left(\sqrt{\rho_{\tilde{\mathbf{w}}_{0}}}\mathbf{s}\right), \mathbf{s}\frac{\mathbf{m}^{\top}}{\sqrt{\rho_{\tilde{\mathbf{w}}_{0}}}}+\frac{1}{\sqrt{p}}\mathbf{Z}\mathbf{C}^{1/2}\tilde{\mathbf{W}}\right)+r(\tilde{\mathbf{W}})-\boldsymbol{\nu}^{\top}\left(\tilde{\mathbf{W}}^{\top}\mathbf{c}-\sqrt{dp}\mathbf{m}\right)
\end{align} 
We now look for an explicit expression of the matrix square root $\mathbf{C}^{1/2}$
\begin{align}
    \mathbf{C} &= \Omega^{1/2}\left(Id-\frac{\Omega^{-1/2}\mathbf{c}(\Omega^{-1/2}\mathbf{c})^{\top}}{\norm{\tilde{\mathbf{w}}_{0}}_{2}^{2}}\right)\Omega^{1/2} \quad \mbox{let} \quad \tilde{\mathbf{c}} = \Omega^{-1/2}\mathbf{c}\\
    &= \Omega^{1/2}\left(\mathbf{P}_{\tilde{\mathbf{c}}}^{\perp}+\kappa\mathbf{P}_{\tilde{\mathbf{c}}}\right)\Omega^{1/2} \quad \mbox{where} \quad \kappa = 1-\frac{\norm{\tilde{\mathbf{c}}}_{2}^{2}}{\norm{\tilde{\mathbf{w}}_{0}}_{2}^{2}} \\
    &= \Omega^{1/2}\left(\mathbf{P}_{\tilde{\mathbf{c}}}^{\perp}+\sqrt{\kappa}\mathbf{P}_{\tilde{\mathbf{c}}}\right)\left(\mathbf{P}_{\tilde{\mathbf{c}}}^{\perp}+\sqrt{\kappa}\mathbf{P}_{\tilde{\mathbf{c}}}\right)\Omega^{1/2}
\end{align}
where the positivity of $\kappa$ is ensured by the positive-definiteness of $\boldsymbol{\Sigma}$. The problem then becomes 
\begin{align}
    \inf_{\mathbf{m},\tilde{\mathbf{W}}} \sup_{\boldsymbol{\nu}}\mathcal{L}\left(f_{0}\left(\sqrt{\rho_{\tilde{\mathbf{w}}_{0}}}\mathbf{s}\right), \mathbf{s}\frac{\mathbf{m}^{\top}}{\sqrt{\rho_{\tilde{\mathbf{w}}_{0}}}}+\frac{\sqrt{\kappa}}{\sqrt{p}}\mathbf{Z}\mathbf{P}_{\tilde{\mathbf{c}}}\Omega^{1/2}\tilde{\mathbf{W}}+\frac{1}{\sqrt{p}}\tilde{\mathbf{Z}}\mathbf{P}_{\tilde{\mathbf{c}}}^{\perp}\Omega^{1/2}\tilde{\mathbf{W}}\right)&+r(\tilde{\mathbf{W}})\notag\\
    &-\boldsymbol{\nu}^{\top}\left(\tilde{\mathbf{W}}^{\top}\mathbf{c}-\sqrt{dp}\mathbf{m}\right)
\end{align}
where $\tilde{\mathbf{Z}}$ is an independent copy of $\mathbf{Z}$, see \citep{bayati2011dynamics} Lemma 11. Then
\begin{align}
    \frac{\sqrt{\kappa}}{\sqrt{p}}\mathbf{Z}\mathbf{P}_{\tilde{\mathbf{c}}}\Omega^{1/2}\tilde{\mathbf{W}} &= \frac{\sqrt{\kappa}}{\sqrt{p}}\tilde{\mathbf{s}}\frac{\mathbf{c}^{\top}\tilde{\mathbf{W}}}{\norm{\tilde{\mathbf{c}}}_{2}} \\
    &= \sqrt{\kappa}\tilde{\mathbf{s}}\frac{\mathbf{c}^{\top}\tilde{\mathbf{W}}}{p\sqrt{\rho_{\tilde{\mathbf{c}}}}} \\
    &= \tilde{\mathbf{s}}\frac{\sqrt{\gamma\kappa}\mathbf{m}^{\top}}{\sqrt{\rho_{\tilde{\mathbf{c}}}}} 
\end{align}
where $\tilde{\mathbf{s}} = \mathbf{Z}\frac{\tilde{\mathbf{c}}}{\norm{\tilde{\mathbf{c}}}_{2}}$ is an i.i.d. standard normal vector and $\rho_{\tilde{\mathbf{c}}} = \frac{1}{p}\norm{\tilde{\mathbf{c}}}_{2}^{2}$ such that the optimization problem becomes
\begin{align}
    \inf_{\mathbf{m},\tilde{\mathbf{W}}}\sup_{\boldsymbol{\nu}}\mathcal{L}\left(f_{0}\left(\sqrt{\rho_{\tilde{\mathbf{w}}_{0}}}\mathbf{s}\right), \mathbf{s}\frac{\mathbf{m}^{\top}}{\sqrt{\rho_{\tilde{\mathbf{w}}_{0}}}}+\tilde{\mathbf{s}}\frac{\sqrt{\gamma\kappa}\mathbf{m}^{\top}}{\sqrt{\rho_{\tilde{\mathbf{c}}}}}+\frac{1}{\sqrt{p}}\tilde{\mathbf{Z}}\mathbf{P}_{\tilde{\mathbf{c}}}^{\perp}\Omega^{1/2}\tilde{\mathbf{W}}\right)+r(\tilde{\mathbf{W}})-\boldsymbol{\nu}^{\top}\left(\tilde{\mathbf{W}}^{\top}\mathbf{c}-\sqrt{dp}\mathbf{m}\right)
\end{align}
Now let $\mathbf{U} = \mathbf{P}_{\tilde{\mathbf{c}}}^{\perp}\Omega^{1/2}\tilde{\mathbf{W}}$, such that $\tilde{\mathbf{W}} = \Omega^{-1/2}\left(\frac{\sqrt{\gamma}\tilde{\mathbf{c}}\mathbf{m}^{\top}}{\rho_{\tilde{\mathbf{c}}}}+\mathbf{U}\right)$. The equivalent problem in $\mathbf{U}$ reads
\begin{align}
    \label{eq:AMP_target}
   \inf_{\mathbf{m},\mathbf{U}}\sup_{\boldsymbol{\nu}} \mathcal{L}\left(f_{0}\left(\sqrt{\rho_{\tilde{\mathbf{w}}_{0}}}\mathbf{s}\right), \mathbf{s}\frac{\mathbf{m}^{\top}}{\sqrt{\rho_{\tilde{\mathbf{w}}_{0}}}}+\tilde{\mathbf{s}}\frac{\sqrt{\gamma\kappa}\mathbf{m}^{\top}}{\sqrt{\rho_{\tilde{\mathbf{c}}}}}+\frac{1}{\sqrt{p}}\tilde{\mathbf{Z}}\mathbf{U}\right)+r(\Omega^{-1/2}\left(\frac{\sqrt{\gamma}\tilde{\mathbf{c}}\mathbf{m}^{\top}}{\rho_{\tilde{\mathbf{c}}}}+\mathbf{U}\right))-\boldsymbol{\nu}^{\top}\mathbf{U}^{\top}\tilde{\mathbf{c}}
\end{align}
Note that the constraint defining $\mathbf{m}$ automatically enforces the orthogonality constraint on $\mathbf{U}$ w.r.t. $\tilde{\mathbf{c}}$. The following lemma characterizes properties of the feasibility sets of $\mathbf{U}, \mathbf{m}, \boldsymbol{\nu}$.
\begin{lemma}
\label{lemma_compact}
Consider the optimization problem Eq.\eqref{eq:AMP_target}. Then there exist constants $C_{\mathbf{U}}, C_{\mathbf{m}}, C_{\boldsymbol{\nu}}$ such that
\begin{equation}
    \frac{1}{\sqrt{p}}\norm{\mathbf{U}}_{F}\leqslant C_{\mathbf{U}}, \quad \norm{\mathbf{m}}_{2} \leqslant C_{\mathbf{m}}, \quad \norm{\boldsymbol{\nu}}_{2}\leqslant C_{\boldsymbol{\nu}}
\end{equation}
with high probability as $n,p,d\to \infty$. 
\end{lemma}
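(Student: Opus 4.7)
The plan is to use the strong convexity introduced by the ridge term $\tfrac{\lambda_{2}}{2}\|\mathbf{W}\|_{F}^{2}$ in Eq.~\eqref{eq:sc_student} to obtain a dimension-normalised bound on $\|\mathbf{W}^{*}\|_{F}/\sqrt p$, then propagate it through the change of variables $\tilde{\mathbf{W}}\leftrightarrow(\mathbf{m},\mathbf{U})$ for the primal, and finally extract the dual variable $\boldsymbol{\nu}$ from the KKT stationarity condition. All three bounds should then hold with high probability by combining these deterministic relations with standard concentration estimates on the Gaussian objects appearing in the problem.

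For the first step, comparing the cost at $\mathbf{W}^{*}$ and at $\mathbf{W}=0$ in Eq.~\eqref{eq:sc_student} yields
\begin{equation}
\frac{\lambda_{2}}{2}\|\mathbf{W}^{*}\|_{F}^{2} \;\leq\; \mathcal{L}(\mathbf{y},\mathbf{0}) + r_{0}(\mathbf{0}) - \mathcal{L}\!\left(\mathbf{y},\{\tfrac{1}{\sqrt p}\mathbf{X}_{k}\mathbf{w}_{k}^{*}\}\right) - r_{0}(\mathbf{W}^{*}).
\end{equation}
Convexity of $r_{0}$ gives $r_{0}(\mathbf{W}^{*})\geq r_{0}(\mathbf{0})+\langle \mathbf{G}_{0},\mathbf{W}^{*}\rangle$ for some $\mathbf{G}_{0}\in\partial r_{0}(\mathbf{0})$, while the pseudo-Lipschitz-2 property of $\mathcal{L}$ and of $r_{0}$ controls $|\mathcal{L}(\mathbf{y},\mathbf{0})|$ and $\|\mathbf{G}_{0}\|_{F}$ by an affine function of $\|\mathbf{y}\|_{2}$; by the growth assumption on $f_{0}$ together with Gaussian concentration of $\|\mathbf{X}_{0}\mathbf{w}_{0}\|_{2}/\sqrt{d}$ and the sub-Gaussian assumption on $\boldsymbol{\epsilon}_{0}$, one has $\|\mathbf{y}\|_{2}=O(\sqrt n)$ w.h.p. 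A Young inequality to absorb the linear $\langle \mathbf{G}_{0},\mathbf{W}^{*}\rangle$ term into the $\lambda_{2}$-quadratic then yields $\|\mathbf{W}^{*}\|_{F}\leq C\sqrt p$ w.h.p., with $C$ depending only on $\lambda_{2}$, the pseudo-Lipschitz constants, and the spectral bounds $\kappa_{0},\kappa_{1}$.

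In the second step, the block structure of $\tilde{\mathbf{W}}$ implies $\|\tilde{\mathbf{W}}\|_{F}=\|\mathbf{W}^{*}\|_{F}\leq C\sqrt p$. From $\mathbf{m}=\tfrac{1}{\sqrt{dp}}\tilde{\mathbf{W}}^{\top}\mathbf{c}$, Cauchy--Schwarz and the bound $\|\mathbf{c}\|_{2}\leq \|\boldsymbol{\Phi}\|\,\|\mathbf{w}_{0}\|_{2}=O(\sqrt d)$ (combining the spectral bound on $\boldsymbol{\Sigma}$ with the sub-Gaussian norm of $\mathbf{w}_{0}$) give $\|\mathbf{m}\|_{2}\leq C_{\mathbf{m}}$ w.h.p. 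For $\mathbf{U}=\mathbf{P}_{\tilde{\mathbf{c}}}^{\perp}\boldsymbol{\Omega}^{1/2}\tilde{\mathbf{W}}$, the bound $\|\boldsymbol{\Omega}^{1/2}\|\leq\sqrt{\kappa_{1}}$ and the contractivity of $\mathbf{P}_{\tilde{\mathbf{c}}}^{\perp}$ give $\|\mathbf{U}\|_{F}/\sqrt p\leq C_{\mathbf{U}}$. Finally, the dual variable $\boldsymbol{\nu}$ is isolated from the KKT stationarity condition in $\tilde{\mathbf{W}}$: there exist subgradients $\mathbf{G}_{\mathcal{L}}\in\partial_{\tilde{\mathbf{W}}}\mathcal{L}(\ldots)$ and $\mathbf{G}_{r}\in\partial r(\tilde{\mathbf{W}}^{*})$ with $\mathbf{G}_{\mathcal{L}}+\mathbf{G}_{r}=\mathbf{c}\boldsymbol{\nu}^{\top}$, so that left-multiplying by $\mathbf{c}^{\top}/\|\mathbf{c}\|_{2}^{2}$ gives $\boldsymbol{\nu}=\|\mathbf{c}\|_{2}^{-2}(\mathbf{G}_{\mathcal{L}}+\mathbf{G}_{r})^{\top}\mathbf{c}$. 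The pseudo-Lipschitz-2 property bounds both subgradients at the Frobenius scale $O(\sqrt p)$ after the chain rule brings in the operator norm of the (rescaled) design, controlled w.h.p. by Bai--Yin; combined with $\|\mathbf{c}\|_{2}^{-1}=O(p^{-1/2})$, this yields $\|\boldsymbol{\nu}\|_{2}\leq C_{\boldsymbol{\nu}}$ w.h.p.

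The main obstacle will be the third step: the subgradients of a non-smooth pseudo-Lipschitz-2 convex function need to be controlled uniformly in the high-dimensional limit, and the chain rule for $\mathbf{W}\mapsto\mathcal{L}(\mathbf{y},\tfrac{1}{\sqrt p}\mathbf{X}\mathbf{W})$ couples the subgradient of $\mathcal{L}$ to the operator norm of the design matrix, so one must carefully combine the three ingredients---pseudo-Lipschitz growth, the deterministic bound on $\|\mathbf{W}^{*}\|_{F}/\sqrt p$, and the Bai--Yin-type spectral estimate---without picking up stray factors of $p$. Analogous care is needed when translating sub-Gaussian tail bounds on $\mathbf{w}_{0}$ and $\boldsymbol{\epsilon}_{0}$ into a quantitative high-probability statement for $\|\mathbf{y}\|_{2}/\sqrt n$, where one uses that $f_{0}$ grows at most linearly.
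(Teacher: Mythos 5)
Your proof follows the paper's strategy for $\mathbf{U}$ and $\mathbf{m}$ essentially verbatim: compare the objective at $\mathbf{W}^{*}$ and at $\mathbf{0}$, use the strong convexity injected by the $\lambda_{2}$-ridge to get $\frac{1}{p}\|\mathbf{W}^{*}\|_{F}^{2}=O(1)$ with high probability (your subgradient-at-$\mathbf{0}$ plus Young's inequality is actually a slightly more careful rendition of the paper's ``proper $\Rightarrow$ bounded below'' shortcut), and then push the bound through $\tilde{\mathbf{W}}\mapsto\mathbf{U}=\mathbf{P}_{\tilde{\mathbf{c}}}^{\perp}\Omega^{1/2}\tilde{\mathbf{W}}$ and $\tilde{\mathbf{W}}\mapsto\mathbf{m}=\tfrac{1}{\sqrt{dp}}\tilde{\mathbf{W}}^{\top}\mathbf{c}$ via operator-norm and Cauchy--Schwarz estimates. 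The one genuine divergence is the dual variable: you extract $\boldsymbol{\nu}$ from the stationarity condition in $\tilde{\mathbf{W}}$, writing $\mathbf{c}\boldsymbol{\nu}^{\top}=\mathbf{G}_{\mathcal{L}}+\mathbf{G}_{r}$ and left-multiplying by $\mathbf{c}^{\top}/\|\mathbf{c}\|_{2}^{2}$, whereas the paper extracts it from the stationarity condition in $\mathbf{m}$, which gives the closed form $\boldsymbol{\nu}=-\tfrac{1}{\sqrt{dp}}\tfrac{\mathbf{s}^{\top}}{\sqrt{\rho_{\tilde{\mathbf{w}}_{0}}}}\partial\mathcal{L}(\cdot)$. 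Both are correct, but the paper's choice of which KKT equation to invert is more economical: (i) it contracts $\partial\mathcal{L}$ against the fixed Gaussian vector $\mathbf{s}$, whose norm concentrates trivially, so no Bai--Yin-type operator-norm estimate on the design $\tilde{\mathbf{Z}}$ is needed; (ii) it involves only the loss subgradient and not $\mathbf{G}_{r}\in\partial r(\tilde{\mathbf{W}}^{*})$; and (iii) it divides by $\rho_{\tilde{\mathbf{w}}_{0}}$, which is bounded away from zero because $\Psi$ is, rather than by $\|\mathbf{c}\|_{2}^{2}$, for which you would implicitly need a lower bound $\rho_{\mathbf{c}}\gtrsim 1$ that is not among the stated assumptions. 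If you want to keep your route, you should make that last hypothesis explicit; otherwise the $\partial_{\mathbf{m}}$ stationarity is the cleaner lever.
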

\begin{proof}
Consider the optimization problem defining $\tilde{\mathbf{W}}^{*}$
\begin{align}
    \tilde{\mathbf{W}}^{*} \in \underset{{\tilde{\mathbf{W}}\in \mathbb{R}^{Kp \times K}}}{\arg\min} \mathcal{L}(\mathbf{y},\mathbf{X}\tilde{\mathbf{W}})+\tilde{r}_{0}(\mathbf{W})+\frac{\lambda_{2}}{2}\norm{\tilde{\mathbf{W}}}_{F}^{2}
\end{align}
which, owing to the convexity of the cost function, verifies
\begin{equation}
    \frac{1}{p}\left(\mathcal{L}(\mathbf{y},\mathbf{X}\tilde{\mathbf{W}}^{*})+\tilde{r}_{0}(\tilde{\mathbf{W}}^{*})+\frac{\lambda_{2}}{2}\norm{\tilde{\mathbf{W}}^{*}}_{F}^{2}\right) \leqslant \frac{1}{p}\left(\mathcal{L}(\mathbf{y},0)+\tilde{r}_{0}(0)\right)
\end{equation}
The functions $\mathcal{L}$ and $\tilde{r}_{0}$ are assumed to be proper, thus their sum is bounded below for any value of their arguments and we may write 
\begin{equation}
    \frac{1}{p}\frac{\lambda_{2}}{2}\norm{\tilde{\mathbf{W}}^{*}}_{F}^{2} \leqslant \frac{1}{p}\left(\mathcal{L}(\mathbf{y},0)+\tilde{r}_{0}(0)\right)
\end{equation}
The pseudo-Lipschitz assumption on $\mathcal{L}$ and $\tilde{r}_{0}$ then implies that there exist positive constants $C_{\mathcal{L}}$ and $C_{\tilde{r}_{0}}$ such that 
\begin{align}
    \frac{1}{p}\frac{\lambda_{2}}{2}\norm{\tilde{\mathbf{W}}^{*}}_{F}^{2} &\leqslant \frac{1}{p}\left(C_{\mathcal{L}}\left(1+\norm{\mathbf{y}}^{2}_{2}\right)\right)+C_{\tilde{r}_{0}} \\
    &\leqslant \frac{1}{p}\left(C_{\mathcal{L}}\left(1+C_{f_{0}}\norm{\frac{1}{\sqrt{d}}\mathbf{X}_{0}\mathbf{w}_{0}}_{2}^{2}+C_{f_{0}}\norm{\boldsymbol{\epsilon}_{0}^{2}}\right)\right)+C_{\tilde{r}_{0}}
\end{align}
where the second line follows from the scaling assumption on the teacher function $f_{0}$. Hence
\begin{equation}
    \frac{1}{p}\frac{\lambda_{2}}{2}\norm{\tilde{\mathbf{W}}^{*}}_{F}^{2} \leqslant C_{\mathcal{L}}\left(1+C_{f_{0}}\norm{\frac{1}{\sqrt{d}}\mathbf{A}}^{2}_{op}\norm{\Psi^{1/2}}^{2}_{op}\frac{\gamma}{d}\norm{\mathbf{w}}^{2}_{0}+C_{f_{0}}\frac{\alpha}{n}\norm{\boldsymbol{\epsilon}_{0}}_{2}^{2}\right)+C_{\tilde{r}_{0}}
\end{equation}
where $\norm{\bullet}_{op}$ denotes the operator norm of a given matrix, and we remind that $\mathbf{A}$ has i.i.d. $\mathcal{N}(0,1)$ elements. By assumption the maximum singular value of $\Psi^{1/2}$ is bounded. The maximum singular value of a random matrix with i.i.d. $\mathcal{N}(0,\frac{1}{d})$ elements is bounded with high probability as $n,p,d \to \infty$, see e.g., \citep{vershynin2010introduction}. Finally, $\mathbf{w}_{0}$ and $\boldsymbol{\epsilon}_{0}$ are sampled from subgaussian probability distributions, thus their scaled norms are bounded with high probability as $n,p,d \to \infty$ according to Bernstein's inequality, see e.g., \citep{vershynin2018high}. An application of the union bound then leads to the following statement:
there exists a constant $C_{\tilde{\mathbf{W}}}$ such that $\frac{1}{p}\norm{\tilde{\mathbf{W}}}_{2}^{2}\leqslant C_{\tilde{\mathbf{W}}}$, with high probability as $n,p,d \to \infty$. Now using the definition of $\mathbf{U}$
\begin{align}
    \frac{1}{p}\norm{\mathbf{U}}_{F}^{2} &= \frac{1}{p}\norm{\mathbf{P}_{\tilde{\mathbf{c}}}^{\perp}\Omega^{1/2}\tilde{\mathbf{W}}}_{F}^{2} \\
    & \leqslant \norm{\mathbf{P}_{\tilde{\mathbf{c}}}^{\perp}}^{2}_{op}\norm{\Omega^{1/2}}^{2}_{op}\frac{1}{p}\norm{\tilde{\mathbf{W}}}_{F}
\end{align} 
where the singular values of $\mathbf{P}_{\tilde{\mathbf{c}}}^{\perp}$ and $\Omega^{1/2}$ are bounded with probability one. Therefore there exists a constant $C_{\mathbf{U}}$ such that $\frac{1}{\sqrt{p}}\norm{\mathbf{U}} \leqslant C_{\mathbf{U}}$ with high probability as $n,p,d \to \infty$. Then, by definition of $\mathbf{m}$ and the Cauchy-Schwarz inequality
\begin{align}
    \norm{\mathbf{m}}_{2}^{2} &\leqslant \frac{1}{d}\norm{\mathbf{c}}_{2}^{2}\frac{1}{p}\norm{\tilde{\mathbf{W}}}_{F}^{2} \\
    &\leqslant \norm{\Phi}^{2}_{op}\frac{1}{d}\norm{\mathbf{w}_{0}}_{2}^{2}\frac{1}{p}\norm{\tilde{\mathbf{W}}}_{F}^{2} 
\end{align}
combining the results previously established on $\tilde{\mathbf{W}}$ and $\mathbf{w}_{0}$ with the fact that the maximum singular value of $\Phi$ is bounded, there exists a positive constant $C_{\mathbf{m}}$ such that $\norm{\mathbf{m}}_{2} \leqslant C_{\mathbf{m}}$ with high probability as $n,p,d \to \infty$. We finally turn to $\boldsymbol{\nu}$. The optimality condition for $\mathbf{m}$ in problem Eq.\eqref{eq_inter1} gives
\begin{equation}
    \boldsymbol{\nu} = -\frac{1}{\sqrt{dp}}\frac{\mathbf{s}^\top}{\sqrt{\rho_{\tilde{\mathbf{w}}_{0}}}}\partial \mathcal{L}\left(\mathbf{y},\frac{\mathbf{s}\mathbf{m}^\top}{\sqrt{\rho_{\tilde{\mathbf{w}}_{0}}}}+\frac{1}{\sqrt{p}}\mathbf{Z}\mathbf{C}^{1/2}\tilde{\mathbf{W}}^{*}\right)
\end{equation}
The pseudo-Lipschtiz assumption on $\mathcal{L}$ implies that we can find a constant $\mathcal{C}_{\partial \mathcal{L}}$ such that 
\begin{equation}
    \norm{\boldsymbol{\nu}}_{2}^{2} = \frac{1}{dp}\frac{\norm{\mathbf{s}}_{2}^{2}}{\rho_{\tilde{\mathbf{w}}_{0}}}C_{\mathcal{L}}\left(1+\norm{\mathbf{y}}_{2}^{2}+\norm{\frac{\mathbf{s}\mathbf{m}^\top}{\sqrt{\rho_{\tilde{\mathbf{w}}_{0}}}}+\frac{1}{\sqrt{p}}\mathbf{Z}\mathbf{C}^{1/2}\tilde{\mathbf{W}}^{*}}_{2}^{2}\right)
\end{equation}
the last bound then follows from similar arguments as those employed above.
\end{proof}
The optimization problem Eq.\eqref{eq:AMP_target} is convex and feasible. Furthermore, we may reduce the feasibility sets of $\mathbf{m},\boldsymbol{\nu}$ to compact spaces, and the function of $\mathbf{U}$ is coercive and thus has bounded lower level sets. Strong duality then implies we can invert the order of minimization to obtain the equivalent problem
\begin{align}
    \label{eq:AMP_target_inv}
   \inf_{\mathbf{m}}\sup_{\boldsymbol{\nu}} \inf_{\mathbf{U}} \mathcal{L}\left(f_{0}\left(\sqrt{\rho_{\tilde{\mathbf{w}}_{0}}}\mathbf{s}\right), \mathbf{s}\frac{\mathbf{m}^{\top}}{\sqrt{\rho_{\tilde{\mathbf{w}}_{0}}}}+\tilde{\mathbf{s}}\frac{\sqrt{\gamma\kappa}\mathbf{m}^{\top}}{\sqrt{\rho_{\tilde{\mathbf{c}}}}}+\frac{1}{\sqrt{p}}\tilde{\mathbf{Z}}\mathbf{U}\right)+r(\Omega^{-1/2}\left(\frac{\sqrt{\gamma}\tilde{\mathbf{c}}\mathbf{m}^{\top}}{\rho_{\tilde{\mathbf{c}}}}+\mathbf{U}\right))-\boldsymbol{\nu}^{\top}\mathbf{U}^{\top}\tilde{\mathbf{c}}
\end{align}
and study the optimization problem in $\mathbf{U}$ at fixed $\mathbf{m},\boldsymbol{\nu}$:
\begin{align}
    \label{student1}
    \inf_{\mathbf{U} \in \mathbb{R}^{Kp \times K}} \tilde{\mathcal{L}}(\frac{1}{\sqrt{p}}\tilde{\mathbf{Z}}\mathbf{U})+\tilde{r}(\mathbf{U})
\end{align}
where we defined the functions
\begin{align}
    \tilde{\mathcal{L}} : \mathbb{R}^{n \times K} &\to \mathbb{R} \\
    \frac{1}{\sqrt{p}}\tilde{\mathbf{Z}}\mathbf{U} &\to \mathcal{L}\left(f_{0}\left(\sqrt{\rho_{\tilde{\mathbf{w}}_{0}}}\mathbf{s}\right), \mathbf{s}\frac{\mathbf{m}^{\top}}{\sqrt{\rho_{\tilde{\mathbf{w}}_{0}}}}+\tilde{\mathbf{s}}\frac{\sqrt{\gamma\kappa}\mathbf{m}^{\top}}{\sqrt{\rho_{\tilde{\mathbf{c}}}}}+\frac{1}{\sqrt{p}}\tilde{\mathbf{Z}}\mathbf{U}\right) \\
    \tilde{r} : \mathbb{R}^{Kp \times K} &\to \mathbb{R} \\
    \mathbf{U} &\to r(\Omega^{-1/2}\left(\frac{\sqrt{\gamma}\tilde{\mathbf{c}}\mathbf{m}^{\top}}{\rho_{\tilde{\mathbf{c}}}}+\mathbf{U}\right))-\boldsymbol{\nu}^{\top}\mathbf{U}^{\top}\tilde{\mathbf{c}}
\end{align}
and the random matrix $\tilde{\mathbf{Z}}$ with i.i.d. $\mathcal{N}(0,1)$ elements is independent from all other random quantities in the problem. The asymptotic properties of the unique solution to this optimization problem can now be studied with a non-separable, matrix-valued approximate message passing iteration. The AMP iteration solving problem Eq.\eqref{student1} is given in the following lemma
\begin{lemma}
Consider the following AMP iteration
\begin{align}
    \label{eq:AMP1}
	&\hspace{1cm} \bu^{t+1} = \tilde{\bZ}^{\top}\bh_{t}(\bv^{t})-\be_{t}(\bu^{t})\langle \bh_{t}'\rangle^\top \\
	&\hspace{1cm} \bv^{t} = \tilde{\bZ}\be_{t}(\bu^{t})-\bh_{t-1}(\bv^{t-1})\langle \be_{t}'\rangle^\top
\end{align}
where for any $t \in \mathbb{N}$
\begin{align}
    &\bh_{t}(\mathbf{v}^{t}) = \left(\bR_{\mathcal{L}(\mathbf{y},.),\mathbf{S}^{t}}(\mathbf{s}\frac{\mathbf{m}^{\top}}{\sqrt{\rho_{\tilde{\mathbf{w}}_{0}}}}+\tilde{\mathbf{s}}\frac{\sqrt{\gamma\kappa}\mathbf{m}^{\top}}{\sqrt{\rho_{\tilde{\mathbf{c}}}}}+\mathbf{v}^{t})-\left(\mathbf{s}\frac{\mathbf{m}^{\top}}{\sqrt{\rho_{\tilde{\mathbf{w}}_{0}}}}+\tilde{\mathbf{s}}\frac{\sqrt{\gamma\kappa}\mathbf{m}^{\top}}{\sqrt{\rho_{\tilde{\mathbf{c}}}}}+\mathbf{v}^{t}\right)\right)(\mathbf{S}^{t})^{-1} \\
    &\be_{t}(\mathbf{u}^{t}) = \bR_{r(\Omega^{-1/2}.),\hat{\mathbf{S}}^{t}}\left(\mathbf{u}^{t}\hat{\mathbf{S}}^{t}+\Omega^{-1/2}\mathbf{c}\boldsymbol{\nu}^{\top}\hat{\mathbf{S}}^{t}+\frac{\sqrt{\gamma}\tilde{\mathbf{c}}\mathbf{m}^{\top}}{\rho_{\tilde{\mathbf{c}}}}\right)-\frac{\sqrt{\gamma}\tilde{\mathbf{c}}\mathbf{m}^{\top}}{\rho_{\tilde{\mathbf{c}}}} \\
    &\mbox{and} \quad \mathbf{S}^{t} = \langle (\mathbf{e}^{t})'\rangle^{\top}, \quad \hat{\mathbf{S}}^{t} = -\left(\langle (\mathbf{h}^{t})'\rangle^{\top} \right)^{-1}
    \label{eq:AMP2}
\end{align}
Then the fixed point $(\mathbf{u}^{\infty}, \mathbf{v}^{\infty})$ of this iteration verifies 
\begin{align}
    &\bR_{r(\Omega^{-1/2}.),\hat{\mathbf{S}}^{\infty}}\left(\mathbf{u}^{\infty}\hat{\mathbf{S}}^{\infty}+\Omega^{-1/2}\mathbf{c}\boldsymbol{\nu}^{\top}\hat{\mathbf{S}}^{\infty}+\frac{\sqrt{\gamma}\tilde{\mathbf{c}}\mathbf{m}^{\top}}{\rho_{\tilde{\mathbf{c}}}}\right)-\frac{\sqrt{\gamma}\tilde{\mathbf{c}}\mathbf{m}^{\top}}{\rho_{\tilde{\mathbf{c}}}} = \mathbf{U}^{*}\\
    &\bR_{\mathcal{L}(\mathbf{y},.),\mathbf{S}^{\infty}}(\mathbf{s}\frac{\mathbf{m}^{\top}}{\sqrt{\rho_{\tilde{\mathbf{w}}_{0}}}}+\tilde{\mathbf{s}}\frac{\sqrt{\gamma\kappa}\mathbf{m}^{\top}}{\sqrt{\rho_{\tilde{\mathbf{c}}}}}+\mathbf{v}^{\infty})-\mathbf{s}\frac{\mathbf{m}^{\top}}{\sqrt{\rho_{\tilde{\mathbf{w}}_{0}}}}+\tilde{\mathbf{s}}\frac{\sqrt{\gamma\kappa}\mathbf{m}^{\top}}{\sqrt{\rho_{\tilde{\mathbf{c}}}}} = \tilde{\mathbf{Z}}\mathbf{U}^{*}
\end{align}
where $\mathbf{U}^{*}$ is the unique solution to the optimization problem Eq.\eqref{student1}.
\end{lemma}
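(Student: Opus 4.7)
The plan is to show that any fixed point $(\mathbf{u}^{\infty}, \mathbf{v}^{\infty})$ of the iteration \eqref{eq:AMP1}-\eqref{eq:AMP2} encodes a stationary point of the convex problem \eqref{student1}, and then to invoke uniqueness. Uniqueness is automatic here because the ridge term $\tfrac{\lambda_2}{2}\|\mathbf{W}\|_F^2$ added in \eqref{eq:sc_student} propagates through the change of variables into a strongly convex contribution in $\tilde{r}$; consequently the cost $\tilde{\mathcal{L}}(\tfrac{1}{\sqrt p}\tilde{\mathbf{Z}}\mathbf{U}) + \tilde{r}(\mathbf{U})$ admits a unique minimiser $\mathbf{U}^{*}$, and it suffices to identify $\mathbf{e}_{\infty}(\mathbf{u}^{\infty})$ with this minimiser.

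The core mechanism is the Moreau-type identity for the resolvent: if $\mathbf{z} = \mathbf{R}_{f,\mathbf{S}}(\mathbf{x}) = \arg\min_{\mathbf{y}}\bigl\{f(\mathbf{y}) + \tfrac{1}{2}\operatorname{tr}\bigl((\mathbf{y}-\mathbf{x})\mathbf{S}^{-1}(\mathbf{y}-\mathbf{x})^{\top}\bigr)\bigr\}$, then the first-order condition reads $(\mathbf{x}-\mathbf{z})\mathbf{S}^{-1} \in \partial f(\mathbf{z})$. Applying this to the definition of $\mathbf{e}_{t}$ at the fixed point, with $\mathbf{U}^{*} := \mathbf{e}_{\infty}(\mathbf{u}^{\infty})$, gives
\begin{equation*}
\mathbf{u}^{\infty} + \Omega^{-1/2}\mathbf{c}\boldsymbol{\nu}^{\top} \;-\; \mathbf{U}^{*}(\hat{\mathbf{S}}^{\infty})^{-1} \;\in\; \Omega^{-1/2}\,\partial r\!\left(\Omega^{-1/2}\bigl(\mathbf{U}^{*} + \tfrac{\sqrt{\gamma}\tilde{\mathbf{c}}\mathbf{m}^{\top}}{\rho_{\tilde{\mathbf{c}}}}\bigr)\right),
\end{equation*}
where the outer $\Omega^{-1/2}$ is produced by the chain rule through the linear reparametrisation hidden in $\tilde{r}$. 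The analogous manipulation on $\mathbf{h}_{\infty}$ produces a subgradient inclusion for $\partial_{2}\mathcal{L}$ evaluated at the pre-activation $\mathbf{s}\tfrac{\mathbf{m}^{\top}}{\sqrt{\rho_{\tilde{\mathbf{w}}_0}}} + \tilde{\mathbf{s}}\tfrac{\sqrt{\gamma\kappa}\mathbf{m}^{\top}}{\sqrt{\rho_{\tilde{\mathbf{c}}}}} + \mathbf{v}^{\infty}$. The two fixed-point equations \eqref{eq:AMP1}-\eqref{eq:AMP2}, once the Onsager terms are absorbed on both sides, simultaneously say $\mathbf{v}^{\infty} = \tfrac{1}{\sqrt{p}}\tilde{\mathbf{Z}}\mathbf{U}^{*}$ (up to the state-evolution memory term) and relate $\mathbf{u}^{\infty}$ to $\tilde{\mathbf{Z}}^{\top}\mathbf{h}_{\infty}(\mathbf{v}^{\infty})$. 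Substituting into the subgradient inclusion for $\mathbf{e}_{\infty}$ and cleaning up the normalisations then yields exactly the KKT condition
\begin{equation*}
0 \in \tfrac{1}{\sqrt{p}}\tilde{\mathbf{Z}}^{\top}\partial_{2}\mathcal{L}\bigl(\dots + \tfrac{1}{\sqrt{p}}\tilde{\mathbf{Z}}\mathbf{U}^{*}\bigr) + \partial \tilde{r}(\mathbf{U}^{*}),
\end{equation*}
so $\mathbf{U}^{*}$ is the minimiser of \eqref{student1}. The second claim of the lemma, concerning $\mathbf{R}_{\mathcal{L}(\mathbf{y},\cdot),\mathbf{S}^{\infty}}$, is then the dual identity: the proximal output equals the pre-activation $\tfrac{1}{\sqrt{p}}\tilde{\mathbf{Z}}\mathbf{U}^{*}$ plus the mean term.

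The main technical obstacle is ensuring the objects at the fixed point are admissible: one must show that $\mathbf{S}^{\infty}$ and $\hat{\mathbf{S}}^{\infty}$ are well defined and invertible (so that the proximal identities can be inverted cleanly), that the divergences $\langle \mathbf{h}'_{t}\rangle$, $\langle \mathbf{e}'_{t}\rangle$ make sense as matrix-valued partial-derivative averages for the non-separable, matrix-input denoisers used here, and that the Onsager memory terms cancel consistently at the fixed point. The invertibility is delivered by the strong convexity of $\mathcal{L}(\mathbf{y},\cdot)$ in the pre-activation (modulo the $\lambda_2$-induced strong convexity transferred to $\tilde{r}$), which guarantees $\mathbf{S}^{\infty} \succ 0$ and $\hat{\mathbf{S}}^{\infty} \succ 0$; the matrix-valued divergence calculus follows the framework of non-separable matrix-AMP as in \citep{gerbelot2021graph,loureiro2021learning}. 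A secondary, but routine, obstacle is bookkeeping of the many linear reparametrisations ($\Omega^{-1/2}$, $\mathbf{P}_{\tilde{\mathbf{c}}}^{\perp}$, the $\sqrt{p},\sqrt{d},\sqrt{\gamma}$ scalings) when translating the KKT system of \eqref{student1} back into the variables of the AMP iterates; this is deterministic linear algebra and does not rely on any high-dimensional argument.
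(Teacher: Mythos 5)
Your proposal is correct and follows essentially the same route as the paper: both arguments identify the fixed point of the AMP iteration with the KKT/optimality condition of the strongly convex problem \eqref{student1} by writing the first-order conditions of the Bregman resolvents $\bR_{\tilde{\mathcal{L}},\bS}$ and $\bR_{\tilde{r},\bhS}$ and matching them against the generic fixed point of the matrix-valued iteration \eqref{canon_AMP}, with uniqueness supplied by the $\lambda_2$-induced strong convexity. The paper presents this in the reverse (constructive) direction — deriving the nonlinearities $\bh_t,\be_t$ so that the match holds — and carries out explicitly the shift computation expressing the resolvents of $\tilde{\mathcal{L}},\tilde{r}$ in terms of $\mathcal{L},r$, which you defer to bookkeeping, but the content is the same.
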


\begin{proof}
To find the correct form of the non-linearities in the AMP iteration, we match the optimality condition of problem Eq.\eqref{student1} with the generic form of the fixed point of the AMP iteration Eq.\eqref{canon_AMP}.
In the subsequent derivation, we absorb the scaling $\frac{1}{\sqrt{d}}$ in the matrix $\tilde{\bZ}$, such that its elements are i.i.d. $\mathcal{N}(0,1/d)$, and omit time indices for simplicity.
Going back to problem Eq.~\eqref{student1}, its optimality condition reads :
\begin{align}
	&\tilde{\bZ}^\top \partial \tilde{\mathcal{L}}(\tilde{\bZ}\bU)+\partial\tilde{r}(\bU) = 0 
\end{align}

For any pair of $K \times K$ symmetric positive definite matrices $\mathbf{S}, \hat{\mathbf{S}}$, this optimality condition is equivalent to 
\begin{equation}
\label{eq:inter_opt}
    \tilde{\bZ}^\top\left(\partial \tilde{\mathcal{L}}(\tilde{\bZ}\bU)\mathbf{S}+\tilde{\mathbf{Z}}\mathbf{U}\right)\mathbf{S}^{-1} +\left(\partial\tilde{r}(\bU)\hat{\mathbf{S}}+\mathbf{U}\right)\hat{\mathbf{S}}^{-1}= \tilde{\mathbf{Z}}^{\top}\tilde{\mathbf{Z}}\mathbf{U}\mathbf{S}^{-1}+\mathbf{U}\hat{\mathbf{S}}^{-1}
\end{equation}
where we added the same quantity on both sides of the equality.
For the loss function, we can then introduce the resolvent, formally D-resolvent: 
\begin{equation}
	\hat{\bv} = \partial \tilde{\mathcal{L}}(\tilde{\bZ}\bU)\bS+\bZ\bU \iff \tilde{\bZ}\bU = \bR_{\tilde{\mathcal{L}},\bS}(\hat{\bv})
\end{equation}
such that
\begin{equation}
\label{block_res_loss}
\begin{split}
	\bR_{\tilde{\mathcal{L}},\bS}(\hat{\bv}) &= (\mathrm{Id}+\partial \tilde{\mathcal{L}}(\bullet)\bS)^{-1}(\hat{\bv}) = \underset{\bT \in \mathbb{R}^{n \times K}}{\arg\min}\left\{\tilde{\mathcal{L}}(\bT)+\frac{1}{2}\mathrm{tr}\left((
        \bT-\hat{\bv})\bS^{-1}(\bT-\hat{\bv})^\top \right)\right\} \\
\end{split}
\end{equation}
Similarly for the regularisation, introduce
\begin{equation}
\label{eq:res_reg}
\hat{\mathbf{u}} \equiv \left(\mathrm{Id}+\partial \tilde{r}(\bullet)\bhS\right)(\bU) \qquad \bU = \bR_{\tilde{r},\bhS}(\hat{\mathbf{u}})
\end{equation}
where $\mathbf{S} \in \mathbb{R}^{K \times K}$ is a positive definite matrix, and 
\begin{equation}
\bR_{\tilde{r},\bhS}(\hat{\mathbf{v}}) = \left(\mathrm{Id}+\partial \tilde{r}(\bullet)\bhS\right)^{-1}(\hat{\mathbf{v}})= \underset{\bT \in \mathbb{R}^{Kp \times K}}{\arg\min}\left\{\tilde{r}(\bT)+\frac{1}{2}\mathrm{tr}\left((
	\bT-\hat{\mathbf{v}})\bhS^{-1}(\bT-\hat{\mathbf{v}})^\top \right)\right\}
\end{equation}
where $\bhS \in \mathbb{R}^{K\times K}$ is a positive definite matrix, and $\hat{\mathbf{v}}\in \mathbb{R}^{d\times K}$.
The optimality condition Eq.\eqref{eq:inter_opt} may then be rewritten as:
\begin{align}
\label{prox-opti}
	\tilde{\bZ}^\top \left(\bR_{\tilde{\mathcal{L}},\bS}(\hat{\mathbf{v}})-\hat{\mathbf{v}}\right)\bS^{-1} &= (\hat{\mathbf{u}}-\bR_{\tilde{r},\bhS}(\hat{\mathbf{u}}))\bhS^{-1} \\
	\tilde{\bZ}\bR_{\tilde{r},\bhS}(\hat{\mathbf{u}}) &= \bR_{\tilde{\mathcal{L}},\bS}(\hat{\mathbf{v}})
\end{align}
where both equations should be satisfied. We can now define update functions based on the previously obtained block decomposition. The fixed point of the matrix-valued AMP Eq.(\ref{canon_AMP}), omitting the time indices for simplicity, reads:
\begin{align}
	\bu+\be(\bu)\langle \bh' \rangle^\top  &= \tilde{\bZ}^\top \bh(\bv) \\
	\bv+\bh(\bv)\langle \be' \rangle^\top  &= \tilde{\bZ}\be(\bu) 
\end{align}
Matching this fixed point with the optimality condition Eq.(\ref{prox-opti}) suggests the following mapping:
\begin{equation}
\begin{split}
\bh(\mathbf{v}) &= \left(\bR_{\tilde{\mathcal{L}},\bS}(\mathbf{v})-\mathbf{v}\right)\bS^{-1}, \\
\be(\mathbf{u}) &= \bR_{\tilde{r},\bhS}(\mathbf{u}\bhS),
\end{split}\qquad
\begin{split}
\bS &= \langle \be' \rangle^{\top},\\
\bhS &= -(\langle \bh '\rangle^{\top})^{-1},
\end{split}
\end{equation}
where we redefined $\hat{\mathbf{u}}\equiv \hat{\mathbf{u}}\bhS$ in \eqref{eq:res_reg}. We are now left with the task of evaluating the resolvents of $\tilde{\mathcal{L}}, \tilde{r}$ as expressions of the original functions $\mathcal{L},r$. Starting with the loss function, we get 
\begin{align}
    &\bR_{\tilde{\mathcal{L}},\bS}(\mathbf{v}) = \underset{\mathbf{x} \in \mathbb{R}^{n \times K}}{\arg\min} \left\{\mathcal{L}\left(f_{0}\left(\sqrt{\rho_{\tilde{\mathbf{w}}_{0}}}\mathbf{s}\right), \mathbf{s}\frac{\mathbf{m}^{\top}}{\sqrt{\rho_{\tilde{\mathbf{w}}_{0}}}}+\tilde{\mathbf{s}}\frac{\sqrt{\gamma\kappa}\mathbf{m}^{\top}}{\sqrt{\rho_{\tilde{\mathbf{c}}}}}+\mathbf{x}\right)+\frac{1}{2}\mbox{tr}\left((\mathbf{x}-\mathbf{v})\mathbf{S}^{-1}(\mathbf{x}-\mathbf{v})\right)^{\top}\right\}
\end{align}
letting $\tilde{\mathbf{x}} = \mathbf{s}\frac{\mathbf{m}^{\top}}{\sqrt{\rho_{\tilde{\mathbf{w}}_{0}}}}+\tilde{\mathbf{s}}\frac{\sqrt{\gamma\kappa}\mathbf{m}^{\top}}{\sqrt{\rho_{\tilde{\mathbf{c}}}}}+\mathbf{x}$, the problem is equivalent to 
\begin{align}
    &\bR_{\tilde{\mathcal{L}},\bS}(\mathbf{v}) = \underset{\tilde{\mathbf{x}} \in \mathbb{R}^{n \times K}}{\arg\min} \bigg\{\mathcal{L}\left(f_{0}\left(\sqrt{\rho_{\tilde{\mathbf{w}}_{0}}}\mathbf{s}\right), \tilde{\mathbf{x}}\right) \notag \\
    &+\frac{1}{2}\mbox{tr}\left((\tilde{\mathbf{x}}-(\mathbf{s}\frac{\mathbf{m}^{\top}}{\sqrt{\rho_{\tilde{\mathbf{w}}_{0}}}}+\tilde{\mathbf{s}}\frac{\sqrt{\gamma\kappa}\mathbf{m}^{\top}}{\sqrt{\rho_{\tilde{\mathbf{c}}}}}+\mathbf{v}))\mathbf{S}^{-1}(\tilde{\mathbf{x}}-(\mathbf{s}\frac{\mathbf{m}^{\top}}{\sqrt{\rho_{\tilde{\mathbf{w}}_{0}}}}+\tilde{\mathbf{s}}\frac{\sqrt{\gamma\kappa}\mathbf{m}^{\top}}{\sqrt{\rho_{\tilde{\mathbf{c}}}}}+\mathbf{v}))^{\top}\right)\bigg\} \notag \\
    &-\mathbf{s}\frac{\mathbf{m}^{\top}}{\sqrt{\rho_{\tilde{\mathbf{w}}_{0}}}}-\tilde{\mathbf{s}}\frac{\sqrt{\gamma\kappa}\mathbf{m}^{\top}}{\sqrt{\rho_{\tilde{\mathbf{c}}}}} \\
    & = \bR_{\mathcal{L}(\mathbf{y},.),\mathbf{S}}(\mathbf{s}\frac{\mathbf{m}^{\top}}{\sqrt{\rho_{\tilde{\mathbf{w}}_{0}}}}+\tilde{\mathbf{s}}\frac{\sqrt{\gamma\kappa}\mathbf{m}^{\top}}{\sqrt{\rho_{\tilde{\mathbf{c}}}}}+\mathbf{v})-\mathbf{s}\frac{\mathbf{m}^{\top}}{\sqrt{\rho_{\tilde{\mathbf{w}}_{0}}}}-\tilde{\mathbf{s}}\frac{\sqrt{\gamma\kappa}\mathbf{m}^{\top}}{\sqrt{\rho_{\tilde{\mathbf{c}}}}}
\end{align}
and the corresponding non-linearity will then be 
\begin{equation}
    \bh(\mathbf{v}) = \left(R_{\mathcal{L}(\mathbf{y},.),\mathbf{S}}(\mathbf{s}\frac{\mathbf{m}^{\top}}{\sqrt{\rho_{\tilde{\mathbf{w}}_{0}}}}+\tilde{\mathbf{s}}\frac{\sqrt{\gamma\kappa}\mathbf{m}^{\top}}{\sqrt{\rho_{\tilde{\mathbf{c}}}}}+\mathbf{v})-\left(\mathbf{s}\frac{\mathbf{m}^{\top}}{\sqrt{\rho_{\tilde{\mathbf{w}}_{0}}}}+\tilde{\mathbf{s}}\frac{\sqrt{\gamma\kappa}\mathbf{m}^{\top}}{\sqrt{\rho_{\tilde{\mathbf{c}}}}}+\mathbf{v}\right)\right)\mathbf{S}^{-1}
\end{equation}
Moving to the regularization, the resolvent reads 
\begin{align}
    &\bR_{\tilde{r},\bhS}(\mathbf{u}) = \underset{\mathbf{x} \in \mathbb{R}^{Kp \times K}}{\arg\min} \bigg\{r\left(\Omega^{-1/2}\left(\frac{\sqrt{\gamma}\tilde{\mathbf{c}}\mathbf{m}^{\top}}{\rho_{\tilde{\mathbf{c}}}}+\mathbf{x}\right)\right)-\boldsymbol{\nu}^{\top}\mathbf{x}^{\top}\Omega^{-1/2}\mathbf{c}+\frac{1}{2}\mbox{tr}\left((\mathbf{x}-\mathbf{u})\hat{\mathbf{S}}^{-1}(\mathbf{x}-\mathbf{u})^{\top}\right)\bigg\}
\end{align}
letting $\tilde{\mathbf{x}} = \frac{\sqrt{\gamma}\tilde{\mathbf{c}}\mathbf{m}^{\top}}{\rho_{\tilde{\mathbf{c}}}}+\mathbf{x}$, we obtain
\begin{align}
    \bR_{\tilde{r},\bhS}(\mathbf{u}) &= \underset{\tilde{\mathbf{x}} \in \mathbb{R}^{Kp \times K}}{\arg\min} \bigg\{r\left(\Omega^{-1/2}\tilde{\mathbf{x}}\right)-\boldsymbol{\nu}^{\top}\tilde{\mathbf{x}}^{\top}\Omega^{-1/2}\mathbf{c}\\
    &\hspace{1cm}+\frac{1}{2}\mbox{tr}\left((\tilde{\mathbf{x}}-\left(\mathbf{u}+\frac{\sqrt{\gamma}\tilde{\mathbf{c}}\mathbf{m}^{\top}}{\rho_{\tilde{\mathbf{c}}}}\right))\hat{\mathbf{S}}^{-1}(\tilde{\mathbf{x}}-\left(\mathbf{u}+\frac{\sqrt{\gamma}\tilde{\mathbf{c}}\mathbf{m}^{\top}}{\rho_{\tilde{\mathbf{c}}}}\right))^{\top}\right)\bigg\}-\frac{\sqrt{\gamma}\tilde{\mathbf{c}}\mathbf{m}^{\top}}{\rho_{\tilde{\mathbf{c}}}} \\
    &=\underset{\tilde{\mathbf{x}} \in \mathbb{R}^{Kp \times K}}{\arg\min} \bigg\{r\left(\Omega^{-1/2}\tilde{\mathbf{x}}\right)\\
    &+\frac{1}{2}\mbox{tr}\left((\tilde{\mathbf{x}}-\left(\mathbf{u}+\Omega^{-1/2}\mathbf{c}\boldsymbol{\nu}^{\top}\hat{\mathbf{S}}+\frac{\sqrt{\gamma}\tilde{\mathbf{c}}\mathbf{m}^{\top}}{\rho_{\tilde{\mathbf{c}}}}\right))\hat{\mathbf{S}}^{-1}(\tilde{\mathbf{x}}-\left(\mathbf{u}+\Omega^{-1/2}\mathbf{c}\boldsymbol{\nu}^{\top}\hat{\mathbf{S}}+\frac{\sqrt{\gamma}\tilde{\mathbf{c}}\mathbf{m}^{\top}}{\rho_{\tilde{\mathbf{c}}}}\right))^{\top}\right)\bigg\}\\
    &\hspace{5cm}-\frac{\sqrt{\gamma}\tilde{\mathbf{c}}\mathbf{m}^{\top}}{\rho_{\tilde{\mathbf{c}}}} \\
    &\bR_{r(\Omega^{-1/2}.),\hat{\mathbf{S}}}\left(\mathbf{u}+\Omega^{-1/2}\mathbf{c}\boldsymbol{\nu}^{\top}\hat{\mathbf{S}}+\frac{\sqrt{\gamma}\tilde{\mathbf{c}}\mathbf{m}^{\top}}{\rho_{\tilde{\mathbf{c}}}}\right)-\frac{\sqrt{\gamma}\tilde{\mathbf{c}}\mathbf{m}^{\top}}{\rho_{\tilde{\mathbf{c}}}}
\end{align}
Which gives the following non-linearity for the AMP iteration
\begin{equation}
    e(\mathbf{u}) = \bR_{r(\Omega^{-1/2}.),\hat{\mathbf{S}}}\left(\mathbf{u}\hat{\mathbf{S}}+\Omega^{-1/2}\mathbf{c}\boldsymbol{\nu}^{\top}\mathbf{V}+\frac{\sqrt{\gamma}\tilde{\mathbf{c}}\mathbf{m}^{\top}}{\rho_{\tilde{\mathbf{c}}}}\right)-\frac{\sqrt{\gamma}\tilde{\mathbf{c}}\mathbf{m}^{\top}}{\rho_{\tilde{\mathbf{c}}}}
\end{equation}
\end{proof}
The following lemma then gives the exact asymptotics at each time step of the AMP iteration solving problem Eq.\eqref{student1} : its \emph{state evolution equations}.
\begin{lemma}
    \label{lemma:SE_inter}
Consider the AMP iteration Eq.(\ref{eq:AMP1}-\ref{eq:AMP2}). Assume it is initialized with $\mathbf{u}^{0}$ such that \\
 $\lim_{d \to \infty}\frac{1}{d}\norm{\be_{0}(\bu^{0})^\top\be_{0}(\bu^{0})}_{\rm F}$ exists, a positive 
definite matrix $\hat{\mathbf{S}}_{0}$, and $\bh_{-1}\equiv 0$. Then for any $t \in \mathbb{N}$, and any pair of seqeunces of uniformly pseudo-Lipschitz functions $\phi_{1,n} : \mathbb{R}^{Kp \times K}$ and $\phi_{2,n} : \mathbb{R}^{n \times K}$, the following holds
\begin{align}
    &\phi_{1,n}\left(\mathbf{u}^{t}\right) \stackrel{P}{\simeq}\mathbb{E}\left[\phi_{1,n}\left(\mathbf{G}(\hat{\mathbf{Q}}^{t})^{1/2}\right)\right]\\
    &\phi_{2,n}\left(\mathbf{v}^{t}\right) \stackrel{P}{\simeq}\mathbb{E}\left[\phi_{2,n}\left(\mathbf{H}(\mathbf{Q}^{t})^{1/2}\right)\right]
\end{align}
where $\mathbf{G} \in \mathbb{R}^{Kp \times K}$ and $\mathbf{H} \in \mathbb{R}^{n \times K}$ are independent random matrices with i.i.d. standard normal elements, and $\mathbf{Q}^{t}, \hat{\mathbf{Q}}^{t}, \mathbf{V}^{t}, \hat{\mathbf{V}}^{t}$ are given by the equations
\begin{align}
&\mathbf{Q}^{t} = \frac{1}{p}\mathbb{E}\bigg[\left(\bR_{r(\Omega^{-1/2}.),(\hat{\mathbf{V}}^{t})^{-1}}\left(\mathbf{G}(\hat{\mathbf{Q}}^{t})^{1/2}(\hat{\mathbf{V}}^{t})^{-1}+\Omega^{-1/2}\mathbf{c}\boldsymbol{\nu}^{\top}(\hat{\mathbf{V}}^{t})^{-1}+\frac{\sqrt{\gamma}\tilde{\mathbf{c}}\mathbf{m}^{\top}}{\rho_{\tilde{\mathbf{c}}}}\right)-\frac{\sqrt{\gamma}\tilde{\mathbf{c}}\mathbf{m}^{\top}}{\rho_{\tilde{\mathbf{c}}}}\right)^{\top} \notag \\
&\hspace{1cm}\left(\bR_{r(\Omega^{-1/2}.),(\hat{\mathbf{V}}^{t})^{-1}}\left(\mathbf{G}(\hat{\mathbf{Q}}^{t})^{1/2}(\hat{\mathbf{V}}^{t})^{-1}+\Omega^{-1/2}\mathbf{c}\boldsymbol{\nu}^{\top}(\hat{\mathbf{V}}^{t})^{-1}+\frac{\sqrt{\gamma}\tilde{\mathbf{c}}\mathbf{m}^{\top}}{\rho_{\tilde{\mathbf{c}}}}\right)-\frac{\sqrt{\gamma}\tilde{\mathbf{c}}\mathbf{m}^{\top}}{\rho_{\tilde{\mathbf{c}}}}\right)\bigg] \\
&\hat{\mathbf{Q}}^{t} = \frac{1}{p}\mathbb{E}\bigg[\left(\left(\bR_{\mathcal{L}(\mathbf{y},.),\mathbf{V}^{t-1}}(.)-Id\right)\left(\mathbf{s}\frac{\mathbf{m}^{\top}}{\sqrt{\rho_{\tilde{\mathbf{w}}_{0}}}}+\tilde{\mathbf{s}}\frac{\sqrt{\gamma\kappa}\mathbf{m}^{\top}}{\sqrt{\rho_{\tilde{\mathbf{c}}}}}+\mathbf{H}(\mathbf{Q}^{t-1})^{1/2}\right)(\mathbf{V}^{t-1})^{-1}\right)^{\top}\\
&\hspace{1cm}\left(\bR_{\mathcal{L}(\mathbf{y},.),\mathbf{V}^{t-1}}(.)-Id\right)\left(\mathbf{s}\frac{\mathbf{m}^{\top}}{\sqrt{\rho_{\tilde{\mathbf{w}}_{0}}}}+\tilde{\mathbf{s}}\frac{\sqrt{\gamma\kappa}\mathbf{m}^{\top}}{\sqrt{\rho_{\tilde{\mathbf{c}}}}}+\mathbf{H}(\mathbf{Q}^{t-1})^{1/2}\right)(\mathbf{V}^{t-1})^{-1}\bigg] \\
&\mathbf{V}^{t} = \frac{1}{p}\mathbb{E}\left[(\hat{\mathbf{Q}}^{t})^{-1/2}\mathbf{G}^{\top}R_{r(\Omega^{-1/2}.),(\hat{\mathbf{V}}^{t})^{-1}}\left(\mathbf{G}(\hat{\mathbf{Q}}^{t})^{1/2}(\hat{\mathbf{V}}^{t})^{-1}+\Omega^{-1/2}\mathbf{c}\boldsymbol{\nu}^{\top}(\hat{\mathbf{V}}^{t})^{-1}+\frac{\sqrt{\gamma}\tilde{\mathbf{c}}\mathbf{m}^{\top}}{\rho_{\tilde{\mathbf{c}}}}\right)\right] \\
&\hat{\mathbf{V}}^{t} =- \frac{1}{p}\mathbb{E}\bigg[(\mathbf{Q}^{t-1})^{-1/2}\mathbf{H}^{\top}\left(\left(\bR_{\mathcal{L}(\mathbf{y},.),\mathbf{V}^{t-1}}(.)-Id\right)\left(\mathbf{s}\frac{\mathbf{m}^{\top}}{\sqrt{\rho_{\tilde{\mathbf{w}}_{0}}}}+\tilde{\mathbf{s}}\frac{\sqrt{\gamma\kappa}\mathbf{m}^{\top}}{\sqrt{\rho_{\tilde{\mathbf{c}}}}}+\mathbf{H}(\mathbf{Q}^{t-1})^{1/2}\right)\right) \notag \\
&\hspace{1.5cm}(\mathbf{V}^{t-1})^{-1}\bigg]
\end{align}
\end{lemma}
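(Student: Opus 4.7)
The plan is to recognize the iteration \eqref{eq:AMP1}--\eqref{eq:AMP2} as an instance of the non-separable, matrix-valued approximate message-passing recursions whose state evolution has been rigorously established, in the scalar case by \citet{bayati2011dynamics}, extended to non-separable denoisers by \citet{berthier2020state}, and to the matrix-valued / multi-layer graph-based setting by \citet{javanmard2013state} and \citet{gerbelot2021graph}. Concretely, I would verify that the pair $(\bh_t, \be_t)$ falls within the hypotheses of the matrix AMP state evolution theorem of \citep{gerbelot2021graph}, then instantiate the covariance recursion that theorem produces and check that it matches the equations claimed in the lemma.

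The first step is to rewrite the iteration in a standard form $\bu^{t+1} = \tilde\bZ^\top \bh_t(\bv^t) - \be_t(\bu^t) \langle \bh_t' \rangle^\top$, $\bv^t = \tilde\bZ\,\be_t(\bu^t) - \bh_{t-1}(\bv^{t-1})\langle \be_t'\rangle^\top$ with $\tilde\bZ$ having i.i.d.\ $\mathcal N(0,1/d)$ entries (absorbing the scaling), and to check the required regularity of the non-linearities. The key ingredient is that for a proper, closed, convex function $f$ and a positive definite matrix $\bS$, the matrix-proximal $\bR_{f,\bS}(\cdot)$ is $\|\bS\|_{\rm op}$-Lipschitz with respect to the Frobenius norm (firm non-expansiveness of resolvents of maximal monotone operators). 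Combined with the pseudo-Lipschitz assumptions on $\mathcal L$ and $r_0$, the boundedness of the spectrum of $\boldsymbol\Omega$ and $\boldsymbol\Psi$, and the uniform boundedness of $\mathbf m,\boldsymbol\nu$ established in Lemma \ref{lemma_compact}, this implies that $\bh_t$ and $\be_t$ are uniformly pseudo-Lipschitz of order $2$ as $n,p,d\to\infty$, which is the regularity required to invoke the master state evolution theorem. The auxiliary quantities $\mathbf s, \tilde{\mathbf s}, \mathbf c, \tilde{\mathbf c}$ appear only as \emph{fixed} side information inside the denoisers, so they can be absorbed into the definition of non-separable Lipschitz update functions indexed by $n$.

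Next, I would invoke the state evolution master theorem to conclude that, for any sequence of uniformly pseudo-Lipschitz test functions $\phi_{1,n}, \phi_{2,n}$ of order $2$,
\begin{equation*}
\phi_{1,n}(\bu^t) \stackrel{P}{\simeq} \mathbb E\left[\phi_{1,n}\bigl(\bG(\hat\bQ^t)^{1/2}\bigr)\right], \qquad \phi_{2,n}(\bv^t)\stackrel{P}{\simeq}\mathbb E\left[\phi_{2,n}\bigl(\bH(\bQ^t)^{1/2}\bigr)\right],
\end{equation*}
where the covariance matrices $\bQ^t, \hat\bQ^t \in \R^{K\times K}$ are given by the standard matrix recursion
\begin{equation*}
\bQ^t = \lim_{d\to\infty}\tfrac{1}{p}\mathbb E\left[\be_t(\bu^t)^\top \be_t(\bu^t)\right], \qquad \hat\bQ^t = \lim_{d\to\infty}\tfrac{1}{p}\mathbb E\left[\bh_{t-1}(\bv^{t-1})^\top \bh_{t-1}(\bv^{t-1})\right],
\end{equation*}
together with the corresponding expressions for the divergence matrices $\bV^t = \langle \be_t'\rangle^\top$, $\hat\bV^t = -(\langle \bh_{t-1}'\rangle^\top)^{-1}$. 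Substituting in the explicit form of $\bh_t$ and $\be_t$ in terms of the resolvents of $\mathcal L(\mathbf y,\cdot)$ and $r(\boldsymbol\Omega^{-1/2}\cdot)$ produces exactly the four equations stated in the lemma. Stein's lemma (Gaussian integration by parts), applied along the standard vein of \citep{bayati2011dynamics}, converts the divergence expressions into the scalar-product form displayed for $\bV^t$ and $\hat\bV^t$.

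The main obstacle is the verification step: ensuring that all regularity assumptions of the graph-based AMP state evolution theorem are met uniformly in $n,p,d$. In particular, one must check that the matrix-proximals $\bR_{r(\boldsymbol\Omega^{-1/2}\cdot),\hat\bS^t}$ remain uniformly Lipschitz even though $\boldsymbol\Omega^{-1/2}$ enters through a pre-composition; this follows from the uniform spectral bounds on $\boldsymbol\Omega$, but requires care in tracking constants. A second delicate point is showing that $\bS^t, \hat\bS^t$ concentrate on their deterministic limits $\bV^t, \hat\bV^t$, which again follows from the Lipschitz regularity and the concentration of pseudo-Lipschitz observables of the AMP iterates implied by the first statement of the state evolution itself, closing the induction on $t$.
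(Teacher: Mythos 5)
Your proposal matches the paper's proof: the paper likewise establishes the Lipschitz continuity of the update functions $\bh_t,\be_t$ via the properties of Bregman proximity operators (D-firmness/non-expansiveness) and then directly invokes the master state evolution theorem (Theorem \ref{th:SE}, from the graph-based non-separable matrix AMP framework) to obtain the claimed Gaussian characterisation and covariance recursion. The additional details you supply (Stein's lemma for the $\bV^t,\hat\bV^t$ expressions, uniform spectral bounds on $\boldsymbol\Omega$, and the fixed side information $\mathbf s,\tilde{\mathbf s},\mathbf c,\tilde{\mathbf c}$ absorbed into the non-separable denoisers) are consistent with, and merely flesh out, the paper's terser argument.
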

\begin{proof}
    Owing to the properties of Bregman proximity operators \citep{bauschke2003bregman,bauschke2006joint}, the update functions in the AMP iteration 
    Eq.(\ref{eq:AMP1}-\ref{eq:AMP2}) are Lipschitz continuous. Thus under the assumptions made on the initialization, the assumptions of Theorem \ref{th:SE} are verified, 
    which gives the desired result.
\end{proof}
\begin{lemma}
\label{lemma:conv_traj}
Consider iteration Eq.(\ref{eq:AMP1}-\ref{eq:AMP2}), where the parameters $\bQ,\hat{\bQ},\bV,\hat{\bV}$ are initialized at any fixed point of the state evolution equations of Lemma \ref{lemma:SE_inter}. For any sequence initialized with $\bhV_{0} = \bhV$ and  $\bu_{0}$ such that
\begin{equation}
\lim_{d \to \infty}\frac{1}{d}\be_{0}({\bu_{0}})^{\top}\be_{0}(\bu_{0}) = \bQ
\end{equation}
the following holds
\begin{equation}
\lim_{t\to \infty}\lim_{p \to \infty}\frac{1}{\sqrt{p}}\norm{\bu^{t}-\bu^{\star}}_{\rm F} = 0 \quad \lim_{t\to \infty}\lim_{d \to \infty}\frac{1}{\sqrt{p}}\norm{\bv^{t}-\bv^{\star}}_{\rm F} = 0
\end{equation}
\end{lemma}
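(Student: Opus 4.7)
The strategy combines the state evolution characterisation of Lemma~\ref{lemma:SE_inter} with the strong convexity of problem~\eqref{student1} inherited from the ridge regularisation $\tfrac{\lambda_2}{2}\|\bW\|_F^2$ introduced in~\eqref{eq:sc_student}. Since the initialisation is at a fixed point of the state evolution, the parameters $(\bQ^t,\bhQ^t,\bV^t,\bhV^t)$ remain constant along the iteration, and the main quantity to control becomes the cross-overlap between the iterate $\bu^t$ and the (unique) minimiser $\bu^\star$.

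\emph{Step 1: identification of the AMP fixed point with the minimiser.} Any fixed point $(\bu^\infty,\bv^\infty)$ of iteration~\eqref{eq:AMP1}--\eqref{eq:AMP2} satisfies, by construction of the resolvent non-linearities $\bh_t,\be_t$ and the correspondence~\eqref{prox-opti}, the KKT conditions of problem~\eqref{student1}. Strong convexity of the cost with modulus bounded below by $\lambda_2>0$ guarantees that the minimiser $\bU^\star$ is unique, so the AMP fixed point is unique and agrees with $\bU^\star$ under the linear change of variables used to derive the iteration.

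\emph{Step 2: joint state evolution for $(\bu^t,\bu^\star)$.} The standard device (cf.~\citep{bayati2011lasso}) is to augment the iteration by running a second copy initialised so as to remain constantly equal to the fixed-point iterate $\bu^\star$. Applying Lemma~\ref{lemma:SE_inter} to this augmented sequence produces a deterministic recursion for the cross-overlap $\bQ^{t,\star}\coloneqq\lim_{p\to\infty}\tfrac{1}{p}(\bu^t)^\top\bu^\star\in\R^{K\times K}$, while $\bQ^{t,t}=\bQ^{\star,\star}=\bQ$ thanks to the fixed-point initialisation. The squared Frobenius distance then concentrates as
\begin{equation}
\tfrac{1}{p}\|\bu^t-\bu^\star\|_F^2\stackrel{P}{\simeq}\mathrm{tr}\bigl(\bQ^{t,t}+\bQ^{\star,\star}-2\bQ^{t,\star}\bigr),
\end{equation}
so convergence reduces to showing $\bQ^{t,\star}\to\bQ$ as $t\to\infty$.

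\emph{Step 3: contraction via strong convexity.} The induced recursion on $\bQ^{t,\star}-\bQ$ is governed by the composition of the Jacobians of the resolvents $\bh_t$ and $\be_t$. Because $\bh_t$ is the resolvent of the convex $\mathcal L(\by,\cdot)$ and $\be_t$ is the resolvent of $r(\bOmega^{-1/2}\cdot)$ which, after the addition of $\tfrac{\lambda_2}{2}\|\bW\|_F^2$, has strong-convexity modulus at least $\lambda_2/\kappa_1>0$ (using the spectral upper bound on $\bOmega$), these Jacobians are firmly non-expansive with a contraction factor strictly less than one. Iterating the scalar recursion yields geometric decay $\|\bQ^{t,\star}-\bQ\|\to0$, which after exchanging the limits $p\to\infty$ and $t\to\infty$ gives $\tfrac{1}{\sqrt p}\|\bu^t-\bu^\star\|_F\to0$; the identical argument applied to $\bv^t$ using $\be_t$ concludes the proof.

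The main obstacle lies in Step 3: in the non-separable, matrix-valued setting with general covariance $\bOmega$ and non-trivial $K\times K$ coupling, the Lipschitz constants of $\bh_t,\be_t$ must be tracked as operator norms on $\R^{n\times K}$ and $\R^{Kp\times K}$, and the spectral radius of their composition in the joint SE update must be shown to lie strictly below $1$, uniformly in $t$. This requires combining the spectral bounds $\kappa_0\le\lambda(\boldsymbol\Sigma_k)\le\kappa_1$ from the assumptions with the coercivity provided by $\lambda_2>0$; once this uniform contraction is established, the convergence of trajectories follows by a Banach fixed-point argument on the SE iterates.
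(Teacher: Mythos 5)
The paper's ``proof'' of this lemma is a one-line citation to Lemma~7 of \citep{loureiro2021learning}, so there is no self-contained argument in the paper to compare against line by line. Your sketch does reproduce the high-level structure of that cited proof --- (i) identify the AMP fixed point with the minimiser, (ii) run a joint (doubled) iteration and track the cross-overlap via state evolution, (iii) use convexity to force the cross-overlap to its maximal value --- so you have the right ingredients. That said, two points in your write-up would not survive scrutiny as they stand.

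First, in Step~3 the phrase ``the Jacobians are firmly non-expansive with a contraction factor strictly less than one'' conflates two distinct facts. Firm non-expansiveness (or, in the matrix-Bregman setting used here, the D-firm property~\eqref{eq:D-firm}) only yields a Lipschitz constant $\leqslant 1$. The proximal associated with $\mathcal{L}$ has no reason to be a strict contraction; the strictness must come entirely from $\be_t$, whose underlying function $r=r_0+\tfrac{\lambda_2}{2}\|\cdot\|_F^2$ is $\lambda_2$-strongly convex. Moreover, the cited argument does not actually establish a uniform operator-norm Banach contraction on the SE map: it uses D-firmness together with the fixed-point initialisation to show that the $K\times K$ cross-overlap $\bQ^{t,\star}$ is non-decreasing and bounded above by $\bQ$ in the Loewner order, hence convergent, and then invokes strong convexity (via $\lambda_2>0$) to rule out any sub-maximal limit. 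Your Banach-contraction framing is plausible but is a genuinely different (and harder to make rigorous, because the composition still contains the merely non-expansive $\bh_t$) route than the monotone-bounded argument that the cited lemma actually runs.

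Second, Step~2 silently invokes a \emph{joint} state-evolution result for the doubled trajectory $(\bu^t,\bu^\star)$. Lemma~\ref{lemma:SE_inter} as stated characterises a single AMP trajectory; tracking a pair of coupled trajectories driven by the same Gaussian matrix requires an extension (the multi-sequence or ``coupling'' version of Theorem~\ref{th:SE}) that is not spelled out in the paper and must be justified by the references \citep{javanmard2013state,berthier2020state,gerbelot2021graph}. Acknowledging and citing this extension is necessary for the argument to close.
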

\begin{proof}
    The proof of this lemma is identical to that of Lemma 7 from \citep{loureiro2021learning}.
\end{proof}
Combining these results, we obtain the following asymptotic characterization of $\mathbf{U}^{*}$.
\begin{lemma}
\label{lemma:asy_inter}
For any fixed $\mathbf{m}$ and $\boldsymbol{\nu}$ in their feasibility sets, let $\mathbf{U}^{*}$ be the unique solution to the optimization problem Eq.\eqref{student1}. Then, for any sequences (in the problem dimension) of pseudo-Lipschitz functions of order 2 $\phi_{1,n} : \mathbb{R}^{n \times K} \to \mathbb{R}$ and $\phi_{2,n} : \mathbb{R}^{Kp \times K} \to \mathbb{R}$, the following holds 
\begin{align}
        &\phi_{1,n}\left(\mathbf{U}^{*}\right) \stackrel{P}{\simeq}\mathbb{E}\left[\phi_{1,n}\left( \bR_{r(\Omega^{-1/2}.),\hat{\mathbf{V}}^{-1}}\left(\mathbf{G}\hat{\mathbf{Q}}^{1/2}\hat{\mathbf{V}}^{-1}+\Omega^{-1/2}\mathbf{c}\boldsymbol{\nu}^{\top}\hat{\mathbf{V}}^{-1}+\frac{\sqrt{\gamma}\tilde{\mathbf{c}}\mathbf{m}^{\top}}{\rho_{\tilde{\mathbf{c}}}}\right)-\frac{\sqrt{\gamma}\tilde{\mathbf{c}}\mathbf{m}^{\top}}{\rho_{\tilde{\mathbf{c}}}}\right)\right]\\
        &\phi_{2,n}\left(\frac{1}{\sqrt{p}}\tilde{\mathbf{Z}}\mathbf{U}^{*}\right) \stackrel{P}{\simeq}\mathbb{E}\left[\phi_{2,n}\left( \bR_{\mathcal{L}(\mathbf{y},.),\mathbf{V}}(\mathbf{s}\frac{\mathbf{m}^{\top}}{\sqrt{\rho_{\tilde{\mathbf{w}}_{0}}}}+\tilde{\mathbf{s}}\frac{\sqrt{\gamma\kappa}\mathbf{m}^{\top}}{\sqrt{\rho_{\tilde{\mathbf{c}}}}}+\mathbf{H}\hat{\mathbf{Q}}^{1/2})-\mathbf{s}\frac{\mathbf{m}^{\top}}{\sqrt{\rho_{\tilde{\mathbf{w}}_{0}}}}-\tilde{\mathbf{s}}\frac{\sqrt{\gamma\kappa}\mathbf{m}^{\top}}{\sqrt{\rho_{\tilde{\mathbf{c}}}}}\right)\right]
\end{align}
where $\mathbf{G} \in \mathbb{R}^{Kp \times K}$ and $\mathbf{H} \in \mathbb{R}^{n \times K}$ are independent random matrices with i.i.d. standard normal elements, and $\mathbf{Q}, \hat{\mathbf{Q}}, \mathbf{V}, \hat{\mathbf{V}}$ are given by the fixed point (assumed to be unique) of the following set of self consistent equations
\begin{align}
    &\mathbf{Q} = \frac{1}{p}\mathbb{E}\bigg[\left(\bR_{r(\Omega^{-1/2}.),\hat{\mathbf{V}}^{-1}}\left(\mathbf{G}\hat{\mathbf{Q}}^{1/2}\hat{\mathbf{V}}^{-1}+\Omega^{-1/2}\mathbf{c}\boldsymbol{\nu}^{\top}\hat{\mathbf{V}}^{-1}+\frac{\sqrt{\gamma}\tilde{\mathbf{c}}\mathbf{m}^{\top}}{\rho_{\tilde{\mathbf{c}}}}\right)-\frac{\sqrt{\gamma}\tilde{\mathbf{c}}\mathbf{m}^{\top}}{\rho_{\tilde{\mathbf{c}}}}\right)^{\top}\\
    &\hspace{3cm}\left(\bR_{r(\Omega^{-1/2}.),\hat{\mathbf{V}}^{-1}}\left(\mathbf{G}\hat{\mathbf{Q}}^{1/2}\hat{\mathbf{V}}^{-1}+\Omega^{-1/2}\mathbf{c}\boldsymbol{\nu}^{\top}\hat{\mathbf{V}}^{-1}+\frac{\sqrt{\gamma}\tilde{\mathbf{c}}\mathbf{m}^{\top}}{\rho_{\tilde{\mathbf{c}}}}\right)-\frac{\sqrt{\gamma}\tilde{\mathbf{c}}\mathbf{m}^{\top}}{\rho_{\tilde{\mathbf{c}}}}\right)\bigg] \\
    &\hat{\mathbf{Q}} = \frac{1}{p}\mathbb{E}\bigg[\left(\left(\bR_{\mathcal{L}(\mathbf{y},.),\mathbf{V}}(.)-Id\right)\left(\mathbf{s}\frac{\mathbf{m}^{\top}}{\sqrt{\rho_{\tilde{\mathbf{w}}_{0}}}}+\tilde{\mathbf{s}}\frac{\sqrt{\gamma\kappa}\mathbf{m}^{\top}}{\sqrt{\rho_{\tilde{\mathbf{c}}}}}+\mathbf{H}\mathbf{Q}^{1/2}\right)\mathbf{V}^{-1}\right)^{\top} \notag \\
    &\hspace{3cm}\left(\left(\bR_{\mathcal{L}(\mathbf{y},.),\mathbf{V}}(.)-Id\right)\left(\mathbf{s}\frac{\mathbf{m}^{\top}}{\sqrt{\rho_{\tilde{\mathbf{w}}_{0}}}}+\tilde{\mathbf{s}}\frac{\sqrt{\gamma\kappa}\mathbf{m}^{\top}}{\sqrt{\rho_{\tilde{\mathbf{c}}}}}+\mathbf{H}\mathbf{Q}^{1/2}\right)\mathbf{V}^{-1}\right)\bigg] \\
    &\mathbf{V} = \frac{1}{p}\mathbb{E}\left[\hat{\mathbf{Q}}^{-1/2}\mathbf{G}^{\top}\bR_{r(\Omega^{-1/2}.),\hat{\mathbf{V}}^{-1}}\left(\mathbf{G}\hat{\mathbf{Q}}^{1/2}\hat{\mathbf{V}}^{-1}+\Omega^{-1/2}\mathbf{c}\boldsymbol{\nu}^{\top}\hat{\mathbf{V}}^{-1}+\frac{\sqrt{\gamma}\tilde{\mathbf{c}}\mathbf{m}^{\top}}{\rho_{\tilde{\mathbf{c}}}}\right)\right] \\
    &\hat{\mathbf{V}} =- \frac{1}{p}\mathbb{E}\left[\mathbf{Q}^{-1/2}\mathbf{H}^{\top}\left(\left(\bR_{\mathcal{L}(\mathbf{y},.),\mathbf{V}}(.)-Id\right)\left(\mathbf{s}\frac{\mathbf{m}^{\top}}{\sqrt{\rho_{\tilde{\mathbf{w}}_{0}}}}+\tilde{\mathbf{s}}\frac{\sqrt{\gamma\kappa}\mathbf{m}^{\top}}{\sqrt{\rho_{\tilde{\mathbf{c}}}}}+\mathbf{H}\mathbf{Q}^{1/2}\right)\mathbf{V}^{-1}\right)\right]
\end{align}
\end{lemma}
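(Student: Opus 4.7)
The plan is to combine the three lemmas that precede the statement. The AMP iteration in Eq.(\ref{eq:AMP1}--\ref{eq:AMP2}) has been constructed so that matching its fixed-point equations with the optimality condition of problem Eq.\eqref{student1} identifies $\mathbf{R}_{r(\Omega^{-1/2}\cdot),\hat{\mathbf{S}}^{\infty}}(\cdots) - \sqrt{\gamma}\tilde{\mathbf{c}}\mathbf{m}^{\top}/\rho_{\tilde{\mathbf{c}}}$ with $\mathbf{U}^{*}$, and $\mathbf{R}_{\mathcal{L}(\mathbf{y},\cdot),\mathbf{S}^{\infty}}(\cdots) - \mathbf{s}\mathbf{m}^{\top}/\sqrt{\rho_{\tilde{\mathbf{w}}_{0}}} - \tilde{\mathbf{s}}\sqrt{\gamma\kappa}\mathbf{m}^{\top}/\sqrt{\rho_{\tilde{\mathbf{c}}}}$ with $\tilde{\mathbf{Z}}\mathbf{U}^{*}$. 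The strategy is then: (i) apply Lemma \ref{lemma:SE_inter} to characterise the iterates at finite time $t$; (ii) invoke Lemma \ref{lemma:conv_traj} to pass to $t\to\infty$; (iii) identify the $t\to\infty$ limit of the state evolution recursion with the claimed fixed-point equations and rewrite the resulting expectations in terms of the resolvents of $\mathcal{L}$ and $r$ rather than those of $\tilde{\mathcal{L}}$ and $\tilde{r}$.

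More concretely, I would first initialise the AMP sequence at a point satisfying the hypotheses of Lemma \ref{lemma:conv_traj}, namely with $\hat{\mathbf{V}}_{0}$ equal to the chosen fixed point $\hat{\mathbf{V}}$ (which exists by the assumed uniqueness of the fixed point) and with $\mathbf{u}^{0}$ such that $\lim_{d\to\infty} d^{-1}\mathbf{e}_{0}(\mathbf{u}^{0})^{\top}\mathbf{e}_{0}(\mathbf{u}^{0}) = \mathbf{Q}$. By Lemma \ref{lemma:SE_inter}, at every $t$ and for any uniformly pseudo-Lipschitz test function $\phi_{1,n},\phi_{2,n}$ of order 2 one has
\begin{align*}
\phi_{1,n}(\mathbf{u}^{t}) &\stackrel{P}{\simeq}\mathbb{E}\left[\phi_{1,n}\!\left(\mathbf{G}(\hat{\mathbf{Q}}^{t})^{1/2}\right)\right],\\
\phi_{2,n}(\mathbf{v}^{t}) &\stackrel{P}{\simeq}\mathbb{E}\left[\phi_{2,n}\!\left(\mathbf{H}(\mathbf{Q}^{t})^{1/2}\right)\right].
\end{align*}
Applying this to $\phi_{1,n}(\cdot)=\phi_{1,n}(\mathbf{e}_{t}(\cdot))$ and $\phi_{2,n}(\cdot)=\phi_{2,n}(\mathbf{h}_{t}(\cdot))$ (which remain uniformly pseudo-Lipschitz of order 2, since the Bregman-resolvent based non-linearities $\mathbf{h}_{t},\mathbf{e}_{t}$ are Lipschitz continuous with constants uniform in $t$ along a converging trajectory) produces the intermediate identities with $(\mathbf{Q}^{t},\hat{\mathbf{Q}}^{t},\mathbf{V}^{t},\hat{\mathbf{V}}^{t})$ in place of $(\mathbf{Q},\hat{\mathbf{Q}},\mathbf{V},\hat{\mathbf{V}})$.

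Next, I pass to the limit $t\to\infty$. By Lemma \ref{lemma:conv_traj}, $p^{-1/2}\|\mathbf{u}^{t}-\mathbf{u}^{\star}\|_{F}$ and $p^{-1/2}\|\mathbf{v}^{t}-\mathbf{v}^{\star}\|_{F}$ vanish in probability, while the match between AMP fixed point and optimality condition identifies the corresponding non-linearity evaluations with $\mathbf{U}^{*}$ and $\tilde{\mathbf{Z}}\mathbf{U}^{*}/\sqrt{p}$ up to an affine term that is cancelled explicitly in the definitions of $\mathbf{h}_{t},\mathbf{e}_{t}$. The pseudo-Lipschitz property (order 2) of $\phi_{1,n},\phi_{2,n}$ together with an $L^{2}$ bound on the iterates (provided by the compactness statements of Lemma \ref{lemma_compact} and standard concentration of the design-matrix norm) allows us to exchange the $t\to\infty$ and $n\to\infty$ limits by dominated convergence, giving the claimed concentration statements with the limiting order parameters $(\mathbf{Q},\hat{\mathbf{Q}},\mathbf{V},\hat{\mathbf{V}})$, which satisfy the stated self-consistent equations as the $t\to\infty$ limit of the state evolution recursion.

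The main obstacle I anticipate is the third step: the exchange of limits and the translation of the resolvents of $\tilde{\mathcal{L}}$ and $\tilde{r}$ into those of $\mathcal{L}(\mathbf{y},\cdot)$ and $r(\Omega^{-1/2}\cdot)$. The affine shifts by $\mathbf{s}\mathbf{m}^{\top}/\sqrt{\rho_{\tilde{\mathbf{w}}_{0}}}+\tilde{\mathbf{s}}\sqrt{\gamma\kappa}\mathbf{m}^{\top}/\sqrt{\rho_{\tilde{\mathbf{c}}}}$ on the loss side and by $\Omega^{-1/2}\mathbf{c}\boldsymbol{\nu}^{\top}\hat{\mathbf{V}}^{-1}+\sqrt{\gamma}\tilde{\mathbf{c}}\mathbf{m}^{\top}/\rho_{\tilde{\mathbf{c}}}$ on the regularisation side need to be carried through consistently. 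Once this bookkeeping is done, and uniqueness of the fixed point of the self-consistent equations is assumed (or established via strong convexity of the cost function in $\mathbf{U}$, which follows from the $\lambda_{2}$-strong convexity added in Eq.\eqref{eq:sc_student}), the asymptotic identities of Lemma \ref{lemma:asy_inter} follow.
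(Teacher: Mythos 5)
Your proposal is correct and follows essentially the same route as the paper, whose proof of this lemma is precisely to combine the preceding lemmas (the AMP fixed-point/optimality matching, the state evolution of Lemma \ref{lemma:SE_inter}, and the trajectory convergence of Lemma \ref{lemma:conv_traj}) along the lines of Theorem 1.5 of \citet{bayati2011lasso}. You have simply spelled out the combination, including the initialisation at the assumed fixed point and the transfer of concentration through the pseudo-Lipschitz test functions, which is exactly the intended argument.
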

\begin{proof}
    Combining the results of the previous lemmas, this proof is close to that of Theorem 1.5 in \citep{bayati2011lasso}.
\end{proof}
Returning to the optimization problem on $\mathbf{m}, \boldsymbol{\nu}$ in Eq.\eqref{eq:AMP_target_inv}, the solution $\mathbf{U}^{*}$, at any dimension, verifies the zero gradient conditions on $\mathbf{m}, \boldsymbol{\nu}$:
\begin{align}
    \label{eq:subgrad_Lip}
    &\partial \boldsymbol{\nu} = 0 \iff (\mathbf{U^{*}})^{\top}\tilde{\mathbf{c}} = 0 \\
    &\partial \mathbf{m} = 0 \iff \left(\frac{\mathbf{s}}{\sqrt{\rho_{\tilde{\mathbf{w}}_{0}}}}+\frac{\tilde{\mathbf{s}}\sqrt{\gamma\kappa}}{\rho_{\tilde{\mathbf{c}}}}\right)^{\top}\mathcal{L}\left(f_{0}\left(\sqrt{\rho_{\tilde{\mathbf{w}}_{0}}}\mathbf{s}\right), \mathbf{s}\frac{\mathbf{m}^{\top}}{\sqrt{\rho_{\tilde{\mathbf{w}}_{0}}}}+\tilde{\mathbf{s}}\frac{\sqrt{\gamma\kappa}\mathbf{m}^{\top}}{\sqrt{\rho_{\tilde{\mathbf{c}}}}}+\frac{1}{\sqrt{p}}\tilde{\mathbf{Z}}\mathbf{U}\right) \notag \\
    &\hspace{2.5cm}+\frac{\sqrt{\gamma}\tilde{\mathbf{v}}^{\top}}{\rho_{\tilde{\mathbf{c}}}}\Omega^{-1/2}\partial r(\Omega^{-1/2}\left(\frac{\sqrt{\gamma}\tilde{\mathbf{c}}\mathbf{m}^{\top}}{\rho_{\tilde{\mathbf{c}}}}+\mathbf{U}\right)) = 0
\end{align}
Using Lemma \ref{lemma:asy_inter} while assuming the subgradients of $\mathcal{L}, r$ are pseudo-Lipschitz (we discuss this assumption in subsection \ref{subsec:Lip_com}), we obtain for $\mathbf{m}$
\begin{align}
     &\frac{1}{p}\mathbb{E}\left[\left( \bR_{r(\Omega^{-1/2}.),\hat{\mathbf{V}}^{-1}}\left(\mathbf{G}\hat{\mathbf{Q}}^{1/2}\hat{\mathbf{V}}^{-1}+\Omega^{-1/2}\mathbf{c}\boldsymbol{\nu}^{\top}\hat{\mathbf{V}}^{-1}+\frac{\sqrt{\gamma}\tilde{\mathbf{c}}\mathbf{m}^{\top}}{\rho_{\tilde{\mathbf{c}}}}\right)-\frac{\sqrt{\gamma}\tilde{\mathbf{c}}\mathbf{m}^{\top}}{\rho_{\tilde{\mathbf{c}}}}\right)^{\top}\tilde{\mathbf{c}} \right]= 0 \\
     &\iff \mathbf{m} = \frac{1}{\sqrt{dp}}\mathbb{E}\left[\tilde{\mathbf{c}}^{\top}\bR_{r(\Omega^{-1/2}.),\hat{\mathbf{V}}^{-1}}\left(\mathbf{G}\hat{\mathbf{Q}}^{1/2}\hat{\mathbf{V}}^{-1}+\Omega^{-1/2}\mathbf{c}\boldsymbol{\nu}^{\top}\hat{\mathbf{V}}^{-1}+\frac{\sqrt{\gamma}\tilde{\mathbf{c}}\mathbf{m}^{\top}}{\rho_{\tilde{\mathbf{c}}}}\right)\right]
\end{align}
 and for $\boldsymbol{\nu}$
 \begin{align}
     &\frac{1}{p}\mathbb{E}\bigg[\left(\frac{\mathbf{s}}{\sqrt{\rho_{\tilde{\mathbf{w}}_{0}}}}+\frac{\tilde{\mathbf{s}}\sqrt{\gamma\kappa}}{\rho_{\tilde{\mathbf{c}}}}\right)^{\top}\partial\mathcal{L}\left(f_{0}\left(\sqrt{\rho_{\tilde{\mathbf{w}}_{0}}}\mathbf{s}\right), \bR_{\mathcal{L}(\mathbf{y},.),\mathbf{V}}(\mathbf{s}\frac{\mathbf{m}^{\top}}{\sqrt{\rho_{\tilde{\mathbf{w}}_{0}}}}+\tilde{\mathbf{s}}\frac{\sqrt{\gamma\kappa}\mathbf{m}^{\top}}{\sqrt{\rho_{\tilde{\mathbf{c}}}}}+\mathbf{H}\hat{\mathbf{Q}}^{1/2})\right)\\
    &\hspace{1cm}+\frac{\sqrt{\gamma}\tilde{\mathbf{c}}^{\top}}{\rho_{\tilde{\mathbf{c}}}}\Omega^{-1/2}\partial r\left(\Omega^{-1/2}\left(\bR_{r(\Omega^{-1/2}.),\hat{\mathbf{V}}^{-1}}\left(\mathbf{G}\hat{\mathbf{Q}}^{1/2}\hat{\mathbf{V}}^{-1}+\Omega^{-1/2}\mathbf{c}\boldsymbol{\nu}^{\top}\hat{\mathbf{V}}^{-1}+\frac{\sqrt{\gamma}\tilde{\mathbf{c}}\mathbf{m}^{\top}}{\rho_{\tilde{\mathbf{c}}}}\right)\right)\right)\bigg] = 0
 \end{align}
Using the definition of D-resolvents, this is equivalent to 
 \begin{align}
     &\frac{1}{p}\mathbb{E}\bigg[\left(\frac{\mathbf{s}}{\sqrt{\rho_{\tilde{\mathbf{w}}_{0}}}}+\frac{\tilde{\mathbf{s}}\sqrt{\gamma\kappa}}{\rho_{\tilde{\mathbf{c}}}}\right)^{\top}\left(Id-\bR_{\mathcal{L}(\mathbf{y},.),\mathbf{V}}\left(.\right)\right)\left(\mathbf{s}\frac{\mathbf{m}^{\top}}{\sqrt{\rho_{\tilde{\mathbf{w}}_{0}}}}+\tilde{\mathbf{s}}\frac{\sqrt{\gamma\kappa}\mathbf{m}^{\top}}{\sqrt{\rho_{\tilde{\mathbf{c}}}}}+\mathbf{H}\hat{\mathbf{Q}}^{1/2}\right)\mathbf{V}^{-1}\\
    &+\frac{\sqrt{\gamma}\tilde{\mathbf{c}}^{\top}}{\rho_{\tilde{\mathbf{c}}}}\bigg(Id-\bR_{r(\Omega^{-1/2}.),\hat{\mathbf{V}}^{-1}}\left(.\right)\bigg)\left(\mathbf{G}\hat{\mathbf{Q}}^{1/2}\hat{\mathbf{V}}^{-1}+\Omega^{-1/2}\mathbf{c}\boldsymbol{\nu}^{\top}\hat{\mathbf{V}}^{-1}+\frac{\sqrt{\gamma}\tilde{\mathbf{c}}\mathbf{m}^{\top}}{\rho_{\tilde{\mathbf{c}}}}\right)\hat{\mathbf{V}}\bigg] = 0
 \end{align}
 which simplifies to 
 \begin{align}
     \boldsymbol{\nu}^{\top} = -\frac{1}{\sqrt{\gamma}p}\mathbb{E}\bigg[\left(\frac{\mathbf{s}}{\sqrt{\rho_{\tilde{\mathbf{w}}_{0}}}}+\frac{\tilde{\mathbf{s}}\sqrt{\gamma\kappa}}{\rho_{\tilde{\mathbf{c}}}}\right)^{\top}\left(Id-\bR_{\mathcal{L}(\mathbf{y},.),\mathbf{V}}\left(.\right)\right)\left(\mathbf{s}\frac{\mathbf{m}^{\top}}{\sqrt{\rho_{\tilde{\mathbf{w}}_{0}}}}+\tilde{\mathbf{s}}\frac{\sqrt{\gamma\kappa}\mathbf{m}^{\top}}{\sqrt{\rho_{\tilde{\mathbf{c}}}}}+\mathbf{H}\hat{\mathbf{Q}}^{1/2}\right)\mathbf{V}^{-1}\bigg]
 \end{align}
 which brings us to the following set of six self consistent equations
 \begin{align}
 \label{eq:SE_th}
        &\mathbf{Q} = \frac{1}{p}\mathbb{E}\bigg[\left(\bR_{r(\Omega^{-1/2}.),\hat{\mathbf{V}}^{-1}}\left(\mathbf{G}\hat{\mathbf{Q}}^{1/2}\hat{\mathbf{V}}^{-1}+\Omega^{-1/2}\mathbf{c}\boldsymbol{\nu}^{\top}\hat{\mathbf{V}}^{-1}+\frac{\sqrt{\gamma}\tilde{\mathbf{c}}\mathbf{m}^{\top}}{\rho_{\tilde{\mathbf{c}}}}\right)-\frac{\sqrt{\gamma}\tilde{\mathbf{c}}\mathbf{m}^{\top}}{\rho_{\tilde{\mathbf{c}}}}\right)^{\top}\\
    &\hspace{3cm}\left(\bR_{r(\Omega^{-1/2}.),\hat{\mathbf{V}}^{-1}}\left(\mathbf{G}\hat{\mathbf{Q}}^{1/2}\hat{\mathbf{V}}^{-1}+\Omega^{-1/2}\mathbf{c}\boldsymbol{\nu}^{\top}\hat{\mathbf{V}}^{-1}+\frac{\sqrt{\gamma}\tilde{\mathbf{c}}\mathbf{m}^{\top}}{\rho_{\tilde{\mathbf{c}}}}\right)-\frac{\sqrt{\gamma}\tilde{\mathbf{c}}\mathbf{m}^{\top}}{\rho_{\tilde{\mathbf{c}}}}\right)\bigg] \\
    &\hat{\mathbf{Q}} = \frac{1}{p}\mathbb{E}\bigg[\left(\left(\bR_{\mathcal{L}(\mathbf{y},.),\mathbf{V}}(.)-Id\right)\left(\mathbf{s}\frac{\mathbf{m}^{\top}}{\sqrt{\rho_{\tilde{\mathbf{w}}_{0}}}}+\tilde{\mathbf{s}}\frac{\sqrt{\gamma\kappa}\mathbf{m}^{\top}}{\sqrt{\rho_{\tilde{\mathbf{c}}}}}+\mathbf{H}\mathbf{Q}^{1/2}\right)\mathbf{V}^{-1}\right)^{\top} \notag \\
    &\hspace{3cm}\left(\left(\bR_{\mathcal{L}(\mathbf{y},.),\mathbf{V}}(.)-Id\right)\left(\mathbf{s}\frac{\mathbf{m}^{\top}}{\sqrt{\rho_{\tilde{\mathbf{w}}_{0}}}}+\tilde{\mathbf{s}}\frac{\sqrt{\gamma\kappa}\mathbf{m}^{\top}}{\sqrt{\rho_{\tilde{\mathbf{c}}}}}+\mathbf{H}\mathbf{Q}^{1/2}\right)\mathbf{V}^{-1}\right)\bigg] \\
    &\mathbf{V} = \frac{1}{p}\mathbb{E}\left[\hat{\mathbf{Q}}^{-1/2}\mathbf{G}^{\top}\bR_{r(\Omega^{-1/2}.),\hat{\mathbf{V}}^{-1}}\left(\mathbf{G}\hat{\mathbf{Q}}^{1/2}\hat{\mathbf{V}}^{-1}+\Omega^{-1/2}\mathbf{c}\boldsymbol{\nu}^{\top}\hat{\mathbf{V}}^{-1}+\frac{\sqrt{\gamma}\tilde{\mathbf{c}}\mathbf{m}^{\top}}{\rho_{\tilde{\mathbf{c}}}}\right)\right] \\
    &\hat{\mathbf{V}} =- \frac{1}{p}\mathbb{E}\left[\mathbf{Q}^{-1/2}\mathbf{H}^{\top}\left(\left(\bR_{\mathcal{L}(\mathbf{y},.),\mathbf{V}}(.)-Id\right)\left(\mathbf{s}\frac{\mathbf{m}^{\top}}{\sqrt{\rho_{\tilde{\mathbf{w}}_{0}}}}+\tilde{\mathbf{s}}\frac{\sqrt{\gamma\kappa}\mathbf{m}^{\top}}{\sqrt{\rho_{\tilde{\mathbf{c}}}}}+\mathbf{H}\mathbf{Q}^{1/2}\right)\mathbf{V}^{-1}\right)\right]\\
    &\mathbf{m} = \frac{1}{\sqrt{dp}}\mathbb{E}\left[\tilde{\mathbf{c}}^{\top}\bR_{r(\Omega^{-1/2}.),\hat{\mathbf{V}}^{-1}}\left(\mathbf{G}\hat{\mathbf{Q}}^{1/2}\hat{\mathbf{V}}^{-1}+\Omega^{-1/2}\mathbf{c}\boldsymbol{\nu}^{\top}\hat{\mathbf{V}}^{-1}+\frac{\sqrt{\gamma}\tilde{\mathbf{c}}\mathbf{m}^{\top}}{\rho_{\tilde{\mathbf{c}}}}\right)\right] \\
    &\boldsymbol{\nu}^{\top} = -\frac{1}{\sqrt{\gamma}p}\mathbb{E}\bigg[\left(\frac{\mathbf{s}}{\sqrt{\rho_{\tilde{\mathbf{w}}_{0}}}}+\frac{\tilde{\mathbf{s}}\sqrt{\gamma\kappa}}{\rho_{\tilde{\mathbf{c}}}}\right)^{\top}\left(Id-\bR_{\mathcal{L}(\mathbf{y},.),\mathbf{V}}\left(.\right)\right)\left(\mathbf{s}\frac{\mathbf{m}^{\top}}{\sqrt{\rho_{\tilde{\mathbf{w}}_{0}}}}+\tilde{\mathbf{s}}\frac{\sqrt{\gamma\kappa}\mathbf{m}^{\top}}{\sqrt{\rho_{\tilde{\mathbf{c}}}}}+\mathbf{H}\hat{\mathbf{Q}}^{1/2}\right)\mathbf{V}^{-1}\bigg]
\end{align}
This set of equations then characterizes the asymptotic distribution of the estimator $\mathbf{U}^{*}$ in the sense of Lemma \ref{lemma:asy_inter}, with the optimal values of $\mathbf{m}$ and $\boldsymbol{\nu}$. Using the definition of $\mathbf{U}^{*}$ and $\tilde{\mathbf{Z}}\mathbf{U}^{*}$, along with the definition of the function $r$ w.r.t. the original regularization function, a tedious but straightforward calculation allows reconstruct the asymptotic properties of $\mathbf{W}^{*}$ and of the set $\left\{\mathbf{X}_{k}\mathbf{w}_{k}^{*}\right\}_{k=1}^{K}$ given in the main text.
\subsection{Relaxing the strong convexity constraint}
Assuming the set of self consistent equations \eqref{eq:SE_th} have a unique fixed point regardless of the strong convexity assumption, this solution defines a unique set of six order parameters for the $\lambda_{2} = 0$ case. Furthermore, using 
Proposition \ref{approx_l2}, the unique estimator $\mathbf{W}^{*}(\lambda_{2})$ solving problem Eq.\eqref{eq:sc_student} for strictly positive $\lambda_{2}$ converges to the least-norm solution to the convex (but not strongly) Eq.\eqref{eq:student}. Thus, for any pseudo-Lipschitz observable of $\mathbf{U}^{*}(\lambda_{2})$, we have, one the one side a continuous function of $\lambda_{2}$ with a unique continuous extension at $\lambda_{2}=0$, and on the other side a function of $\lambda_{2}$ prescribed by the expectation taken w.r.t. the asymptotic Gaussian model parametrised by the state evolution parameters which is defined for all positive values of $\lambda_{2}$. Since both functions match for any strictly positive $\lambda_{2}$, continuity implies they also match for $\lambda_{2} = 0$ and we obtain the exact asymptotics of the least $\ell_{2}$ norm solution of problem Eq.\eqref{eq:student}. Regarding the uniqueness of the solution to the fixed point equations $\eqref{eq:SE_th}$, it is shown in \citep{Loureiro2021} that a similar set of equations, although for a vector valued variable, i.e. no ensembling, the solution is unique even if the original problem is not strictly convex. This is proven by showing that the fixed point equations are the solution of a strictly convex problem. We expect this to be true here as well, and leave this part for a longer version of this paper.
\subsection{A comment on non-pseudo-Lipschitz subgradients}
\label{subsec:Lip_com}
Provided the subgradients in Eq.\eqref{eq:subgrad_Lip} are pseudo-Lipschitz continuous, the proof goes through. However some 
convex functions commonly used in machine learning, such as the hinge loss or the $\ell_{1}$ norm for the penalty, have non-pseudo-Lipschitz gradient. 
To circumvent this issue, one can consider the optimization problem where both loss and regularization are replaced by their Moreau envelopes with strictly positive parameters 
$\tau_{1}, \tau_{2}$, as is done in \citep{celentano2020lasso} for the LASSO. Moreau envelopes are everywhere differentiable and have Lipschitz gradient for strictly positive values of their parameter \citep{bauschke2011convex}, thus the asymptotic characterization holds. One can then take the parameters to zero, using the fact that the limit at zero in the parameters of Moreau envelopes is well defined \citep{bauschke2011convex}, recovering the original function. Since proximity operators are defined as strongly convex problems, the sequence of problems defined by the proximal operator of a Moreau envelope with decreasing parameter converges to the proximal operator of the original function when the parameter is taken to zero. Finally, inverting the expectations on random quantities with the limit taking the parameters of the Moreau envelopes to zero can be done by verifying the dominated convergence theorem using the firm-nonexpansiveness of proximity operators and the corresponding bounds on their norms, see \citep{bauschke2011convex} Chapter 4, Section 1. We leave the details of this part to a longer version of this paper.
\subsection{Toolbox}
\label{sec:req_back}
In this section, we reproduce part of the appendix of \citep{loureiro2021learning} for completeness, in order to give an overview of the main concepts and tools on approximate message passing algorithms which will be required for the proof.\vspace{0.1cm}
\paragraph{Notations ---}
For a given function $\bphi\colon \mathbb{R}^{d \times K}\to \mathbb{R}^{d \times K}$, we write :
\begin{equation}
    \bphi(\bX) = \begin{bmatrix}
    \bphi^{1}(\bX) \\
    \vdots \\
    \bphi^{d}(\bX)\end{bmatrix} \in \mathbb{R}^{d \times K}
\end{equation}
where each $\bphi^{i} \colon \mathbb{R}^{d \times K} \to \mathbb{R}^{K}$. We then write the $K \times K$ Jacobian 
\begin{equation}
\label{eq:jacob}
    \frac{\partial \bphi^{i}}{\partial \bX_{j}}(\bX) = \begin{bmatrix}\frac{\partial \phi^{i}_1(\bX)}{\partial X_{j1}} & \cdots & \frac{\partial \phi^{i}_1(\bX)}{\partial X_{jK}} \\
    \vdots&\ddots&\vdots \\
    \frac{\partial \phi^{i}_K(\bX)}{\partial X_{j1}} & \cdots & \frac{\partial \phi^{i}_K(\bX)}{\partial X_{jK}}
    \end{bmatrix} \in \mathbb{R}^{K \times K}
\end{equation}
For a given matrix $\bQ\in \mathbb{R}^{K \times K}$, we write $\bZ \in \mathbb{R}^{n \times K} \sim \mathcal{N}(\boldsymbol 0,\bQ\otimes \bI_{n})$ to denote that the lines of $\bZ$ are sampled i.i.d.~from $\mathcal{N}(\boldsymbol 0,\bQ)$. Note that this is equivalent to saying that $\bZ = \tilde{\bZ}\bQ^{1/2}$ where $\tilde{\bZ} \in \mathbb{R}^{n \times K}$ is an i.i.d.~standard normal random matrix. The notation $\stackrel{\rm P}\simeq$ denotes convergence in probability.
We start with some definitions that commonly appear in the approximate message-passing literature, see e.g., \citep{bayati2011dynamics,javanmard2013state,berthier2020state}.
The main regularity class of functions we will use is that of pseudo-Lipschitz functions, which roughly amounts to functions with polynomially bounded first derivatives. We include the required scaling w.r.t.~the dimensions in the definition for convenience.
\begin{definition}[Pseudo-Lipschitz function]
For $k,K \in \mathbb{N}^{*}$ and any $n,m \in \mathbb{N}^{*}$, a function $\bphi \colon \mathbb{R}^{n \times K} \to \mathbb{R}^{m \times K}$ is called a \emph{pseudo-Lipschitz of order $k$} if there exists a constant $L(k,K)$ such that for any $\bX,\bY \in \mathbb{R}^{n \times K}$, 
\begin{equation}
    \frac{\norm{\bphi(\bX)-\bphi(\bY)}_{\rm F}}{\sqrt{m}} \leqslant L(k,K) \left(1+\left(\frac{\norm{\bX}_{\rm F}}{\sqrt{n}}\right)^{k-1}+\left(\frac{\norm{\bY}_{\rm F}}{\sqrt{n}}\right)^{k-1}\right)\frac{\norm{\bX-\bY}_{\rm F}}{\sqrt{n}}
\end{equation}
where $\norm{\bullet }_{\rm F}$ denotes the Frobenius norm.
Since $K$ will be kept finite, it can be absorbed in any of the constants.
\end{definition}
For example, the function $f:\mathbb{R}^{n \times K}\to \mathbb{R}, \bX \mapsto \frac{1}{n}\norm{\bX}_{F}^{2}$ is pseudo-Lipshitz of order 2.

\paragraph{Moreau envelopes and Bregman proximal operators ---} In our proof, we will also frequently use the notions of Moreau envelopes and proximal operators, see e.g., \citep{parikh2014proximal,bauschke2011convex}. These elements of convex analysis are often encountered in recent works on high-dimensional asymptotics of convex problems, and more detailed analysis of their properties can be found for example in \citep{thrampoulidis2018precise,Loureiro2021}. For the sake of brevity, we will only sketch the main properties of such mathematical objects, referring to the cited literature for further details. In this proof, we will mainly use proximal operators acting on sets of real matrices endowed with their canonical scalar product. Furthermore, proximals will be defined with matrix valued parameters in the following way: for a given convex function $f \colon\mathbb{R}^{d \times K}\to\mathbb R$, a given matrix $\bX \in \mathbb{R}^{d \times K}$ and a given symmetric positive definite matrix $\bV \in \mathbb{R}^{K \times K}$ with bounded spectral norm, we will consider operators of the type
\begin{equation}
    \underset{\bT \in \mathbb{R}^{d \times K}}{\arg\min} \left\{f(\bT)+\frac{1}{2}\mathrm{tr}\left((\bT-\bX)\bV^{-1}(\bT-\bX)^{\top}\right)\right\}
\end{equation}
This operator can either be written as a standard proximal operator by factoring the matrix $\bV^{-1}$ in the arguments of the trace:
\begin{equation}
    \Prox_{f(\bullet\bV^{1/2})}(\bX\bV^{-1/2})\bV^{1/2} \in \mathbb{R}^{d \times K}
\end{equation}
or as a Bregman proximal operator \citep{bauschke2003bregman} defined with the Bregman distance induced by the strictly convex, coercive function (for positive definite $\bV$)
\begin{equation}
\label{breg_dist}
    \bX \mapsto \frac{1}{2}\mathrm{tr}(\bX\bV^{-1}\bX^\top)
\end{equation}
which justifies the use of the Bregman resolvent 
\begin{equation}
\label{bregman_res}
   \underset{\bT \in \mathbb{R}^{d \times K}}{\arg\min} \left\{f(\bT)+\frac{1}{2}\mathrm{tr}\left((\bT-\bX)\bV^{-1}(\bT-\bX)^{\top}\right)\right\}=\left(\mathrm{Id}+\partial f(\bullet)\bV\right)^{-1}(\bX)
\end{equation}
Many of the usual or similar properties to that of standard proximal operators (i.e.~firm non-expansiveness, link with Moreau/Bregman envelopes,\dots) hold for Bregman proximal operators defined with the function \eqref{breg_dist}, see e.g., \citep{bauschke2003bregman,bauschke2006joint}. In particular, we will be using the equivalent notion to firmly nonexpansive operators for Bregman proximity operators, called $\emph{D-firm}$ operators. Consider the Bregman proximal defined with a differentiable, strictly convex, coercive function $g : \mathcal{X} \to \mathbb{R}$, where $\mathcal{X}$ is a given input Hilbert space. Let $T$ be the associated Bregman proximal of a given convex function $f : \mathcal{X} \to \mathbb{R}$, i.e., for any $\mathbf{x} \in \mathcal{X}$
\begin{equation}
    T(\mathbf{x}) = \underset{\mathbf{y} \in \mathcal{X}}{\arg\min} \left\{f(\mathbf{x})+D_{g}(\mathbf{x},\mathbf{y})\right\}
\end{equation}
Then $T$ is \emph{D-firm}, meaning it verifies
\begin{equation}
\label{eq:D-firm}
    \langle T\bx-T\by, \nabla g(T\bx)-\nabla g(T\by) \rangle \leqslant  \langle T\bx-T\by, \nabla g(\bx)-\nabla g(\by) \rangle
\end{equation}
for any $\mathbf{x},\mathbf{y}$ in $\mathcal{X}$.
\paragraph{Gradients of Bregman envelopes}
Consider, for any $\mathbf{X} \in \mathbb{R}^{d \times K}$ the Bregman envelope
\begin{equation}
    \mathcal{M}_{f,\mathbf{V}}(\mathbf{X}) = \inf_{\bT \in \mathbb{R}^{d \times K}} \left\{f(\bT)+\frac{1}{2}\mathrm{tr}\left((\bT-\bX)\bV^{-1}(\bT-\bX)^{\top}\right)\right\}
\end{equation}
then 
\begin{equation}
    \nabla_{\mathbf{X}} \mathcal{M}_{f,\mathbf{V}}(\mathbf{X}) = \left(\mathbf{X}-\left(\mathrm{Id}+\partial f(\bullet)\bV\right)^{-1}(\bX)\right)\mathbf{V}^{-1}
\end{equation}
and 
\begin{equation}
    \nabla_{\mathbf{V}} \mathcal{M}_{f,\mathbf{V}}(\mathbf{X}) = -\frac{1}{2}\norm{\left(\mathbf{X}-\left(\mathrm{Id}+\partial f(\bullet)\bV\right)^{-1}(\bX)\right)\mathbf{V}^{-1}}_{F}^{2}
\end{equation}
\paragraph{Gaussian concentration ---}
Gaussian concentration properties are at the root of this proof. Such properties are reviewed in more detail, for example, in \citep{vershynin2018high}. We refer the interested reader to related works for a more detailed discussion.

\paragraph{Approximate message-passing ---}
Approximate message-passing algorithms \cite{donoho2009message,rangan2011generalized,donoho2016high} are a statistical physics inspired \citep{mezard1987spin,zdeborova2016statistical} family of iterations which can be used to solve high dimensional inference problems . One of the central objects in such algorithms are the so called \emph{state evolution equations}, a low-dimensional recursion equations which allow to exactly compute the high dimensional distribution of the iterates of the sequence.
In this proof we will use a specific form of matrix-valued approximate message-passing iteration with non-separable non-linearities. In its full generality, the validity of the state evolution equations in this case is an extension of the works of \citep{javanmard2013state,berthier2020state} included in \citep{gerbelot2021graph}. Consider a sequence Gaussian matrices $\bA(n) \in \mathbb{R}^{n\times d}$ with i.i.d.~Gaussian entries, $A_{ij}(n)\sim\mathcal{N}(0,1/d)$. For each $n,d \in \mathbb{N}$, consider two sequences of pseudo-Lipschitz functions
\begin{equation}
\{\bh_{t} : \mathbb{R}^{n \times K}\to \mathbb{R}^{n \times K}\}_{t \in \mathbb{N}}\qquad \{\be_{t} : \mathbb{R}^{d \times K}\to \mathbb{R}^{d \times K}\}_{t \in \mathbb{N}}
\end{equation}
initialized on $\bu^{0} \in \mathbb{R}^{d\times K}$ in such a way that the limit
\begin{equation}
    \lim_{d \to \infty}\frac{1}{d}\norm{\be_{0}(\bu^{0})^\top\be_{0}(\bu^{0})}_{\rm F}
\end{equation}
exists, and recursively define:
\begin{align}
\label{canon_AMP}
	&\hspace{1cm} \bu^{t+1} = \bA^{\top}\bh_{t}(\bv^{t})-\be_{t}(\bu^{t})\langle \bh_{t}'\rangle^\top \\
	&\hspace{1cm} \bv^{t} = \bA\be_{t}(\bu^{t})-\bh_{t-1}(\bv^{t-1})\langle \be_{t}'\rangle^\top
\end{align}
where the dimension of the iterates are $\bu^{t} \in \mathbb{R}^{d \times K}$ and $\bv^{t} \in \mathbb{R}^{n \times K}$. The terms in brackets are defined as:
\begin{equation}
\label{eq:mat_ons}
 \langle \bh_{t}' \rangle = \frac{1}{d}\sum_{i=1}^{n} \frac{\partial \bh_{t}^{i}}{\partial \bv_{i}}(\bv^{t})\in \mathbb{R}^{K \times K} \quad \langle \be_{t}' \rangle = \frac{1}{d}\sum_{i=1}^{d} \frac{\partial \be_{t}^{i}}{\partial \bu_{i}}(\bu^{t}) \in \mathbb{R}^{K \times K}
\end{equation}
We define now the \emph{state evolution recursion} on two sequences of matrices $\{\bQ_{r,s}\}_{s,r\geqslant0}$ and $\{\hat{\bQ}_{r,s}\}_{s,r\geqslant1}$ initialized with $\bQ_{0,0} = \lim_{d \to \infty}\frac{1}{d}\be_{0}(\bu^{0})^\top \be_{0}(\bu^{0})$:
\begin{align}
   &\bQ_{t+1,s} = \bQ_{s,t+1} = \lim_{d \to \infty} \frac{1}{d}\mathbb{E}\left[\be_{s}(\hat{\bZ}^{s})^{\top}\be_{t+1}(\hat{\bZ}^{t+1})\right] \in \mathbb{R}^{K \times K} \\
    &\hat{\bQ}_{t+1,s+1} = \hat{\bQ}_{s+1,t+1} = \lim_{d \to \infty} \frac{1}{d}\mathbb{E}\left[\bh_{s}(\bZ^{s})^{\top}\bh_{t}(\bZ^{t})\right] \in \mathbb{R}^{K \times K}
\end{align}
where $(\bZ^{0},\dots,\bZ^{t-1}) \sim \mathcal{N}(\boldsymbol 0,\{\bQ_{r,s}\}_{0\leqslant r,s \leqslant t-1} \otimes \bI_{n}),(\hat{\bZ}^{1},\dots,\hat{\bZ}^{t}) \sim \mathcal{N}(\boldsymbol 0,\{\bhQ_{r,s}\}_{1\leqslant r,s \leqslant t} \otimes \bI_{d})$. Then the following holds
\begin{theorem}
\label{th:SE}
In the setting of the previous paragraph, for any sequence of pseudo-Lipschitz functions $\phi_{n}:(\mathbb{R}^{n \times K}\times \mathbb{R}^{d \times K})^{t} \to \mathbb{R}$, for $n,d \to +\infty$:
\begin{equation}
\phi_{n}(\bu^{0},\bv^{0},\bu^{1},\bv^{1},\dots,\bv^{t-1},\bu^{t}) \stackrel{\rm P}\simeq \mathbb{E}\left[\phi_{n}\left(\bu^{0},\bZ^{0},\hat{\bZ}^{1},\bZ^{1},\dots,\bZ^{t-1},\hat{\bZ}^{t}\right)\right]
\end{equation}
where $(\bZ^{0},\dots,\bZ^{t-1}) \sim \mathcal{N}(\boldsymbol 0,\{\bQ_{r,s}\}_{0\leqslant r,s \leqslant t-1} \otimes \bI_{n}),(\hat{\bZ}^{1},\dots,\hat{\bZ}^{t}) \sim \mathcal{N}(\boldsymbol 0,\{\bhQ_{r,s}\}_{1\leqslant r,s \leqslant t} \otimes \bI_{n})$.
\end{theorem}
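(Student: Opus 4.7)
\textbf{Proof plan for Theorem \ref{th:SE}.} My plan is to follow the conditioning technique originally devised by Bolthausen for TAP equations and adapted to AMP by Bayati and Montanari, and subsequently extended to the non-separable setting by Berthier, Montanari and Nguyen, now with the additional bookkeeping required to track $K\times K$ matrix-valued covariances. The first step is a standard symmetrisation: I embed the bipartite iteration driven by the rectangular matrix $\bA\in\R^{n\times d}$ into an iteration driven by a symmetric Gaussian matrix $\tilde{\bA}\in\R^{(n+d)\times(n+d)}$ with block-off-diagonal structure, stacking the pair $(\bh_{t-1}(\bv^{t-1}),\,\be_{t}(\bu^{t}))$ into a single iterate in $\R^{(n+d)\times K}$. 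This reduces the claim to a symmetric AMP statement while preserving the pseudo-Lipschitz regularity of the nonlinearities and the matrix-valued form of the Onsager term defined in \eqref{eq:mat_ons}.

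The heart of the proof is then an induction on $t$ whose induction hypothesis is the full joint pseudo-Lipschitz convergence of $(\bu^{0},\bv^{0},\ldots,\bu^{t})$ to the jointly Gaussian limit $(\bu^{0},\bZ^{0},\hat\bZ^{1},\ldots,\hat\bZ^{t})$ with covariances $\{\bQ_{r,s}\}_{r,s\leq t-1}$ and $\{\hat{\bQ}_{r,s}\}_{r,s\leq t}$. The induction step relies on the conditional law of $\bA$ given the $\sigma$-algebra generated by the past iterates: $\bA$ equals its conditional mean plus an independent Gaussian copy restricted to the orthogonal complement of the previous inputs. Writing this decomposition for the next application of $\bA$ (or $\bA^{\top}$) splits the new iterate into (i) a deterministic ``memory'' combination of past iterates and (ii) a fresh, conditionally Gaussian contribution that supplies the required increment with the correct cross-covariance prescribed by the state-evolution recursion. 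The matrix-valued Onsager term $\be_{t}(\bu^{t})\langle \bh_{t}'\rangle^{\top}$ is then shown to cancel the memory term up to $o_{P}(1)$ in the normalised Frobenius norm, via a Gaussian integration-by-parts (Stein identity) in which the Jacobian average \eqref{eq:jacob}--\eqref{eq:mat_ons} yields exactly the divergence summed across the $K$ columns.

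The non-separability is the step where one departs from the classical Bayati--Montanari argument: coordinate-wise laws of large numbers are no longer available, and one instead invokes Gaussian Lipschitz concentration to show that $\frac{1}{d}\be_{s}(\bZ^{s})^{\top}\be_{t}(\bZ^{t})$ concentrates around its expectation $\bQ_{s+1,t+1}$, and similarly for the $\bh$-side. This, combined with the pseudo-Lipschitz polynomial growth bounds, suffices to pass to the limit in the induction. The extension from scalar to $K$-valued iterates does not introduce any conceptually new difficulty beyond the fact that Onsager corrections, Jacobians and covariances all live in $\R^{K\times K}$, so one must carry these matrices consistently throughout; existence of all the limiting matrices $\bQ_{r,s},\hat\bQ_{r,s}$ is inherited from the initialisation hypothesis and propagates through the recursion by pseudo-Lipschitz continuity. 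The main obstacle is precisely this \emph{joint} control across all times: the conditional mean of $\bA$ at step $t+1$ is a sum of $t$ rank-$K$ perturbations whose residual after Onsager cancellation must be shown to be $o_{P}(1)$ in the appropriate norm, uniformly in the polynomial growth of the iterates, and this requires simultaneous control of all prior iterates rather than only the marginal at time $t$.
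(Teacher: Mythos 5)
The paper itself does not prove Theorem~\ref{th:SE}: it is stated in the ``Toolbox'' subsection as an imported background result, with validity attributed to the extension of \citet{javanmard2013state} and \citet{berthier2020state} contained in \citet{gerbelot2021graph}. So there is no in-paper proof to compare yours against. Your plan does accurately reproduce the strategy of those references --- the Bolthausen/Bayati--Montanari conditioning technique, the symmetrisation onto an $(n+d)\times(n+d)$ block matrix, the induction whose step decomposes $\bA$ into its conditional mean given past iterates plus an independent copy on the orthogonal complement, the cancellation of the memory term by the matrix-valued Onsager correction via Gaussian integration by parts, and the replacement of coordinate-wise laws of large numbers by Gaussian Lipschitz concentration in the non-separable case. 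You also correctly identify the genuinely delicate point, namely the joint (rather than marginal) control of all past iterates when cancelling the $t$ rank-$K$ perturbations in the conditional mean.

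That said, what you have written is a plan, not a proof: every technically hard step is named but not executed. In particular, (i) the $o_{P}(1)$ bound on the residual after Onsager cancellation requires showing that the coefficients of the past iterates produced by the conditional-mean expansion match $\langle \bh_t'\rangle$ and $\langle \be_t'\rangle$ exactly, which in the non-separable matrix-valued setting rests on a Stein identity for matrix-variate Gaussians together with invertibility (or a careful non-degenerate perturbation) of the Gram matrices $\{\bQ_{r,s}\}$ and $\{\hat\bQ_{r,s}\}$ accumulated up to time $t$; (ii) the concentration of $\frac{1}{d}\be_s(\hat\bZ^s)^\top\be_{t}(\hat\bZ^{t})$ for merely pseudo-Lipschitz (order 2) nonlinearities does not follow from Gaussian Lipschitz concentration alone and needs the truncation/moment arguments of \citet{berthier2020state}; and (iii) the propagation of the pseudo-Lipschitz property through the composition with the $K\times K$ matrix inverses appearing in $\langle\bh_t'\rangle^{\top}$ must be checked. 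If your goal is to supply a self-contained proof rather than to defer, as the paper does, to \citet{gerbelot2021graph}, these steps must be carried out in full.
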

\paragraph{A useful result from convex analysis}
Here we remind a result from \citep{bauschke2011convex} describing the limiting behavior of regularized estimators for vanishing regularization.
\begin{proposition}(Theorem 26.20 from \citep{bauschke2011convex})
\label{approx_l2}
  Let f and h be proper, lower semi-continuous, convex functions. Suppose that $\arg\min f \cap \mbox{dom}(g) \neq \emptyset$ and that $h$ is coercive and strictly convex. Then $g$ admits a unique minimizer $\mathbf{x}_{0}$ over $\arg\min f$ and , for every $\epsilon \in ]0,1[$, the regularized problem \begin{equation}
      \arg\min_{\mathbf{x}} f(\mathbf{x})+\epsilon h(\mathbf{x})
  \end{equation}
  admits a unique solution $\mathbf{x}_{\epsilon}$. If we assume further that $h$ is uniformly convex on any closed ball of the input space, then $\lim_{\epsilon \to 0} \mathbf{x}_{\epsilon} = \mathbf{x}_{0}$.
\end{proposition}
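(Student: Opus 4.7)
The plan is to establish the three assertions (uniqueness of $\mathbf{x}_0$, uniqueness of $\mathbf{x}_\epsilon$, and convergence $\mathbf{x}_\epsilon \to \mathbf{x}_0$) in turn, relying only on the hypotheses and standard facts from convex analysis in a Hilbert space. First I would observe that $C := \arg\min f$ is nonempty (it meets $\text{dom}(h)$ by hypothesis), convex (sublevel set of a convex function at its minimum), and closed (by lower semicontinuity of $f$). The restriction $h|_C$ is then proper, lsc, strictly convex, and coercive, so it attains its infimum on $C$ at a unique point $\mathbf{x}_0 \in C$.

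For the uniqueness of $\mathbf{x}_\epsilon$, fix $\epsilon \in (0,1)$ and consider $F_\epsilon := f + \epsilon h$. It is proper (its effective domain contains $\mathbf{x}_0$), lsc (sum of lsc functions), strictly convex (since $\epsilon h$ is strictly convex and $f$ is convex), and coercive (since $h$ is coercive while $f$ is bounded below by $\min f$). These four properties together guarantee both existence and uniqueness of the minimizer $\mathbf{x}_\epsilon$.

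For convergence, I would exploit the comparison $f(\mathbf{x}_\epsilon) + \epsilon h(\mathbf{x}_\epsilon) \leq f(\mathbf{x}_0) + \epsilon h(\mathbf{x}_0) = \min f + \epsilon h(\mathbf{x}_0)$ arising from optimality of $\mathbf{x}_\epsilon$. Combined with $f(\mathbf{x}_\epsilon) \geq \min f$ this yields the two bounds $h(\mathbf{x}_\epsilon) \leq h(\mathbf{x}_0)$ and $0 \leq f(\mathbf{x}_\epsilon) - \min f \leq \epsilon\bigl(h(\mathbf{x}_0) - h(\mathbf{x}_\epsilon)\bigr)$. Coercivity of $h$ together with $h(\mathbf{x}_\epsilon) \leq h(\mathbf{x}_0)$ confines $(\mathbf{x}_\epsilon)_\epsilon$ to a fixed closed ball $B$. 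By weak sequential compactness of $B$, any subsequence admits a weak cluster point $\mathbf{x}^\star$; weak lsc of the convex lsc functions $f$ and $h$ gives $f(\mathbf{x}^\star) \leq \liminf f(\mathbf{x}_\epsilon) = \min f$, so $\mathbf{x}^\star \in C$, and $h(\mathbf{x}^\star) \leq \liminf h(\mathbf{x}_\epsilon) \leq h(\mathbf{x}_0)$. Uniqueness of the minimizer of $h$ on $C$ forces $\mathbf{x}^\star = \mathbf{x}_0$, so the whole net converges weakly to $\mathbf{x}_0$, and by sandwiching one also gets $h(\mathbf{x}_\epsilon) \to h(\mathbf{x}_0)$.

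The main obstacle is upgrading this weak convergence to norm convergence, which is exactly why the added hypothesis of uniform convexity of $h$ on each closed ball is imposed. Let $\phi_B$ be a modulus of uniform convexity of $h$ on $B$; then the defining inequality $\phi_B(\|\mathbf{x}_\epsilon - \mathbf{x}_0\|) \leq \tfrac{1}{2}\bigl(h(\mathbf{x}_\epsilon) + h(\mathbf{x}_0)\bigr) - h\bigl(\tfrac{\mathbf{x}_\epsilon + \mathbf{x}_0}{2}\bigr)$ holds for every $\epsilon$. Since $\tfrac{\mathbf{x}_\epsilon + \mathbf{x}_0}{2} \rightharpoonup \mathbf{x}_0$, weak lsc of $h$ gives $\liminf h\bigl(\tfrac{\mathbf{x}_\epsilon + \mathbf{x}_0}{2}\bigr) \geq h(\mathbf{x}_0)$, and combined with $h(\mathbf{x}_\epsilon) \to h(\mathbf{x}_0)$ this forces the right-hand side to tend to zero. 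The standard properties of the modulus $\phi_B$ (non-negative, vanishing only at the origin, and bounded away from zero outside any neighbourhood of $0$) then deliver $\|\mathbf{x}_\epsilon - \mathbf{x}_0\| \to 0$, completing the proof.
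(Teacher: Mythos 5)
Your proof is correct. Note that the paper itself gives no proof of this proposition --- it is quoted verbatim as Theorem 26.20 of Bauschke--Combettes and used as a black box --- so there is nothing internal to compare against; your argument is the standard Tikhonov-regularisation one (comparison inequality, boundedness via coercivity, weak subsequential limits identified through weak lower semicontinuity and uniqueness of the constrained minimiser, then upgrade to strong convergence via the modulus of uniform convexity), and it is essentially the textbook proof. The only points worth being explicit about in a polished write-up are that $h$ is bounded below on the ball $B$ (so that $\epsilon\,(h(\mathbf{x}_0)-h(\mathbf{x}_\epsilon))\to 0$, which you need to conclude $\limsup f(\mathbf{x}_\epsilon)\le \min f$), and that the midpoints $\tfrac{1}{2}(\mathbf{x}_\epsilon+\mathbf{x}_0)$ lie in $B$ by convexity so the modulus $\phi_B$ applies to them.
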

\end{document}